\newtheorem{assumption}{Assumption}
\newtheorem{lemma}{Lemma}
\newtheorem{corollary}{Corollary}
\newtheorem{theorem}{Theorem}
\theoremstyle{definition}\newtheorem{remark}{Remark}
\DeclareMathOperator*{\argmin}{arg\,min}
\begin{document}

\title{\huge Bandit Online Learning with Unknown Delays}
\author{Bingcong Li, Tianyi Chen, and Georgios B. Giannakis
\vspace{0.2cm} \\
\textit{University of Minnesota - Twin Cities, Minneapolis, MN 55455, USA} \\
\{\texttt{lixx5599,chen3827,georgios}\}@umn.edu
}

\maketitle

\begin{abstract}
This paper deals with bandit online learning problems involving feedback of unknown delay that can emerge in multi-armed bandit (MAB) and bandit convex optimization (BCO) settings. MAB and BCO require only values of the objective function involved that become available through feedback, and are used to estimate the gradient appearing in the corresponding iterative algorithms. 
Since the challenging case of feedback with \emph{unknown} delays prevents one from constructing the sought gradient estimates, existing MAB and BCO algorithms become intractable. For such challenging setups, delayed exploration, exploitation, and exponential (DEXP3) iterations, along with delayed bandit gradient descent (DBGD) iterations are developed for MAB and BCO, respectively.   
Leveraging a unified analysis framework, it is established that the regret of DEXP3 and DBGD are ${\cal O}\big( \sqrt{K\bar{d}(T+D)} \big)$ and ${\cal O}\big( \sqrt{K(T+D)} \big)$, respectively, where $\bar{d}$ is the maximum delay and $D$ denotes the delay accumulated over $T$ slots. Numerical tests using both synthetic and real data validate the performance of DEXP3 and DBGD.
\end{abstract}
\section{Introduction}

Sequential decision making emerges in various learning and optimization tasks, such as online advertisement, online routing, and portfolio management \cite{hazan2016,bubeck2012}. Among popular methods for sequential decision making, multi-armed bandit (MAB) and bandit convex optimization (BCO) have widely-appreciated merits because with limited information they offer quantifiable performance guarantees. MAB and BCO can be viewed as a repeated game between a possibly randomized learner, and the possibly adversarial nature. 
In each round, the learner selects an action, and incurs the associated loss that is returned by the nature. In contrast to the full information setting, only the loss of the performed action rather than the gradient of the loss function (or even the loss function itself) is revealed to the learner. Popular approaches to bandit online learning estimate gradients using several point-wise evaluations of the loss function, and use them to run online gradient-type iterative solvers; see e.g., \cite{auer2002a} for MAB and \cite{flaxman2005,agarwal2010} for BCO.

Although widely applicable with solid performance guarantees, standard MAB and BCO frameworks do not account for delayed feedback that is naturally present in various applications. For example, when carrying out machine learning tasks using distributed mobile devices (a setup referred to as federated learning) \cite{mcmahan2017}, delay comes from the time it takes to compute at mobile devices and also to transmit over the wireless communication links; in online recommendations the click-through rate could be aggregated and then periodically sent back \cite{li2010}; in online routing over communication networks, the latency of each routing decision can be revealed only after the packet's arrival to its destination \cite{awerbuch2004}; and in parallel computing by data centers,  computations are carried with outdated information because agents are not synchronized~\cite{agarwal2011,duchi2013,mcmahan2014}. 

Challenges arise naturally when dealing with bandit online learning with unknown delays, simply because unknown delayes prevent existing methods in non-stochastic MAB as well as BCO to construct reliable gradient estimates. To address this limitation, our solution is a fine-grained \emph{biased} gradient estimator for MAB and a \emph{deterministic} gradient estimator for BCO, where the standard unbiased loss estimator for non-stochastic MAB and the nearly unbiased one for BCO are no longer available. The resultant algorithms, that we abbreviate as DEXP3 and DBGD, are guaranteed to achieve ${\cal O}\big( \sqrt{K\bar{d}(T+D)} \big)$ and ${\cal O}\big( \sqrt{K(T+D)} \big)$ regret, respectively, over a $T$-slot time horizon with the maximum (overall) delay being $\bar{d}$ ($D$).

\subsection{Related works}
Delayed online learning can be categorized depending on whether the feedback information is full or bandit (meaning partial). We review prior works from these two aspects. 

\textbf{Delayed online learning.} This class deals with delayed but fully revealed loss information, namely fully known gradient or loss function. 
It is proved that an ${\cal O}\big(\sqrt{T+D}\big)$ regret for a $T$-slot time horizon with overall delay $D$ can be achieved. Particularly, algorithms dealing with a fixed delay have been studied in \cite{weinberger2002}. 
To reduce the storage and computation burden of \cite{weinberger2002}, an online gradient descent type algorithm for fixed $d$-slot delay was developed in \cite{langford2009}, where the lower bound ${\cal O}\big(\sqrt{(d+1)T} \big)$ was also provided. Adversarial delay has been tackled recently in \cite{joulani2016,shamir2017,quanrud2015}. However, the algorithms as well as the corresponding analyses in \cite{joulani2016,shamir2017,quanrud2015} are not applicable to bandit online learning setting when the delays are unknown.


\textbf{Delayed bandit online learning.} Stochastic MAB with delays has been reported in \cite{chapelle2011,desautels2014,vernade2017,pike2017}; see also \cite{joulani2013} for multi-instance generalizations introduced to handle adversarial delays in stochastic and non-stochastic MAB settings. For non-stochastic MAB, EXP3-based algorithms were developed to handle fixed 
delays in \cite{cesa2016, neu2010}.
Although not requiring memories for extra instances, the delay in \cite{cesa2016} and \cite{neu2010} must be known. A recent work \cite{cesa2018} considers a more general non-stochastic MAB setting, where the feedback is anonymous.\footnote{Anonymous feedback in MAB means that the learner observes the loss without knowing which arm it is associated with.} 

\subsection{Contributions}

Our main contributions can be summarized as follows.

\textbf{c1)} Based on a novel \emph{biased} gradient estimator, a delayed exploration-exploitation exponentially (DEXP3) weighted algorithm is developed for delayed non-stochastic MAB with \textit{unknown} and \textit{adversarially} chosen delays; 

\textbf{c2)} Relying on a novel \emph{deterministic} gradient estimator, a delayed bandit gradient descent (DBGD) algorithm is developed to handle the delayed BCO setting; and,
	
\textbf{c3)} A unifying analysis framework is developed to reveal that the regret of DEXP3 and DBGD is ${\cal O}\big( \sqrt{K\bar{d}(T+D)} \big)$ and ${\cal O}\big( \sqrt{K(T+D)} \big)$, respectively, where $\bar{d}$ is the maximum delay and $D$ denotes the delay accumulated over $T$ slots. Numerical tests validate the efficiency of DEXP3 and DBGD.

\textbf{Notational conventions}. Bold lowercase letters denote column vectors; $\mathbb{E}[\,\cdot\,]$ represents expectation; $\mathds{1}(\,\cdot\,)$ denotes the indicator function; $(\,\cdot\,)^{\top}$ stands for vector transposition; and $\| \bm{x}\|$ denotes the $\ell_2$-norm of a vector $\bm{x}$.

\section{Problem statements}

Before introducing the delayed bandit learning settings, we first revisit the standard non-stochastic MAB and BCO. 

\subsection{MAB and BCO}


\begin{figure}[t]
	\centering
	\includegraphics[height=0.45\textwidth]{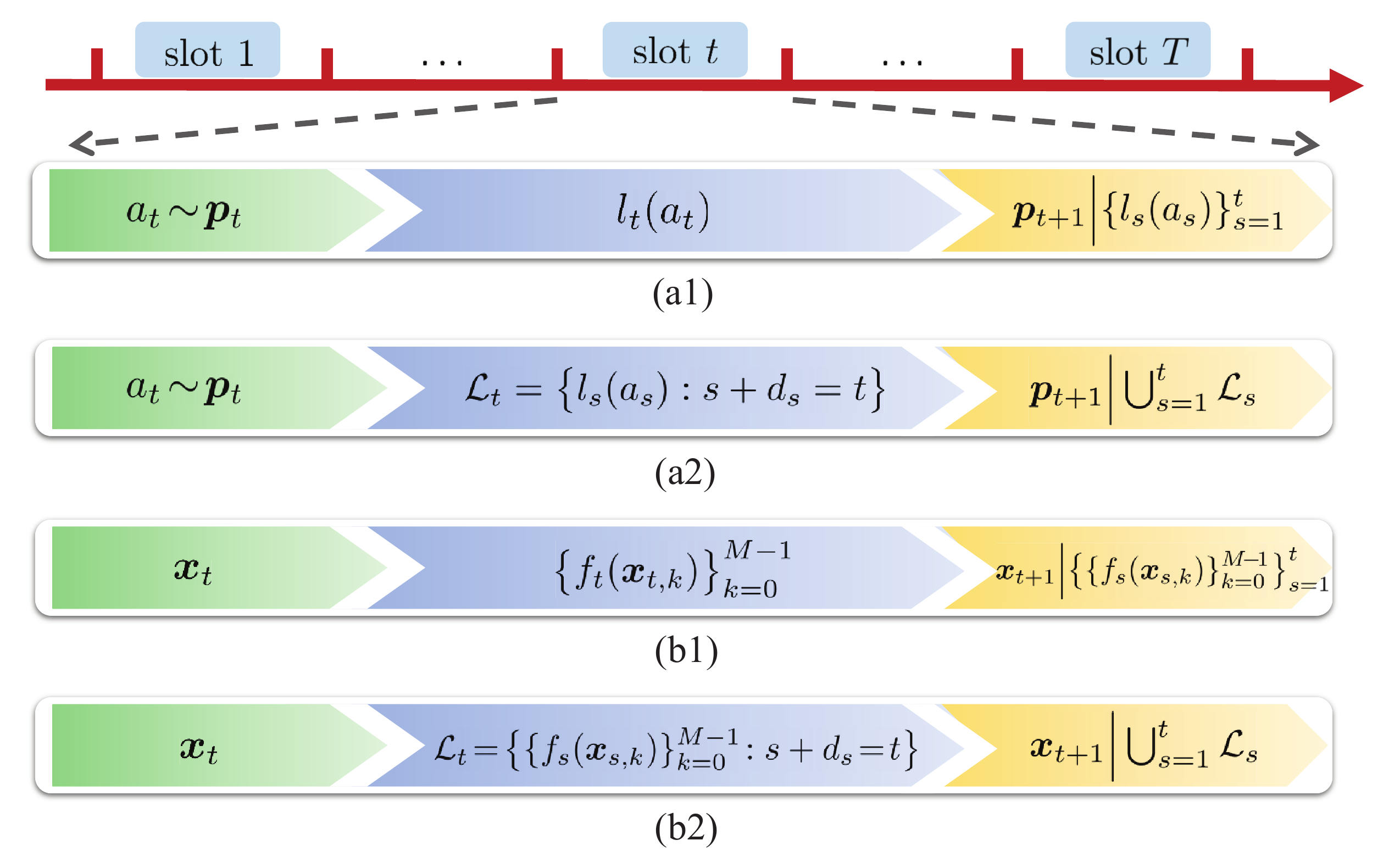}
	\vspace{-0.3cm}
	\caption{A single slot structure of: (a1) standard MAB; (a2) non-stochastic MAB with delayed feedback; (b1) standard BCO; (b2) BCO with delayed feedback, where $ \bm{p}_{t+1} | \{l_s(a_s)\}_{s=1}^t$ means that $\bm{p}_{t+1}$ updates based on previously observed losses $\{l_s(a_s)\}_{s=1}^t$.} 
	\label{fig.time_slot_sturcture}
	\vspace{-0.2cm}
\end{figure}

\textbf{Non-stochastic MAB.} 
Consider the MAB problem with a total of $K$ arms (a.k.a. actions) \cite{bubeck2012,auer2002a}. At the beginning of slot $t$, without knowing the loss of any arm, the learner selects an arm $a_t$ following a distribution $\bm{p}_t \in \Delta_K$ over all arms, where the probability simplex is defined as $\Delta_K:= \{ \bm{p} \in \Delta_K: p(k) \geq 0, \forall k;~ \sum_{k=1}^K p(k)=1\}$. The loss  $l_t (a_t)$ incurred by the selection of $a_t$ is an entry of the $K\times 1$ loss vector $\bm{l}_t$, and it is observed by the learner. Along with previously observed losses $\{ l_s(a_s) \}_{s=1}^t$, it then becomes possible to find $\bm{p}_{t+1}$; see also Fig. \ref{fig.time_slot_sturcture} (a1). 

The goal is to minimize the regret, which is the difference between the expected cumulative loss of the learner relative to the loss of the best fixed policy in hindsight, given by
\begin{equation}\label{eq.reg}
	\text{Reg}_T^{\text{MAB}} := \sum_{t=1}^T \mathbb{E} \big[\bm{p}_t^\top \bm{l}_t\big] - \sum_{t=1}^T (\bm{p}^*)^\top \bm{l}_t
\end{equation}
where the expectation is taken w.r.t. the possible randomness of $\bm{p}_t$ induced by the selection of $\{a_s\}_{s = 1}^{t-1}$, while the best fixed policy $\bm{p}^*$ is 
\begin{equation}
	\bm{p}^*: = \argmin_{\bm{p} \in \Delta_K} \sum_{t=1}^T \bm{p}^\top \bm{l}_t.
\end{equation}
Specifically, if $\bm{p}^* = [0,\ldots,1,\ldots,0]^\top$, the regret is relative to the corresponding best fixed arm in hindsight. 

\textbf{BCO.} Consider now the BCO setup with $M$-point feedback \cite{agarwal2010}. 
At the beginning of slot $t$, without knowing the loss, the learner selects $\bm{x}_t \in {\cal X}$, where ${\cal X} \subset \mathds{R}^K$ is a compact and convex set. Being able to query  the function values at another $M-1$ points $\{\bm{x}_{t,k} \in {\cal X}\}_{k = 1}^{M-1}$ and with $\bm{x}_{t,0} := \bm{x}_t$, the loss values at $\{\bm{x}_{t,k}\}_{k=0}^{M-1}$, that is, $\{f_t(\bm{x}_{t,k}) \}_{k=0}^{M-1}$, are observed instead of the function $f_t(\cdot)$. The learner leverages the revealed losses to decide the next action $\bm{x}_{t+1}$; see also Fig. \ref{fig.time_slot_sturcture} (b1). The learner's goal is to find a sequence of $\big{\{} \{\bm{x}_{t,k}\}_{k=0}^{M-1}\big{\}}_{t=1}^T$ to minimize the regret relative to the best fixed action in hindsight, meaning\footnote{This definition is slightly different with that in \cite{agarwal2010}. However, we will show in Section 5.2 that the regret bound is not affected.}
\begin{equation}\label{eq.reg2}
	\text{Reg}_T^{\text{BCO}} := \sum_{t=1}^T \mathbb{E} \big[ f_t(\bm{x}_t)\big] - \sum_{t=1}^T f_t(\bm{x}^*)
\end{equation}
where the expectation is taken over the sequence of random actions $\{\bm{x
}_{\tau}\}_{s=1}^{t-1}$. The best fixed action $\bm{x}^*$ in hindsight is 
\begin{equation}
	\bm{x}^*: = \argmin_{\bm{x} \in {\cal X}} \sum_{t=1}^T f_t(\bm{x}).
\end{equation}
In both MAB and BCO settings, an online algorithm is desirable when its regret is sublinear w.r.t. the time horizon $T$, i.e., $\text{Reg}^{\rm MAB}_T= o(T)$ and $\text{Reg}^{\rm BCO}_T= o(T)$\cite{hazan2016,bubeck2012}.


\subsection{Delayed MAB and BCO}

\textbf{MAB with unknown delays.} In \textit{delayed} MAB, the learner still chooses an arm $ a_t \sim \bm{p}_t$ at the beginning of slot $t$. However, the loss $l_t(a_t)$ is observed after $d_t$ slots, namely, at the end of slot $t+d_t$, where delay $d_t\geq 0$ can vary from slot to slot. In this paper, we assume that $\{d_t\}_{t=1}^T$ can be chosen adversarially by nature. Let $l_{s|t}(a_{s|t})$ denote the loss incurred by the selected arm $a_s$ in slot $s$ but observed at $t$, i.e., the learner receives the losses collected in ${\cal L}_t = \big\{ l_{s|t}(a_{s|t}), ~s:\! s\!+\!d_s \! = \! t\big\}$ at the end of slot $t$. Note that it is possible to have ${\cal L}_t \!=\! \emptyset$ in certain slots. And the order of feedback can be arbitrary, meaning it is possible to heve $t_1 + d_{t_1} \geq t_2 + d_{t_2}$ when $t_1\leq t_2$. In contrast to \cite{joulani2013} however, we consider the case where the delay $d_t$ is not accessible, i.e., the learner just observes the value of $ l_{s|t}(a_{s|t})$, but not $s$. The learner's goal is to select $\big\{\bm{p}_t\big\}_{t=1}^T$ ``on-the-fly'' to minimize the regret defined in \eqref{eq.reg}. Note that in the presence of delays, the information to decide $\bm{p}_t$ is even less compared with the standard MAB. Specifically, the available information for the learner to decide $\bm{p}_t$ is collected in the set ${\cal L}_{1:t-1} := \bigcup_{s=1}^{t-1} {\cal L}_s$; see also Fig. \ref{fig.time_slot_sturcture} (a2).

For simplicity, we assume that all feedback information is received at the end of slot $T$. This assumption does not lose generality since the feedback arriving at the end of slot $T$ cannot aid the arm selection, hence the final performance of the learner will not be affected.

\textbf{BCO with unknown delays.} For delayed BCO, the learner still chooses $\bm{x}_t$ to play while querying $\{\bm{x}_{t,k}\}_{k=1}^{M-1}$ at the beginning of slot $t$. However, the loss as well as the querying responses $\big{\{}f_t(\bm{x}_{t,k}) \big{\}}_{k=0}^{M-1}$ are observed at the end of slot $t+d_t$. Similarly, let $f_{s|t}(\bm{x}_{s|t})$ denote the loss incurred in slot $s$ but observed at $t$, and the feedback set at the end of slot $t$ is ${\cal L}_t = \big\{ \{f_{s|t}(\bm{x}_{s|t,k})\}_{k=0}^{M-1}, s:\! s\!+\!d_s \! = \! t \big\}$. To find a desirable $\bm{x}_t$, the learner relies on history ${\cal L}_{1:t-1} := \bigcup_{s=1}^{t-1} {\cal L}_s$, with the goal of minimizing the regret in \eqref{eq.reg2}. 

\textbf{Why existing algorithms fail with unknown delays?} 
The algorithms for standard (non-delayed) MAB, such as EXP3, cannot be applied to delayed MAB with unknown delays. Recall that in settings without delay, to deal with the partially observed $\bm{l}_t$, EXP3 relies on an importance sampling type of loss estimates given by \cite{auer2002a}
\begin{equation}\label{est.exp3}
	\hat{l}_t(k) = \frac{l_t(a_t) \mathds{1}(a_t =k)}{p_t(k)}\:, ~~~~ k=1,\ldots,K\;.
\end{equation}
The denominator as well as the indicator function in \eqref{est.exp3} ensure unbiasedness of $\hat{l}_t(k)$. Leveraging the estimated loss, the distribution $\bm{p}_{t+1}$ is obtained by
\begin{equation}
	p_{t+1}(k) = \frac{p_t(k) \exp(-\eta \hat{l}_t(k))}{\sum_{j=1}^K p_t(j) \exp(-\eta \hat{l}_t(j))}, ~\forall k
\end{equation}
where $\eta$ is the learning rate. Consider now that the loss $l_{s|t}(a_{s|t})$ with delay $d_s$ is observed at $t=s+d_s$. To recover the unbiased estimator $\hat{l}_{s|t}(k)$ in \eqref{est.exp3}, $p_s(k)$ must be known. However, since $d_s$ is not revealed, even if the learner can store the previous probability distributions, it is not clear how to attain the loss estimator.

Knowing the delay is also instrumental when it comes to the gradient estimator in BCO as well. For non-delayed single-point feedback BCO \cite{flaxman2005}, e.g., $M\!=\!1$, since only one value of the loss instead of the full gradient is observed per slot, the idea is to draw $\bm{u}_t$ uniformly from the surface of a unit ball in $\mathbb{R}^K$, and form the gradient estimate as 
\begin{equation}\label{eq.FKM_grad}
	\bm{g}_t = \frac{K}{\delta}f_t(\bm{x}_t + \delta \bm{u}_t)\bm{u}_t
\end{equation}
where $\delta$ is a small constant. The next action is obtained using a standard online (projected) gradient descent iteration leveraging the estimated gradient, that is
\begin{equation}
	\bm{x}_{t+1} = \Pi_{{\cal X}_\delta} [\bm{x}_t - \eta  \bm{g}_t]
\end{equation}
where ${\cal X}_\delta$ is the shrunk feasibility set to ensure $\bm{x}_{t}+ \bm{u}_{t}$ is feasible. While $\bm{g}_t$ serves as a nearly unbiased estimator of $\nabla f_t (\bm{x}_t)$, the unknown delay brings mismatch between the feedback $f_{s|t}( \bm{x}_{s|t} + \delta \bm{u}_{s|t})$ and $\bm{g}_{s|t}$. 
Specifically, given the feedback $f_{s|t}(\bm{x}_{s|t} + \delta \bm{u}_{s|t})$, since $d_s$ is unknown, the learner does not know $\bm{u}_{s|t}$ to obtain $\bm{g}_{s|t}$ in \eqref{eq.FKM_grad}. Similar arguments also hold for BCO with multi-point feedback.

Therefore, performing delayed bandit learning with unknown delays is challenging, and has not been explored.


\section{DEXP3 for Delayed MAB}\label{sec.DEP3}
We start with the non-stochastic MAB setup that is randomized in nature because an arm $a_t$ is chosen randomly per slot according to a $K\times 1$ probability mass vector $\bm{p}_t$.  In this section, we show that for the MAB problem, so long as the (unknown) delay is bounded, based only on a single-point feedback, the randomized algorithm that we term Delayed EXP3 (DEXP3) can cope with unknown delays in MAB through a \textit{biased} loss estimator, and is guaranteed to attain a desirable regret.

Recall that the feedback at slot $t$ includes losses incurred at slots $s_n, n = 1,2, \ldots, |{\cal L}_t|$, where ${\cal L}_t := \big\{ l_{s_n|t}(a_{s_n|t}): \forall s_n \!=\! t\!-\!d_{s_n} \big\}$. 
Once ${\cal L}_t$ is revealed, the learner estimates $\bm{l}_{s_n|t}$ by scaling the observed loss according to $\bm{p}_t$ at the current slot. 
For each $l_{s_n|t}(a_{s_n|t}) \in {\cal L}_t$, the estimator of the loss vector $\bm{l}_{s_n|t}$ is 
\begin{equation}\label{eq.estloss}
	\hat{l}_{s_n|t}(k) = \frac{l_{s_n|t}(k) \mathds{1} \big(a_{s_n|t} = k \big)}{p_{t}(k)},~ \forall k. 
\end{equation}

It is worth mentioning that the index $s_n$ in \eqref{eq.estloss} is only used for analysis while during the implementation, there is no need to know $s_n$.
In contrast to EXP3 \cite{auer2002a} and its variant for delayed MAB with known delays \cite{joulani2013,cesa2016}, our estimator for $l_{s_n|t}(k)$ in \eqref{eq.estloss} turns out to be \textit{biased} since $a_{s_n|t}$ is chosen according to $\bm{p}_{s_n}$ and not according to $\bm{p}_t$, that is
\begin{equation}\label{eq.expectedloss}
	\mathbb{E}_{a_{s_n|t}} \big[ \hat{{l}}_{s_n|t} (k) \big] = \frac{l_{s_n|t}(k) p_{s_n}(k)}{p_{t}(k)}\neq l_{s_n|t}(k).
\end{equation}

\begin{algorithm}[t]
	\caption{\textbf{DEXP3}}\label{algo2}
	\begin{algorithmic}[1]
		\State \textbf{Initialize:}
		$\bm{p}_1 (k)= 1/K, \forall k$.
		\For {$t=1,2\dots,T$}
		\State Select an arm $a_t \sim \bm{p}_t$.
		\State Observe feedback collected in set ${\cal L}_t $. 
		\For {$n=1, 2, \ldots,|{\cal L}_t|$}
			\State  Estimate $\hat{\bm{l}}_{s_n|t}$ via \eqref{eq.estloss} if $l_{s_n|t}(a_{s_n|t}) \in {\cal L}_t$. 
			\State Update $\bm{p}_t^n $ via \eqref{eq.tildew1} - \eqref{eq.tildep1}.
			\EndFor
		\State Obtain $\bm{p}_{t+1}$ via \eqref{eq.p_t+1}. 
		\EndFor
	\end{algorithmic}
\end{algorithm}

\begin{figure}[t]
	\centering
	\includegraphics[height=0.16\textwidth]{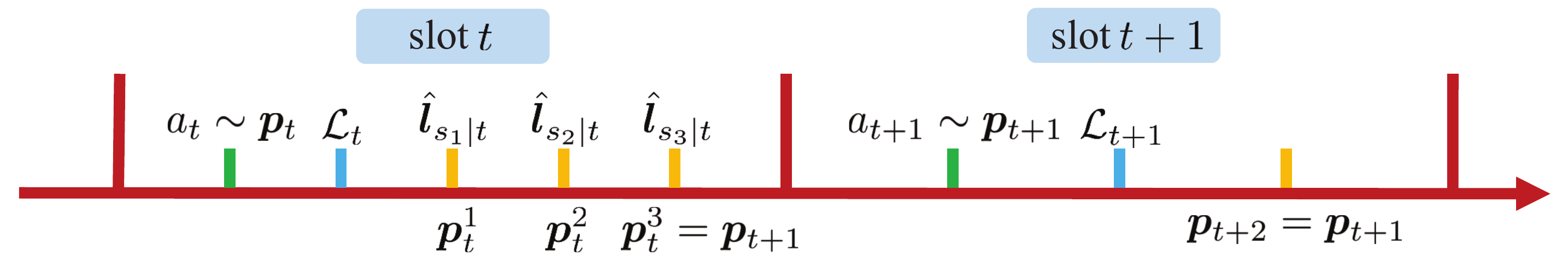}
	\vspace{-0.2cm}
	\caption{An example of DEXP3 with ${\cal L}_t$ including the losses incurred in slot $s_1$, $s_2$, and $s_3$ and ${\cal L}_{t+1} = \emptyset$.}
	\label{fig.DEXP3slot}
\end{figure}

Since ${\cal L}_t$ may contain multiple rounds of feedback, leveraging each  $\hat{\bm{l}}_{s_n|t}$, the learner must update $\big|{\cal L}_t \big|$ times to obtain $\bm{p}_{t+1}$. Intuitively, to upper bound the bias of \eqref{eq.expectedloss}, an upper bound on $p_{s_n}(k)/p_t(k)$ is required, which amounts to a lower bound on $p_t(k)$. On the other hand however, the lower bound of $p_t(k)$ cannot be too large to avoid incurring extra regret. Different from 
EXP3, our DEXP3 ensures a lower bound on $p_t(k)$ by introducing an intermediate weight vector $\tilde{\bm{w}}_t$ to evaluate the historical performance of each arm. Let $n$ denote the index of the inner-loop update at slot $t$ starting from $\bm{p}_t^{0}:=\bm{p}_t$.
For each ${l}_{s_n|t}(a_{s_n|t}) \in {\cal L}_t$, the learner first updates $\tilde{\bm{w}}_t^n$ by using the estimated loss $\hat{\bm{l}}_{s_n|t}$ as
\begin{equation}\label{eq.tildew1}
	\tilde{w}_{t}^n (k) = p_t^{n-1}(k) \exp \Big(-\eta \min\big{\{} \delta_1,\hat{l}_{s_n|t} (k)\big{\}}\Big),~ \forall k
\end{equation}
where $\eta$ is the learning rate, and $\delta_1$ serves as an upper bound of $\hat{{l}}_{s_n|t}(k)$ to control the bias of $\hat{l}_{s_n|t}(k)$. However, to confine the extra regret incurred by introducing $\delta_1$, a carefully-selected $\delta_1$ should ensure that the probability of having $\hat{l}_{s_n|t}(k)$ larger than $\delta_1$ is small enough. Then the learner finds $\bm{w}_t^n$ by a trimmed normalization as
\begin{equation}\label{eq.w1}
	w_t^n (k) = \max \bigg{\{} \frac{\tilde{w}_t^n (k)}{\sum_{j=1}^K \tilde{w}_t^n (j)}, \frac{\delta_2}{K}\bigg{\}},~ \forall k.
\end{equation}
Update \eqref{eq.w1} ensures that $w_t^n (k)$ is lower bounded by $\delta_2 / K$. Finally, the learner normalizes $\bm{w}_t^n$ to obtain $\bm{p}_t^n$ as
\begin{equation}\label{eq.tildep1}
	p_t^n (k) = \frac{w_t^n (k)}{\sum_{j=1}^K w_t^n (j)},~\forall k.
\end{equation}
It can be shown that $p_t^n(k)$ is lower bounded by $p_t^n(k) \geq \frac{\delta_2}{K(1+\delta_2)}$ [cf. \eqref{eq.ineq4} in supplementary material]. 
After all the elements of in ${\cal L}_t$ have been used, the learner finds $\bm{p}_{t+1}$ via
\begin{equation}\label{eq.p_t+1}
	\bm{p}_{t+1} = \bm{p}_t^{|{\cal L}_t|}.
\end{equation}
Furthermore, if ${\cal L}_t = \emptyset$, the learner directly reuses the previous distribution, i.e., $\bm{p}_{t+1} = \bm{p}_t$, and chooses an arm accordingly. In a nutshell, DEXP3 is summarized in Alg. \ref{algo2}. As it will be shown in Sec. \ref{sec.pfDEXP3}, if the delay $d_t$ is bounded by a constant $\bar{d}$, DEXP3 can guarantee a regret of ${\cal O} \big( \sqrt{K\bar{d}(T+D)}\big)$,  where $D = \sum_{t=1}^T d_t$ is the overall delay.

\begin{remark}
	The recent composite loss wrapper algorithm (abbreviated as CLW) in \cite{cesa2018} can be also applied to the delayed MAB problem. However, CLW is designed for a more general setting with composite and anonymous feedback, and its \textit{efficiency} drops when the previous action $a_{s|t}$ is known. The main differences between DEXP3 and CLW are: i) the loss estimators are different; and, ii) DEXP3 updates $\bm{p}_t$ in every slot, while CLW updates occur every other ${\cal O}( 2 \bar{d})$ slots (thus requiring a larger learning rate). As it will be corroborated by simulations, DEXP3 outperforms CLW in the considered setting.
\end{remark}

\section{DBGD for Delayed BCO}

\begin{algorithm}[t]
	\caption{\textbf{DBGD}}\label{algo1}
	\begin{algorithmic}[1]
		\State \textbf{Initialize:}
		$\bm{x}_1 = \bm{0}$.
		\For {$t=1,2\dots,T$}
		\State Play $\bm{x}_t$, also query $\bm{x}_t+\delta \bm{e}_k, k=1,\ldots,K$.
		\State Observe feedback collected in set ${\cal L}_t $. 
		\If {${\cal L}_t = \emptyset$}
			set $\bm{x}_{t+1} = \bm{x}_t$.
		\Else ~estimate gradient $\bm{g}_{s_n|t}$ via \eqref{g_estimate} if $s_n \!+\! d_s \!=\! t$. 
		\State ~ Update $\bm{x}_{t+1}$ via \eqref{eq.xupdate}.
		\EndIf
		\EndFor
	\end{algorithmic}
\end{algorithm}

In this section, we develop an algorithm that we term Delayed Bandit Gradient Descent (DBGD) based on a \textit{deterministic} approximant of the loss obtained using $M=K+1$ rounds of feedback. DBGD enjoys regret of ${\cal O}\big( \sqrt{T+D}\big)$ for BCO problems even when the delays are unknown.  In practice, $(K+1)$-point feedback can be obtained i) when it is possible to evaluate the loss function easily; and ii) when the slot duration is long, meaning that the algorithm has enough time to query multiple points from the oracle \cite{thune2018}.

The intuition behind our deterministic approximation originates from the gradient definition~\cite{agarwal2010}. Consider for example $\bm{x}\in\mathds{R}^2$, and the gradient $\nabla f(\bm{x}) = [\nabla_1, \nabla_2]^\top$, where 
\begin{equation}\label{eq.2dexample}
 \nabla_{1} \!=\! \lim_{\delta \rightarrow 0} \! \frac{f(\bm{x}\!+\!\delta\bm{e}_1) \!-\! f(\bm{x})}{\delta}; \nabla_{2}\! =\! \lim_{\delta \rightarrow 0} \!\frac{f(\bm{x}\!+\!\delta\bm{e}_2) \!-\! f(\bm{x})}{\delta}.
\end{equation}
Similarly, for a $K$-dimensional $\bm{x}$, if $K+1$ rounds of feedback are available, the gradient can be approximated as
\begin{equation}\label{eq.estgrad}
	\bm{g}_t = \frac{1}{\delta} \sum_{k=1}^K \big( f_t(\bm{x}_t + \delta \bm{e}_k) - f_t(\bm{x}_t) \big) \bm{e}_k
\end{equation}
where $\bm{e}_k : = [0,\ldots,1,\ldots,0]^\top$ denotes the unit vector with $k$-th entry equal $1$. Intuitively, a smaller $\delta$ improves the approximation accuracy. When $f_t$ is further assumed to be linear, $\bm{g}_t$ in  \eqref{eq.estgrad} is unbiased. In this case, the gradient of $f_t$ can be recovered exactly, and thus the setup boils down to a delayed one with full information. However, if $f_t(\cdot)$ is generally convex, $\bm{g}_t$ in \eqref{eq.estgrad} is \textit{biased}.

\begin{figure}[t]
	\centering
	\includegraphics[height=0.16\textwidth]{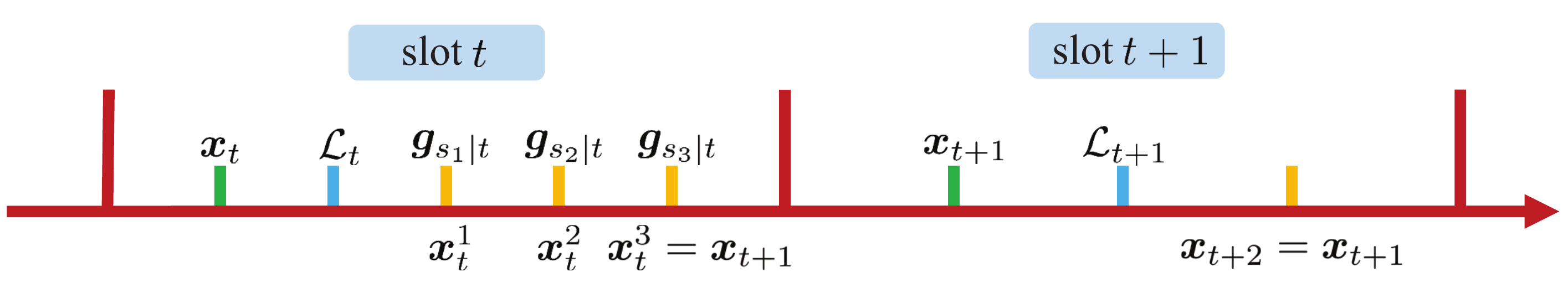}
	\vspace{-0.2cm}
	\caption{An example of DBGD with ${\cal L}_t$ including the losses incurred in slot $s_1$, $s_2$, and $s_3$ and ${\cal L}_{t+1} = \emptyset$.  }
	\label{fig.DBGDslot}
\end{figure}

Leveraging the gradient in \eqref{eq.estgrad}, we are ready to introduce the DBGD algorithm. 
Per slot $t$, the learner plays $\bm{x}_t$ and also queries $f_t(\bm{x}_t + \delta \bm{e}_k), \forall k = \{1,\ldots,K\}$. However, to ensure that $f_{t}(\bm{x}_{t} + \delta \bm{e}_k)$ is feasible, the $\bm{x}_t$ should be confined to the set ${\cal X}_\delta= \{\bm{x}: \frac{\bm{x}}{1-\delta} \in {\cal X} \} $. Note that if $0\leq \delta <1$, ${\cal X}_\delta$ is still convex. Let $n = 1, 2, \ldots, |{\cal L}_t|$ indexing the inner loop update at slot $t$. At the end of slot $t$, the learner receives observations ${\cal L}_t = \big{\{} \{f_{s_n|t}(\bm{x}_{s_n|t}), f_{s_n|t}(\bm{x}_{s_n|t} + \delta \bm{e}_k), k = 1,\ldots, K \}, \forall s_n = t-d_{s_n} \big{\}}$. 
Per received feedback value, the learner approximates the gradient via \eqref{eq.estgrad}; thus, for each $\{ f_{s_n|t}(\bm{x}_{s_n|t}), f_{s_n|t}(\bm{x}_{s_n|t} \!+\! \delta \bm{e}_k), k = 1,\ldots, K \}$, we have
\begin{equation}\label{g_estimate}
\!\bm{g}_{s_n|t} = \frac{1}{\delta} \sum_{k=1}^K \big( f_{s_n|t}(\bm{x}_{s_n|t} + \delta \bm{e}_k) - f_{s_n|t}(\bm{x}_{s_n|t}) \big) \bm{e}_k.\!
\end{equation}
With $\bm{g}_{s_n|t}$ and $\bm{x}_t^{0}:=\bm{x}_t$, the learner will update $\big|{\cal L}_t \big|$ times to obtain $\bm{x}_{t+1}$ by
\begin{subequations}\label{eq.xupdate}
	\begin{equation}
		\bm{x}_t^{n} = \Pi_{{\cal X}_\delta} \big[ \bm{x}_t^{n-1} - \eta \bm{g}_{s_n|t} \big], ~~~n=1,\dots,|{\cal L}_t |
	\end{equation}
	\begin{equation}
		\bm{x}_{t+1} = \bm{x}_t^{|{\cal L}_t |}.
	\end{equation}
\end{subequations}
If no feedback is received at slot $t$, the learner simply sets $\bm{x}_{t+1} = \bm{x}_{t}$. The DBGD is summarized in Algorithm \ref{algo1}.



\section{A Unified Framework for Regret Analysis}\label{sec.analysis}

In this section, we show that both DEXP3 and DBGD can guarantee an ${\cal O}\big(\sqrt{T+D} \big)$ regret. 
Our analysis considerably broadens that in \cite{joulani2016}, which was originally developed for delayed online learning with full-information feedback.

\subsection{Mapping from Real to Virtual Slots}

To analyze the recursion involving consecutive variables ($\bm{p}_t$ and $\bm{p}_{t+1}$ in DEXP3 or $\bm{x}_t$ and $\bm{x}_{t+1}$ in DBGD) is challenging, since different from standard settings, the number of feedback rounds varies over slots. We will bypass this variable feedback using the notion of a ``virtual slot.''

 
Over the real time horizon, there are in total $T$ virtual slots, where the $\tau$th virtual slot is associated with the $\tau$th loss value fed back. 
Recall that the feedback received at the end of slot $t$ is ${\cal L}_t$. With the overall feedback received until the end of slot $t-1$ denoted by $L_{t-1}: = \sum_{v=1}^{t-1}|{\cal L}_v|$, the virtual slot $\tau$ corresponding to the first feedback value received at slot $t$ is $\tau = L_{t-1}+1$. In what follows, we will use MAB as an example to elaborate on this mapping, but the BCO setting can be argued likewise. 


When the multiple rounds of feedback are received over a real slot $t$, DEXP3 updates $| {\cal L}_t |$ times  $\bm{p}_t$ to obtain $\bm{p}_{t+1}$; see \eqref{eq.tildew1} - \eqref{eq.tildep1}. Using the notion of virtual slots, these $| {\cal L}_t |$ updates are performed over $| {\cal L}_t |$ consecutive virtual slots. Taking Fig. \ref{fig.RVlearner} as an example, when $\bm{p}_t^1$ is obtained by using an estimated loss $\hat{\bm{l}}_{s_1|t}$ [cf. \eqref{eq.estloss}] and \eqref{eq.tildew1} - \eqref{eq.tildep1}, this update is mapped to a virtual slot $\tau = L_{t-1} +1$, where $\tilde{\bm{l}}_\tau := \hat{\bm{l}}_{s_1|t}$ is adopted to obtain $\tilde{\bm{p}}_{\tau+1} := \bm{p}_t^1$. Similarly, when $\bm{p}_{t}^2$ is obtained using $\hat{\bm{l}}_{s_2|t}$, the virtual slot yields $\tilde{\bm{p}}_{\tau+2} := \bm{p}_t^2$ via $\tilde{\bm{l}}_{\tau+1}: =\hat{\bm{l}}_{s_2|t}$. 
That is to say, at real slot $t$, for $n = 1,\ldots, |{\cal L}_t|$, each update from $\bm{p}_t^{n-1}$ to $\bm{p}_t^n$ using the estimated loss $\hat{\bm{l}}_{s_n|t}$ is mapped to an ``update'' at the virtual slot $\tau + n-1$, where $\tilde{\bm{l}}_{\tau+n-1}: = \hat{\bm{l}}_{s_n|t}$ is employed to obtain $\tilde{\bm{p}}_{\tau+n}:= \bm{p}_t^n$ from $\tilde{\bm{p}}_{\tau+n-1} =  \bm{p}_t^{n-1}$. According to the real-to-virtual slot mapping, we have $\tilde{\bm{p}}_{\tau + |{\cal L}_t|} = \bm{p}_{t+1}$; see also Fig. \ref{fig.RVlearner} for two examples. As we will show later, it is convenient to analyze the recursion between two consecutive $\tilde{\bm{p}}_\tau$ and $\tilde{\bm{p}}_{\tau+1}$, which is the key for the ensuing regret analysis.

\begin{figure}[t]
	\centering
	\includegraphics[height=0.4\textwidth]{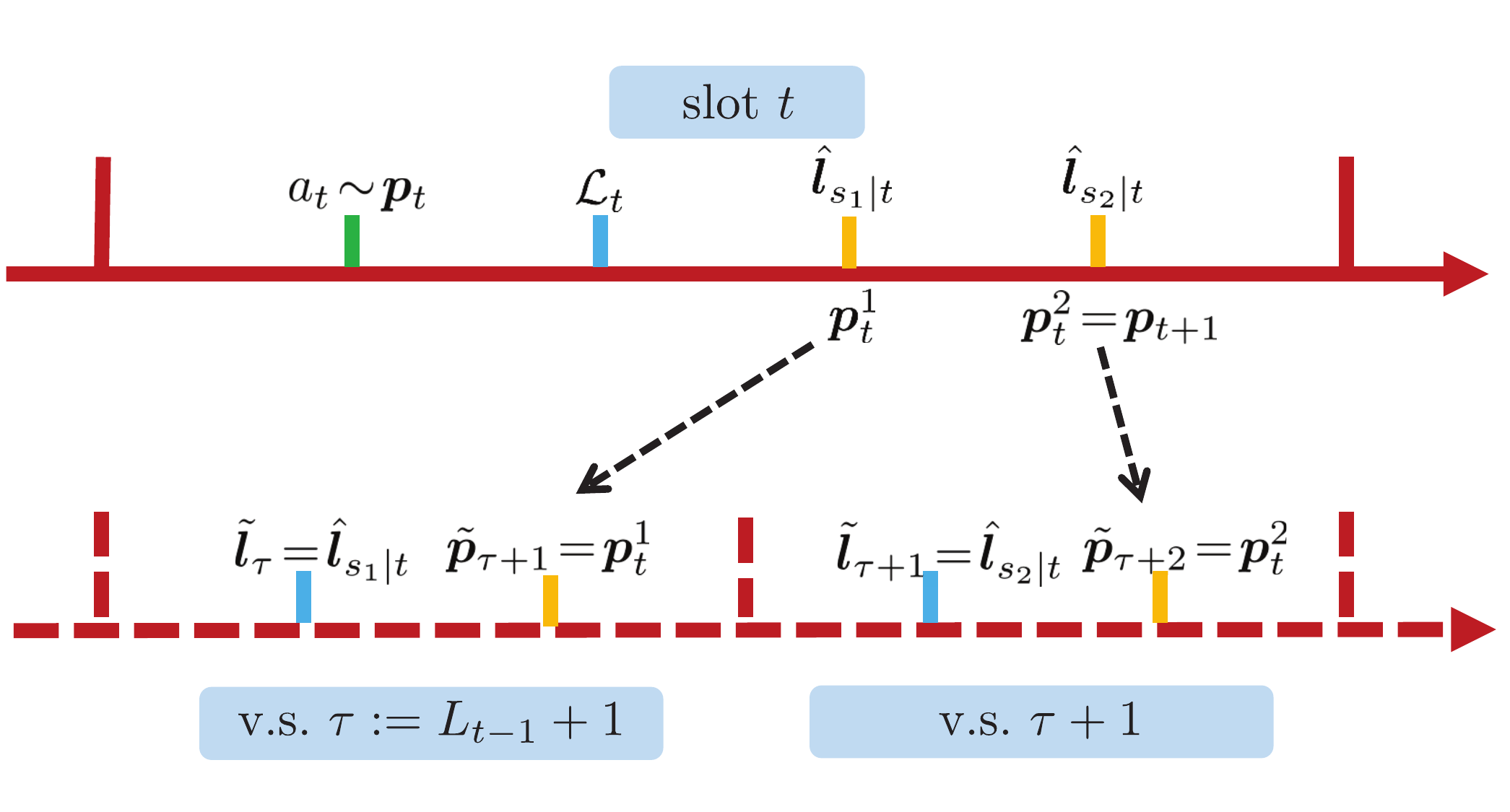}
	\vspace{-0.2cm} 
	\caption{An example of mapping from real slots (solid line) and virtual slots (dotted line). At the end of slot $t$, the feedback is ${\cal L}_t = \{ l_{s_1|t} (a_{s_1|t}), l_{s_2|t} (a_{s_2|t})\}$.
``v.s.'' stands for virtual slot.
}
	\label{fig.RVlearner}
	\vspace{-0.2cm}
\end{figure}

With regard to DBGD, since multiple feedback rounds are possible per real slot $t$, we again map the $| {\cal L}_t |$ updates at a real slot [cf. \eqref{eq.xupdate}] to $| {\cal L}_t |$ virtual slots. The mapping is exactly the same as that in DEXP3, that is, per virtual slot $\tau$, $\tilde{\bm{g}}_\tau$ is used to obtain $\tilde{\bm{x}}_{\tau+1}$.

From this real-to-virtual mapping vantage point, DEXP3 and DBGD can be viewed as (inexact) EXP3 and BGD running on the virtual time horizon with only one feedback value per virtual slot. That is to say, instead of analyzing regret on the real time horizon, which can involve multiple feedback rounds, we can alternatively turn to the virtual slot, where there is only one ``update'' per slot. 

\subsection{Regret Analysis of DEXP3}\label{sec.pfDEXP3}

Now we turn to the analyze the regret for DEXP3. The analysis builds on the following assumptions.
\begin{assumption}\label{as.1}
	The losses satisfy $\max_{t,k} l_t(k) \leq 1$.
\end{assumption}
\begin{assumption}\label{as.2}
	The delay $d_t$ is bounded, i.e., $\max_t d_t \leq \bar{d}$.
\end{assumption}
Assumption \ref{as.1} requires the loss function to be upper bounded, which is common in MAB; see also \cite{hazan2016,auer2002a,bubeck2012}.
Assumption \ref{as.2} asks for the delay to be bounded that also appears in previous analyses for the delayed online learning setup \cite{joulani2013,cesa2018,cesa2016}.
It is also assumed that $\bar{d}>0$, since otherwise DEXP3 boils down to EXP3 automatically.
Let us first consider the changes on $\tilde{p}_{\tau}(k)$ after one ``update'' in the virtual slot. 
\begin{lemma}\label{lemma.small/big} 
	If the parameters are properly selected such that $1-\delta_2 - \eta \delta_1 \geq 0$, the following inequality holds 
	\begin{equation}\label{eq.small/big.1}
		\frac{\tilde{p}_{\tau -1}(k)}{\tilde{p}_{\tau}(k)} \leq \frac{1}{1-\delta_2 - \eta \delta_1},~~~\forall k, \tau.
	\end{equation}
\end{lemma}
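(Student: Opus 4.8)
The plan is to carry out the entire argument in the virtual-slot domain, where a single ``update'' sends $\tilde{\bm{p}}_{\tau-1}$ to $\tilde{\bm{p}}_\tau$ through the analogues of \eqref{eq.tildew1}--\eqref{eq.tildep1} driven by exactly one estimated loss $\tilde{\bm{l}}_{\tau-1}$. Writing $\tilde{w}_\tau(k)=\tilde{p}_{\tau-1}(k)\exp(-\eta\min\{\delta_1,\tilde{l}_{\tau-1}(k)\})$, then $w_\tau(k)=\max\{\tilde{w}_\tau(k)/\sum_j\tilde{w}_\tau(j),\,\delta_2/K\}$, and finally $\tilde{p}_\tau(k)=w_\tau(k)/\sum_j w_\tau(j)$, my target is the one-step multiplicative lower bound $\tilde{p}_\tau(k)\ge(1-\delta_2-\eta\delta_1)\,\tilde{p}_{\tau-1}(k)$, which is exactly \eqref{eq.small/big.1} after dividing through (with the convention that the bound is vacuous when $1-\delta_2-\eta\delta_1=0$, since the right-hand side is then $+\infty$).

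I would bound the three stages of the update in turn. First, because losses lie in $[0,1]$, the estimator \eqref{eq.estloss} is nonnegative, so the clipped loss $\min\{\delta_1,\tilde{l}_{\tau-1}(k)\}$ lies in $[0,\delta_1]$ and the exponential factor obeys $\exp(-\eta\min\{\delta_1,\tilde{l}_{\tau-1}(k)\})\ge\exp(-\eta\delta_1)\ge 1-\eta\delta_1$. Second, since that same factor is at most $1$ and $\tilde{\bm{p}}_{\tau-1}$ is a probability vector, the inner normalizer satisfies $\sum_j\tilde{w}_\tau(j)\le\sum_j\tilde{p}_{\tau-1}(j)=1$; combining these two facts gives $\tilde{w}_\tau(k)/\sum_j\tilde{w}_\tau(j)\ge(1-\eta\delta_1)\,\tilde{p}_{\tau-1}(k)$, and because $w_\tau(k)$ is a max dominating this term, the same lower bound carries over to $w_\tau(k)$. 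Third, I would upper bound the trimmed outer normalizer via $\max\{a,b\}\le a+b$, obtaining $\sum_j w_\tau(j)\le\sum_j\tilde{w}_\tau(j)/\sum_i\tilde{w}_\tau(i)+\sum_j\delta_2/K=1+\delta_2$. Dividing the numerator bound by the denominator bound then yields $\tilde{p}_\tau(k)\ge\frac{1-\eta\delta_1}{1+\delta_2}\,\tilde{p}_{\tau-1}(k)$.

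The final step is the single algebraic comparison that converts this raw bound into the clean form stated: I would check $\frac{1-\eta\delta_1}{1+\delta_2}\ge 1-\delta_2-\eta\delta_1$. Cross-multiplying is legitimate because the hypothesis $1-\delta_2-\eta\delta_1\ge0$ forces $1-\eta\delta_1\ge\delta_2\ge0$ and hence $1+\delta_2>0$, and the inequality reduces to $(1+\delta_2)(1-\delta_2-\eta\delta_1)\le 1-\eta\delta_1$, i.e. to $-\delta_2^2-\delta_2\eta\delta_1\le0$, which is immediate. I do not expect a genuine obstacle; the only points that need care are keeping the two normalizations distinct (the inner one is $\le1$, the trimmed outer one is $\le1+\delta_2$) and confirming nonnegativity of the clipped loss so that $\exp(-\eta\delta_1)$ is truly the worst case — were losses allowed to be negative, the worst-case exponent would flip and the chain of bounds would break.
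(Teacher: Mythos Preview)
Your proposal is correct and follows essentially the same approach as the paper: both arguments lower-bound $\tilde{p}_\tau(k)$ by chaining the three inequalities $\sum_j\tilde{w}_\tau(j)\le 1$, $\sum_j w_\tau(j)\le 1+\delta_2$, and $e^{-\eta\min\{\delta_1,\tilde{l}_{\tau-1}(k)\}}\ge 1-\eta\delta_1$, arriving at $\tilde{p}_\tau(k)\ge\frac{1-\eta\delta_1}{1+\delta_2}\tilde{p}_{\tau-1}(k)$ and then relaxing to $1-\delta_2-\eta\delta_1$. The only cosmetic difference is that the paper packages the intermediate bound as a separate ``difference'' lemma before specializing, whereas you go straight to the multiplicative form.
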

\begin{proof}
	See Sec. B.1 of the supplementary document.
\end{proof}

\begin{lemma}\label{lemma.big/small}
	If the parameters are properly selected such that $1 - \eta \delta_1 \geq 0$, the following inequality holds 
	\begin{equation}\label{eq.big/small.1}
		\frac{\tilde{p}_{\tau}(k)}{\tilde{p}_{\tau-1}(k)} \leq \max\left\{1+\delta_2, \frac{1}{1-\eta \delta_1}\right\},~~~\forall k, \tau.
	\end{equation}
\end{lemma}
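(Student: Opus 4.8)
The plan is to carry out everything in virtual-slot coordinates, so that the map $\tilde{\bm p}_{\tau-1}\mapsto\tilde{\bm p}_\tau$ is a single pass of \eqref{eq.tildew1}--\eqref{eq.tildep1}. Writing $\ell(k):=\min\{\delta_1,\tilde l_{\tau-1}(k)\}\in[0,\delta_1]$ and abbreviating the two normalizers by $S:=\sum_{j=1}^K\tilde w_\tau(j)$ and $Z:=\sum_{j=1}^K w_\tau(j)$, the three updates read
\begin{equation*}
\tilde w_\tau(k)=\tilde p_{\tau-1}(k)\,e^{-\eta\ell(k)},\qquad w_\tau(k)=\max\Big\{\frac{\tilde w_\tau(k)}{S},\,\frac{\delta_2}{K}\Big\},\qquad \tilde p_\tau(k)=\frac{w_\tau(k)}{Z}.
\end{equation*}
First I would collect four elementary bounds. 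Because $e^{-\eta\ell(j)}\ge e^{-\eta\delta_1}$ and $\sum_j\tilde p_{\tau-1}(j)=1$, we get $S\ge e^{-\eta\delta_1}$. Because $w_\tau(j)\ge\tilde w_\tau(j)/S$ for every $j$, summation gives $Z\ge1$; because $\max\{a,b\}\le a+b$ for nonnegative $a,b$, summation gives $Z\le 1+\delta_2$. Applying this last bound one virtual slot earlier shows $\tilde p_{\tau-1}(k)=w_{\tau-1}(k)/Z_{\tau-1}\ge(\delta_2/K)/(1+\delta_2)$, and the same lower bound holds trivially for the uniform initialization $\tilde p_0(k)=1/K$.

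Next I would split on which branch of the trimming in $w_\tau(k)$ is active, since the two branches produce the two arguments of the $\max$ in \eqref{eq.big/small.1}. In the untrimmed branch $w_\tau(k)=\tilde w_\tau(k)/S$, the $\tilde p_{\tau-1}(k)$ factors cancel and
\begin{equation*}
\frac{\tilde p_\tau(k)}{\tilde p_{\tau-1}(k)}=\frac{e^{-\eta\ell(k)}}{S\,Z}\le\frac{1}{S}\le e^{\eta\delta_1}\le\frac{1}{1-\eta\delta_1},
\end{equation*}
using $e^{-\eta\ell(k)}\le1$, $Z\ge1$, $S\ge e^{-\eta\delta_1}$, and finally the elementary inequality $e^{x}\le 1/(1-x)$ valid for $x\in[0,1)$, which is exactly where the hypothesis $1-\eta\delta_1\ge0$ (in fact $>0$) enters. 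In the trimmed branch $w_\tau(k)=\delta_2/K$, I instead invoke the lower bound on $\tilde p_{\tau-1}(k)$ together with $Z\ge1$:
\begin{equation*}
\frac{\tilde p_\tau(k)}{\tilde p_{\tau-1}(k)}=\frac{\delta_2/K}{Z\,\tilde p_{\tau-1}(k)}\le\frac{\delta_2/K}{1\cdot\frac{\delta_2}{K(1+\delta_2)}}=1+\delta_2.
\end{equation*}
Taking the larger of the two branch bounds yields \eqref{eq.big/small.1}.

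The step I expect to be the crux is not any single calculation but keeping the two branches consistent: the trimmed-branch bound $1+\delta_2$ relies on the trimming lower bound for $\tilde p_{\tau-1}(k)$, which is itself the output of the previous slot's normalization (hence the need to verify it also at $\tau=1$), while the untrimmed-branch bound $1/(1-\eta\delta_1)$ relies on the exponential estimate and thus on the stated parameter condition. A secondary point I would be careful about is that all the auxiliary bounds ($S\ge e^{-\eta\delta_1}$, $1\le Z\le1+\delta_2$, and the lower bound on $\tilde p_{\tau-1}$) hold for every $\tau$ and $k$ simultaneously, so that the resulting $\max$ is uniform in $k$ and $\tau$ as claimed. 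No deeper machinery appears to be required.
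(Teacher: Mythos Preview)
Your proof is correct and follows essentially the same route as the paper: both split on whether the trimming in \eqref{eq.w1} is active, obtain the bound $1+\delta_2$ in the trimmed case from the universal lower bound $\tilde p_{\tau-1}(k)\ge\delta_2/(K(1+\delta_2))$, and obtain $1/(1-\eta\delta_1)$ in the untrimmed case via the inequality $e^{-x}\ge 1-x$ (equivalently your $e^{x}\le 1/(1-x)$). The only cosmetic difference is that the paper first packages the untrimmed case into an additive auxiliary inequality (its Lemma~\ref{lemma.big-small}) and then rearranges, whereas you compute the multiplicative ratio $e^{-\eta\ell(k)}/(SZ)$ directly; your version is marginally more streamlined but not a different argument.
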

\begin{proof}
	See Sec. B.2 of the supplementary document.
\end{proof}
Lemmas \ref{lemma.small/big} and \ref{lemma.big/small} assert that both $\tilde{p}_{\tau-1}(k)/\tilde{p}_\tau(k)$ and $\tilde{p}_\tau(k)/\tilde{p}_{\tau-1}(k)$ are bounded deterministically, that is, regardless of the arm selection and observed loss. These bounds are the critical for deriving the regret. 

To bound the regret, the final cornerstone is the ``regret'' in virtual slots, specified in the following lemma.

\begin{lemma}\label{lemma.innerreg}
	For a given sequence of $\{ \tilde{\bm{l}}_\tau \}_{\tau = 1}^T$, the following relation follows
	\begin{align}
		& ~~~~~  \sum_{\tau=1}^T \big(\tilde{\bm{p}}_\tau - \bm{p} \big)^\top \min \big{\{} \tilde{\bm{l}}_\tau, \delta_1 \cdot \bm{1} \big{\}} \leq \!\frac{T \ln (1+\delta_2)+ \ln K}{\eta} + \frac{\eta}{2} \sum_{\tau=1}^T \sum_{k=1}^K \tilde{p}_\tau (k) \big[ \tilde{l}_\tau (k) \big]^2
	\end{align}
	where $ \bm{1}$ is a $K\times 1$ vector of all ones, and $\bm{p} \in \Delta_K$.
\end{lemma}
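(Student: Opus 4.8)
The plan is to treat the recursion \eqref{eq.tildew1}--\eqref{eq.tildep1}, read in the virtual-slot indexing, as a \emph{clipped and trimmed} variant of the exponential-weights (EXP3) update, and to run the classical potential/telescoping argument with the relative entropy as the potential. Write $\ell_\tau(k):=\min\{\tilde l_\tau(k),\delta_1\}$ for the clipped loss, so that $0\le \ell_\tau(k)\le\delta_1$ and, since $\tilde l_\tau(k)\ge 0$, also $\ell_\tau(k)\le \tilde l_\tau(k)$; the left-hand side of the lemma is exactly $\sum_\tau (\tilde{\bm p}_\tau-\bm p)^\top\bm\ell_\tau$. It is convenient to split the virtual update into its two stages: first the pure exponential step $q_{\tau+1}(k)=\tilde p_\tau(k)e^{-\eta\ell_\tau(k)}/Z_\tau$ with $Z_\tau:=\sum_j \tilde p_\tau(j)e^{-\eta\ell_\tau(j)}$, and then the trimming-plus-renormalization, which produces $\tilde p_{\tau+1}(k)=\max\{q_{\tau+1}(k),\delta_2/K\}/S_{\tau+1}$ with $S_{\tau+1}:=\sum_j \max\{q_{\tau+1}(j),\delta_2/K\}$. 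The crucial observation is that the trimming can only raise entries, so $S_{\tau+1}\ge \sum_j q_{\tau+1}(j)=1$, while the total mass added is at most $K\cdot(\delta_2/K)=\delta_2$; hence $1\le S_{\tau+1}\le 1+\delta_2$. This single inequality is what will generate the per-slot $\ln(1+\delta_2)$ term.

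First I would bound the learner's own loss. Applying $e^{-x}\le 1-x+\tfrac{x^2}{2}$ (valid for $x\ge 0$) inside $\ln Z_\tau$ and then $\ln(1+y)\le y$ gives $\ln Z_\tau \le -\eta\sum_k \tilde p_\tau(k)\ell_\tau(k)+\tfrac{\eta^2}{2}\sum_k \tilde p_\tau(k)\ell_\tau(k)^2$, i.e.\ $\sum_k \tilde p_\tau(k)\ell_\tau(k)\le -\tfrac{1}{\eta}\ln Z_\tau+\tfrac{\eta}{2}\sum_k \tilde p_\tau(k)\ell_\tau(k)^2$. Next I would handle the comparator $\bm p$ through the relative entropy $\Phi_\tau:=\sum_k p(k)\ln\frac{p(k)}{\tilde p_\tau(k)}$. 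Using $\tilde p_{\tau+1}(k)\ge q_{\tau+1}(k)/S_{\tau+1}\ge q_{\tau+1}(k)/(1+\delta_2)$ together with $\ln q_{\tau+1}(k)=\ln\tilde p_\tau(k)-\eta\ell_\tau(k)-\ln Z_\tau$, one obtains $\ln\frac{\tilde p_{\tau+1}(k)}{\tilde p_\tau(k)}\ge -\eta\ell_\tau(k)-\ln Z_\tau-\ln(1+\delta_2)$; summing against $p(k)$ and using $\sum_k p(k)=1$ yields $\Phi_\tau-\Phi_{\tau+1}\ge -\eta\sum_k p(k)\ell_\tau(k)-\ln Z_\tau-\ln(1+\delta_2)$, hence $-\sum_k p(k)\ell_\tau(k)\le \tfrac{1}{\eta}(\Phi_\tau-\Phi_{\tau+1})+\tfrac{1}{\eta}\ln Z_\tau+\tfrac{1}{\eta}\ln(1+\delta_2)$.

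Adding the two displays, the $\ln Z_\tau$ terms cancel and I am left with $(\tilde{\bm p}_\tau-\bm p)^\top\bm\ell_\tau\le \tfrac{1}{\eta}(\Phi_\tau-\Phi_{\tau+1})+\tfrac{1}{\eta}\ln(1+\delta_2)+\tfrac{\eta}{2}\sum_k \tilde p_\tau(k)\ell_\tau(k)^2$. Summing over $\tau=1,\dots,T$ telescopes the entropy into $\Phi_1-\Phi_{T+1}$; since $\tilde{\bm p}_1$ is uniform, $\Phi_1=\ln K+\sum_k p(k)\ln p(k)\le \ln K$, and $\Phi_{T+1}\ge 0$, so $\Phi_1-\Phi_{T+1}\le \ln K$. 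Finally, replacing $\ell_\tau(k)^2\le \tilde l_\tau(k)^2$ in the quadratic term gives precisely the claimed bound. The main obstacle is the trimming step: it destroys the exact exponential-weights form, so the usual ``telescoping of $\ln$ of the normalizer'' does not apply directly. The fix is to use the relative entropy (rather than the log-partition) as the potential and to absorb the trimming through the clean two-sided control $1\le S_{\tau+1}\le 1+\delta_2$ and the pointwise lower bound $\tilde p_{\tau+1}(k)\ge q_{\tau+1}(k)/(1+\delta_2)$; everything else is the standard EXP3 calculus.
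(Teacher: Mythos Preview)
Your proof is correct. The core observations---the trimming step satisfies $1\le S_{\tau+1}\le 1+\delta_2$ (the paper's \eqref{eq.ineq2}--\eqref{eq.ineq3}), and the second-order bound $e^{-x}\le 1-x+x^2/2$ on the exponential step---are exactly the ones the paper uses, so the two arguments share their substance. The presentation differs: the paper unrolls the recursion for the unnormalized weights $\tilde W_\tau$ into a product $\prod_\tau(W_\tau\tilde W_\tau)$, then lower-bounds $\tilde W_{T+1}$ against the comparator via Jensen's inequality on $e^{-x}$ (this is the classical Hedge/EXP3 ``log-partition'' telescoping); you instead take the KL divergence $\Phi_\tau=\mathrm{KL}(\bm p\,\|\,\tilde{\bm p}_\tau)$ as the potential and telescope it directly, using the pointwise lower bound $\tilde p_{\tau+1}(k)\ge q_{\tau+1}(k)/(1+\delta_2)$ to absorb the trimming. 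Your route is a touch cleaner---it dispenses with Jensen and handles an arbitrary $\bm p\in\Delta_K$ uniformly---while the paper's route is closer to the traditional EXP3 write-up; in both cases the per-slot $\ln(1+\delta_2)$ penalty arises from exactly the same normalizer bound.
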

\begin{proof}
	See Sec. B.3 of the supplementary document.
\end{proof}

Leveraging Lemma \ref{lemma.innerreg}, the regret of DEXP3 follows.

\begin{theorem}\label{theo.reg}
Supposing Assumptions \ref{as.1} and \ref{as.2} hold, defining the overall delay $D := \sum_{t=1}^T d_t$, and choosing $\delta_2 = \frac{1}{T+D}$,  $\eta = {\cal O} \Big( \sqrt{ \frac{1+\ln K}{\bar{d}K(T+D)} } \Big)$, and $\delta_1 = \frac{1}{2\eta\bar{d}} - \frac{\delta_2}{\eta}$, DEXP3 guarantees that the ${\rm Reg}_T^{\rm MAB}$ in \eqref{eq.reg} satisfies
	\begin{equation}
		{\rm Reg}_T^{\rm MAB} = {\cal O} \big( \sqrt{ (T+D) \bar{d}K (1\!+\!\ln K)} \big).
	\end{equation}
\end{theorem}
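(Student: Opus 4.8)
The plan is to transport the regret onto the virtual time horizon of Section~5.1, apply the exponential-weights bound of Lemma~\ref{lemma.innerreg} there, and pay separately for (i) the delay and (ii) the bias of the estimator. Write $\sigma(\tau)$ for the real slot whose loss is processed at virtual slot $\tau$, so that $\tau\mapsto\sigma(\tau)$ is a bijection of $\{1,\dots,T\}$, and set $t(\tau)=\sigma(\tau)+d_{\sigma(\tau)}$ for the arrival slot. Because the comparator term is invariant under this reindexing, the regret splits deterministically as $\sum_{t}(\bm p_t-\bm p^*)^\top\bm l_t=\sum_{\tau}(\tilde{\bm p}_\tau-\bm p^*)^\top\bm l_{\sigma(\tau)}+\sum_{\tau}(\bm p_{\sigma(\tau)}-\tilde{\bm p}_\tau)^\top\bm l_{\sigma(\tau)}$, where the first sum is a virtual-slot regret to be fed to Lemma~\ref{lemma.innerreg} and the second is a delay-induced drift.

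I would first dispatch the drift. Under Assumption~\ref{as.2} the action played at $\sigma(\tau)$ and the update at $\tau$ are separated only by the feedback that arrives during the window $[\sigma(\tau),t(\tau)]$, whose length is at most $2\bar d+1$; summing these gaps over $\tau$ and exchanging the order of summation bounds the total number of intervening virtual updates by the outstanding feedback, i.e.\ by ${\cal O}\big(\bar d(T+D)\big)$. The key point is that each elementary update moves the policy by only ${\cal O}(\eta)$ in $\ell_1$ \emph{in expectation}: although the worst-case step size is ${\cal O}(\eta\delta_1)={\cal O}(1/\bar d)$, one has $\mathbb{E}\big[\sum_k\tilde p_\tau(k)\min\{\delta_1,\tilde l_\tau(k)\}\big]={\cal O}(1)$ by \eqref{eq.expectedloss} and the ratio bounds, so with $\|\bm l_t\|_\infty\le1$ the expected drift is ${\cal O}\big(\eta\bar d(T+D)\big)$.

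The crux, and the step I expect to be hardest, is relating the virtual-slot regret to Lemma~\ref{lemma.innerreg} through the \emph{biased} estimator. Lemma~\ref{lemma.innerreg} with $\bm p=\bm p^*$ controls $\sum_{\tau}(\tilde{\bm p}_\tau-\bm p^*)^\top\min\{\tilde{\bm l}_\tau,\delta_1\}$, so I must bound the residual $\sum_{\tau}(\tilde{\bm p}_\tau-\bm p^*)^\top\big(\bm l_{\sigma(\tau)}-\min\{\tilde{\bm l}_\tau,\delta_1\}\big)$. By \eqref{eq.expectedloss} every term carries the bias factor $p_{\sigma(\tau)}(k)/p_{t(\tau)}(k)$, and Lemmas~\ref{lemma.small/big} and \ref{lemma.big/small} only certify that this factor is ${\cal O}(1)$; used naively, such a constant multiplicative distortion would make the residual grow linearly in $T$. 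The resolution is to take expectations and estimate $\mathbb{E}\big[p_{\sigma(\tau)}(k)/p_{t(\tau)}(k)-1\big]$ rather than the ratio: writing the ratio as $e^{X}$ where $X$ is a sum of at most $2\bar d$ per-step log-increments, the expected increment is ${\cal O}(\eta)$ while Lemmas~\ref{lemma.small/big} and \ref{lemma.big/small} force $|X|={\cal O}(1)$ (their hypotheses hold because the choices of $\delta_1,\delta_2$ give $\delta_2+\eta\delta_1=\frac{1}{2\bar d}$), whence $\mathbb{E}[e^{X}-1]\le\mathbb{E}[X]+{\cal O}(1)\,\mathbb{E}[X^2]={\cal O}(\eta\bar d)$ per slot. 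Summed, the bias contributes ${\cal O}\big(\eta\bar d(T+D)\big)$; the clipping excess $(\tilde l_\tau(k)-\delta_1)^+$ is controlled by the same tools, since $p_{t(\tau)}(k)\ge\delta_2/\big(K(1+\delta_2)\big)$ and $\delta_1$ is chosen large enough that $\hat l$ exceeds $\delta_1$ only rarely.

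It remains to bound the right-hand side of Lemma~\ref{lemma.innerreg} in expectation and optimize. With $\delta_2=1/(T+D)$ one has $T\ln(1+\delta_2)\le1$, so the first term is ${\cal O}\big((1+\ln K)/\eta\big)$; for the second, $\mathbb{E}\big[\tilde p_\tau(k)\tilde l_\tau(k)^2\big]=\tilde p_\tau(k)\,l_{\sigma(\tau)}(k)^2\,p_{\sigma(\tau)}(k)/p_{t(\tau)}(k)^2$, and applying the ${\cal O}(1)$ bounds of Lemmas~\ref{lemma.small/big} and \ref{lemma.big/small} to $\tilde p_\tau(k)/p_{t(\tau)}(k)$ and $p_{\sigma(\tau)}(k)/p_{t(\tau)}(k)$ leaves a sum of ${\cal O}(K)$ per virtual slot, i.e.\ ${\cal O}\big(\eta K(T+D)\big)$. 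Collecting the drift and bias, both ${\cal O}(\eta\bar d(T+D))$, the variance ${\cal O}(\eta K(T+D))$, and the leading ${\cal O}((1+\ln K)/\eta)$ gives a bound of the form ${\cal O}\big((1+\ln K)/\eta+\eta\bar dK(T+D)\big)$; substituting $\eta={\cal O}\big(\sqrt{(1+\ln K)/(\bar dK(T+D))}\big)$ balances the two and yields ${\rm Reg}_T^{\rm MAB}={\cal O}\big(\sqrt{(T+D)\bar dK(1+\ln K)}\big)$.
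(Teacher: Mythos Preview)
Your overall architecture—map to virtual slots, split into a drift term and a virtual-slot regret, feed the latter to Lemma~\ref{lemma.innerreg}, and optimize—is the same as the paper's. The substantive divergence is in how the \emph{bias} of the estimator is handled. The paper does not treat the bias as an additive residual at all: in its very first step (equation~\eqref{eq.perslotreg}) it rewrites the true per-slot regret as a \emph{multiplicative} distortion $\max_k p_{t+d_t}(k)/p_t(k)$ times the regret expressed in the estimated losses $\hat{\bm l}_{t|t+d_t}$, bounds that distortion by ${\cal O}(1)$ via Lemma~\ref{lemma.big/small}, and thereafter works exclusively with $\tilde{\bm l}_\tau$. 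Your decomposition instead keeps the \emph{true} losses $\bm l_{\sigma(\tau)}$ in both the drift and the virtual regret and then has to bridge $\sum_\tau(\tilde{\bm p}_\tau-\bm p^*)^\top\bm l_{\sigma(\tau)}$ to the Lemma~\ref{lemma.innerreg} quantity through an additive residual. Your drift step is actually cleaner than the paper's (true losses are bounded, so the $\ell_1$ bound via Lemma~\ref{lemma.small/big} suffices and is even deterministic, not merely ``in expectation''), but the residual step carries a real gap.

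The problematic claim is $\mathbb{E}[e^X-1]\le\mathbb{E}[X]+{\cal O}(1)\,\mathbb{E}[X^2]={\cal O}(\eta\bar d)$. Having $|X|={\cal O}(1)$ and $\mathbb{E}[X]={\cal O}(\eta\bar d)$ does \emph{not} force $\mathbb{E}[X^2]={\cal O}(\eta\bar d)$; each of the $\le 2\bar d$ log-increments is of deterministic size ${\cal O}(\eta\delta_1)={\cal O}(1/\bar d)$, so a priori $\mathbb{E}[X^2]$ is only ${\cal O}(1)$ and your inequality collapses to the useless ${\cal O}(1)$ per slot. The workable route is not a second-moment bound but the \emph{pointwise} first-order bound coming from Lemma~\ref{lemma.small/big} (more precisely its refinement, Lemma~5 in the supplement): one has $p_{\sigma(\tau)}(k)/p_{t(\tau)}(k)-1\le {\cal O}(1)\big(m\delta_2+\eta\sum_j\tilde c_j(k)\big)$ deterministically over the $m\le 2\bar d$ intervening virtual steps, and only then does taking expectations (using $\mathbb{E}[\tilde c_j(k)]={\cal O}(1)$ from the ratio bounds) give ${\cal O}(\eta\bar d)$. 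A second, separate issue you do not address is that in the residual $(\tilde{\bm p}_\tau-\bm p^*)$ and the bias factor $1-p_{\sigma(\tau)}(k)/p_{t(\tau)}(k)$ depend on the \emph{same} history, so you cannot simply multiply their expectations; for the $\bm p^*$ part this is harmless, but for the $\tilde{\bm p}_\tau$ part you need an extra step (e.g.\ replace $\tilde p_\tau(k)$ by $p_{t(\tau)}(k)$ at ${\cal O}(\eta\bar d)$ $\ell_1$-cost, after which the product telescopes to $(\bm p_{t(\tau)}-\bm p_{\sigma(\tau)})^\top\bm l_{\sigma(\tau)}$ and the drift bound applies again). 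With these two fixes your route does go through, but as written the bias step would not.
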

\begin{proof}
	See Sec. B.4 of the supplementary document.
\end{proof}

Theorem \ref{theo.reg} indicates that DEXP3 tends to perform well when $D$ is small, for example when $D = o(T)$. This can happen when the delay is sparse, that is, most of $d_t = 0$. 


\subsection{Regret analysis of DBGD}

Our analysis builds on the following assumptions.

\begin{assumption}\label{as.3}
For any $t$, the loss function $f_t$ is convex.
\end{assumption}
\begin{assumption}\label{as.4}
	For any $t$, $f_t$ is $L$-Lipschitz and $\beta$-smooth.
\end{assumption}
\begin{assumption}\label{as.5}
	The feasible set contains $\epsilon {\cal B}$, where ${\cal B}$ is the unit ball, and $\epsilon > 0$ is a predefined parameter. The diameter of ${\cal X}$ is $R$; that is, $\max_{\bm{x},\bm{y} \in {\cal X}} \|\bm{x}-\bm{y} \| = R$.
\end{assumption}
 
Assumptions \ref{as.3} - \ref{as.5} are common in online learning \cite{hazan2016}.
Assumption \ref{as.4} requires that $f_t(\cdot)$ is $L$-Lipschitz and $\beta$-smooth, which is needed to bound the bias of the estimator $\bm{g}_{s|t}$ \cite{agarwal2010}. Assumption \ref{as.5} is also typical in BCO \cite{hazan2016,agarwal2010,flaxman2005,duchi2015}. In addition, the counterpart of Assumption \ref{as.1} in BCO can readily follow from Assumptions \ref{as.4} and \ref{as.5}.

To start, the quality of the gradient estimator $\bm{g}_{s|t}$ is first evaluated. As stated, when $f_t(\cdot)$ is not linear, the estimator $\bm{g}_{s|t}$ is biased, but its bias is bounded.
\begin{lemma}\label{lemma.Gbound}
If Assumption \ref{as.4} holds, then for every $f_{s|t}(\bm{x}_{s|t})$, the corresponding estimator \eqref{eq.estgrad} satisfies
	\begin{equation}
		\| \bm{g}_{s|t}\| \!\leq\! \sqrt{K}L, ~\text{and}~~ \| \bm{g}_{s|t} - \nabla f_{s|t}(\bm{x}_{s|t}) \| \!\leq\! \frac{\beta \delta}{2}\sqrt{K}.
	\end{equation}
\end{lemma}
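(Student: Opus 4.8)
The plan is to establish both inequalities by working coordinate-by-coordinate and then aggregating through the $\ell_2$-norm, since the estimator $\bm{g}_{s|t}$ in \eqref{eq.estgrad} is assembled from $K$ scalar finite differences, one along each basis direction $\bm{e}_k$. Because these differences are decoupled across coordinates, no cross-terms arise and the $\ell_2$ aggregation is exact, so the whole proof reduces to bounding a single generic entry.

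First I would handle $\|\bm{g}_{s|t}\| \le \sqrt{K}L$. The $k$-th entry of $\bm{g}_{s|t}$ is $\frac{1}{\delta}\big(f_{s|t}(\bm{x}_{s|t}+\delta\bm{e}_k)-f_{s|t}(\bm{x}_{s|t})\big)$. Applying the $L$-Lipschitz property of Assumption \ref{as.4} to the pair of points $\bm{x}_{s|t}+\delta\bm{e}_k$ and $\bm{x}_{s|t}$, whose distance is $\|\delta\bm{e}_k\|=\delta$, gives $|f_{s|t}(\bm{x}_{s|t}+\delta\bm{e}_k)-f_{s|t}(\bm{x}_{s|t})|\le L\delta$, so each entry is bounded in magnitude by $L$. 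Summing the squares over the $K$ coordinates and taking the square root yields $\|\bm{g}_{s|t}\|\le\sqrt{K}L$.

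Next I would bound the bias $\|\bm{g}_{s|t}-\nabla f_{s|t}(\bm{x}_{s|t})\|$. The key observation is that $\nabla f_{s|t}(\bm{x}_{s|t})^\top(\delta\bm{e}_k)=\delta\,[\nabla f_{s|t}(\bm{x}_{s|t})]_k$, so the $k$-th entry of the difference equals $\frac{1}{\delta}\big(f_{s|t}(\bm{x}_{s|t}+\delta\bm{e}_k)-f_{s|t}(\bm{x}_{s|t})-\nabla f_{s|t}(\bm{x}_{s|t})^\top(\delta\bm{e}_k)\big)$. Invoking the $\beta$-smoothness of $f_{s|t}$ in the quadratic (descent-lemma) form $|f_{s|t}(\bm{y})-f_{s|t}(\bm{x})-\nabla f_{s|t}(\bm{x})^\top(\bm{y}-\bm{x})|\le\frac{\beta}{2}\|\bm{y}-\bm{x}\|^2$ with $\bm{y}=\bm{x}_{s|t}+\delta\bm{e}_k$ and $\bm{x}=\bm{x}_{s|t}$, the numerator is at most $\frac{\beta}{2}\delta^2$, hence each entry of the difference is bounded by $\frac{\beta\delta}{2}$. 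Aggregating over the $K$ coordinates in the $\ell_2$-norm gives $\|\bm{g}_{s|t}-\nabla f_{s|t}(\bm{x}_{s|t})\|\le\frac{\beta\delta}{2}\sqrt{K}$.

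The calculation is otherwise routine; there is no genuine obstacle, since the estimates decouple across coordinates. The only point requiring care is the second inequality, where one must use the quadratic consequence of $\beta$-smoothness rather than the gradient-Lipschitz form, and must correctly identify the directional derivative along $\bm{e}_k$ with the $k$-th partial derivative so that the linear term in the Taylor expansion cancels the finite-difference approximation to first order.
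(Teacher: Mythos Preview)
Your proposal is correct and follows essentially the same approach as the paper: bound each coordinate of $\bm{g}_{s|t}$ (respectively of $\bm{g}_{s|t}-\nabla f_{s|t}(\bm{x}_{s|t})$) using the $L$-Lipschitz (respectively $\beta$-smoothness/descent-lemma) property, then aggregate across the $K$ coordinates via the $\ell_2$-norm to pick up the $\sqrt{K}$ factor. If anything, your version is slightly more careful in that you explicitly take absolute values and invoke the two-sided quadratic bound from smoothness, whereas the paper only writes the one-sided inequality before passing to the norm.
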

\begin{proof}
	See Sec. C.1 of the supplementary document. 
\end{proof}

Lemma \ref{lemma.Gbound} suggests that with $\delta$ small enough, the bias of $\bm{g}_{s|t}$ will not be too large. Then, the following lemma shows the relation among $\tilde{\bm{g}}_\tau$, $\tilde{\bm{x}}_{\tau}$, and $\tilde{\bm{x}}_{\tau+1}$, in a virtual time slot.
\begin{lemma}\label{lemma.descent}
Under Assumptions \ref{as.4} and \ref{as.5}, the update in a virtual slot $\tau$ guarantees that
for any $\bm{x} \in {\cal X}_\delta$, we have
	\begin{equation}
		\tilde{\bm{g}}_\tau^\top \big( \tilde{\bm{x}}_\tau - \bm{x} \big) \leq \frac{\eta}{2} KL^2 +\frac{ \big{\|} \tilde{\bm{x}}_{\tau} - \bm{x} \big{\|}^2 - \big{\|} \tilde{\bm{x}}_{\tau+1} - \bm{x} \big{\|}^2}{2\eta}. 
	\end{equation}
\end{lemma}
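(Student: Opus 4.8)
The plan is to recognize this as the standard one-step descent inequality for online projected gradient descent, adapted to the virtual-slot indexing. Recall that on the virtual time horizon the update reads $\tilde{\bm{x}}_{\tau+1} = \Pi_{{\cal X}_\delta}[\tilde{\bm{x}}_\tau - \eta \tilde{\bm{g}}_\tau]$, so the entire argument hinges on comparing $\tilde{\bm{x}}_{\tau+1}$ to an arbitrary competitor $\bm{x} \in {\cal X}_\delta$ and exploiting that the projection onto a convex set does not increase distance to points already inside that set.

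First I would invoke the non-expansiveness of $\Pi_{{\cal X}_\delta}$. Since $0 \leq \delta < 1$ guarantees that ${\cal X}_\delta$ is convex (as noted when the set was introduced) and since $\bm{x} \in {\cal X}_\delta$, projecting the point $\tilde{\bm{x}}_\tau - \eta \tilde{\bm{g}}_\tau$ onto ${\cal X}_\delta$ can only bring it closer to $\bm{x}$, giving
\begin{equation}
\big\| \tilde{\bm{x}}_{\tau+1} - \bm{x} \big\|^2 = \big\| \Pi_{{\cal X}_\delta}[\tilde{\bm{x}}_\tau - \eta \tilde{\bm{g}}_\tau] - \bm{x} \big\|^2 \leq \big\| \tilde{\bm{x}}_\tau - \eta \tilde{\bm{g}}_\tau - \bm{x} \big\|^2 .
\end{equation}
Next I would expand the squared norm on the right-hand side as $\| \tilde{\bm{x}}_\tau - \bm{x} \|^2 - 2\eta\, \tilde{\bm{g}}_\tau^\top(\tilde{\bm{x}}_\tau - \bm{x}) + \eta^2 \| \tilde{\bm{g}}_\tau \|^2$, then isolate the inner-product term and divide through by $2\eta$. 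This yields
\begin{equation}
\tilde{\bm{g}}_\tau^\top \big( \tilde{\bm{x}}_\tau - \bm{x} \big) \leq \frac{\| \tilde{\bm{x}}_\tau - \bm{x} \|^2 - \| \tilde{\bm{x}}_{\tau+1} - \bm{x} \|^2}{2\eta} + \frac{\eta}{2} \| \tilde{\bm{g}}_\tau \|^2 ,
\end{equation}
which already matches the claimed inequality up to the last term.

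The final step is to control the gradient-norm term. Here I would appeal directly to Lemma~\ref{lemma.Gbound}, which bounds every virtual-slot gradient estimate by $\| \tilde{\bm{g}}_\tau \| \leq \sqrt{K}L$ under Assumption~\ref{as.4}; squaring gives $\| \tilde{\bm{g}}_\tau \|^2 \leq K L^2$, and substituting this deterministic bound into the residual term produces exactly $\frac{\eta}{2} K L^2$, completing the argument. I do not expect a genuine obstacle in this proof: it is the textbook projected-gradient descent lemma, and the only paper-specific ingredients are (i) that the virtual-slot recursion has exactly one projected step, so the bound applies verbatim per virtual slot regardless of how many real-slot feedbacks were folded into it, and (ii) the gradient magnitude bound supplied by Lemma~\ref{lemma.Gbound}, which uses only the Lipschitz constant $L$ and hence does not involve the smoothness-induced bias. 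The mild care point worth flagging is simply ensuring $\bm{x} \in {\cal X}_\delta$ (not merely $\bm{x} \in {\cal X}$) so that the non-expansiveness inequality is legitimate, which is why the statement is quantified over $\bm{x} \in {\cal X}_\delta$.
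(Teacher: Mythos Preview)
Your proposal is correct and follows essentially the same route as the paper: apply non-expansiveness of the projection onto ${\cal X}_\delta$, expand the squared norm, rearrange to isolate $\tilde{\bm{g}}_\tau^\top(\tilde{\bm{x}}_\tau-\bm{x})$, and finish by invoking the bound $\|\tilde{\bm{g}}_\tau\|\le \sqrt{K}L$ from Lemma~\ref{lemma.Gbound}. The paper's write-up is slightly terser (it stops at $\eta^2\|\tilde{\bm{g}}_\tau\|^2$ and says ``rearranging completes the proof''), but the argument is identical.
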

\begin{proof}
	See Sec. C.2 of the supplementary document.
\end{proof}

Lemma \ref{lemma.descent} is the counterpart of the gradient descent estimate in the non-delayed and full-information setting \cite[Theorem 3.1]{hazan2016}, which demonstrates that DBGD is BGD running on the virtual slots. 
Finally, leveraging these results, the regret bound follows next.
\begin{theorem}\label{theo.reg2}
Suppose Assumptions \ref{as.3} - \ref{as.5} hold. Choosing $\delta= {\cal O}\big((T+D)^{-1}\big)$, and $\eta = {\cal O}\big((T+D)^{-1/2}\big)$, the DBGD guarantees that	the regret is bounded, that is
	 \begin{equation}
\!{\rm Reg}_T^{{\rm BCO}}\! =\! \sum_{t=1}^T f_t(\bm{x}_t) - \sum_{t=1}^Tf_t(\bm{x}^*) = {\cal O} \big( \sqrt{T+D} \big)\!
	\end{equation}
	where $D:= \sum_{t=1}^T d_t$ is the overall delay.
\end{theorem}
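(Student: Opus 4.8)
The plan is to reduce the deterministic DBGD regret to a textbook online-gradient-descent regret run over the virtual time horizon, plus two controllable residuals: a bias term coming from the finite-difference gradient estimate $\bm{g}_t$, and a delay term that measures how far the iterate drifts while feedback is still in transit. Since DBGD uses no internal randomization, the whole argument is deterministic and no expectation bookkeeping is needed.

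First I would pass from function values to a linear surrogate. Because $\bm{x}^*\in{\cal X}$ while the iterates live in the shrunk set ${\cal X}_\delta=(1-\delta){\cal X}$, I compare against $\bm{x}^*_\delta:=(1-\delta)\bm{x}^*\in{\cal X}_\delta$ and absorb the replacement via $L$-Lipschitzness (Assumption \ref{as.4}) together with $\|\bm{x}^*\|\le R$ (which follows from $\bm{0}\in{\cal X}$ in Assumption \ref{as.5}), so that $\sum_t|f_t(\bm{x}^*_\delta)-f_t(\bm{x}^*)|\le TLR\delta={\cal O}(1)$ for $\delta={\cal O}((T+D)^{-1})$. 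Convexity (Assumption \ref{as.3}) then gives $f_t(\bm{x}_t)-f_t(\bm{x}^*_\delta)\le\nabla f_t(\bm{x}_t)^\top(\bm{x}_t-\bm{x}^*_\delta)$, and I split $\nabla f_t(\bm{x}_t)=\bm{g}_t+(\nabla f_t(\bm{x}_t)-\bm{g}_t)$. The bias part is controlled by Lemma \ref{lemma.Gbound}, $\|\nabla f_t(\bm{x}_t)-\bm{g}_t\|\le\frac{\beta\delta}{2}\sqrt{K}$, so it contributes at most $\frac{\beta\delta\sqrt{K}R}{2}T={\cal O}(\sqrt{K})$, again lower order. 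This isolates the core quantity $\sum_{t=1}^T\bm{g}_t^\top(\bm{x}_t-\bm{x}^*_\delta)$.

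Next I would transport this sum onto virtual slots. Writing $\bm{x}_t=\tilde{\bm{x}}_{\nu_t}$ with $\nu_t=L_{t-1}+1$ (the iterate active when $\bm{x}_t$ is played) and $\bm{g}_t=\tilde{\bm{g}}_{\tau_t}$ (processed at virtual slot $\tau_t$), the map $t\mapsto\tau_t$ is a bijection of $\{1,\dots,T\}$, so $\sum_t\bm{g}_t^\top(\bm{x}_t-\bm{x}^*_\delta)=\sum_\tau\tilde{\bm{g}}_\tau^\top(\tilde{\bm{x}}_{\nu_{t(\tau)}}-\bm{x}^*_\delta)$. Adding and subtracting $\tilde{\bm{x}}_\tau$ splits this into a virtual-slot regret $\sum_\tau\tilde{\bm{g}}_\tau^\top(\tilde{\bm{x}}_\tau-\bm{x}^*_\delta)$ and a delay term $\sum_\tau\tilde{\bm{g}}_\tau^\top(\tilde{\bm{x}}_{\nu_{t(\tau)}}-\tilde{\bm{x}}_\tau)$. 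The first is handled by summing Lemma \ref{lemma.descent} over $\tau=1,\dots,T$; the telescoping collapses it to $\frac{\eta}{2}TKL^2+\frac{\|\tilde{\bm{x}}_1-\bm{x}^*_\delta\|^2}{2\eta}\le\frac{\eta}{2}TKL^2+\frac{R^2}{2\eta}$.

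The delay term is where the real work lies, and I expect it to be the main obstacle. Using the Cauchy--Schwarz inequality, $\|\tilde{\bm{g}}_\tau\|\le\sqrt{K}L$ (Lemma \ref{lemma.Gbound}), and the nonexpansiveness of $\Pi_{{\cal X}_\delta}$, which gives $\|\tilde{\bm{x}}_{\tau+1}-\tilde{\bm{x}}_\tau\|\le\eta\|\tilde{\bm{g}}_\tau\|\le\eta\sqrt{K}L$ per virtual step, the term is at most $\eta KL^2\sum_t(\tau_t-\nu_t)$, since $\nu_t\le\tau_t$ and there are $\tau_t-\nu_t$ unit-bounded virtual steps between them. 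The crux is the exact evaluation of $\sum_t(\tau_t-\nu_t)$. I would rewrite it as $\sum_{s,s'}\mathds{1}[\nu_s\le\tau_{s'}<\tau_s]$ and, summing over $s'$ first, identify the inner count with the number of actions played but not yet processed at the instant feedback $s'$ is handled; since one action is played per slot and all feedback arrives by slot $T$, this count equals $a_{s'}-\tau_{s'}$ with $a_{s'}=s'+d_{s'}$. Summing yields $\sum_{s'}(s'+d_{s'}-\tau_{s'})$, and the cancellation $\sum_{s'}s'=\sum_{s'}\tau_{s'}=T(T+1)/2$ (both are permutations of $\{1,\dots,T\}$) collapses it to exactly $\sum_{s'}d_{s'}=D$, so the delay term is bounded by $\eta KL^2D$. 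Collecting the virtual regret, the delay term, and the ${\cal O}(\sqrt{K})$ bias and ${\cal O}(1)$ shrinkage residuals gives ${\rm Reg}_T^{\rm BCO}\le\frac{\eta}{2}KL^2(T+2D)+\frac{R^2}{2\eta}+{\cal O}(\sqrt{K})$, and choosing $\eta={\cal O}((T+D)^{-1/2})$ balances the two dominant terms to deliver ${\cal O}(\sqrt{T+D})$.
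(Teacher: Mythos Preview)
Your argument is correct and follows the same skeleton as the paper's proof: handle the finite-difference bias, project the comparator into ${\cal X}_\delta$, transport everything onto virtual slots, and split into a standard OGD telescoping term (Lemma~\ref{lemma.descent}) plus a drift term controlled by the total delay $D$. Your identity $\sum_t(\tau_t-\nu_t)=D$ is exactly the paper's Lemma~\ref{lemma.index}(ii), just argued via double counting rather than the paper's direct computation $\sum_t(t-1-L_{t-1})=\sum_t d_t$.

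The one substantive difference is that the paper introduces an auxiliary surrogate $h_t(\bm{x})=f_t(\bm{x})+(\bm{g}_t-\nabla f_t(\bm{x}_t))^\top\bm{x}$, so that $\nabla h_t(\bm{x}_t)=\bm{g}_t$ exactly, and then proves an extra lemma establishing that $h_t$ is Lipschitz and $\beta$-smooth. Because $\nabla h_t$ is evaluated at $\bm{x}_t=\tilde{\bm{x}}_{\tau-\tilde s_\tau}$ but the virtual-slot descent lemma needs the gradient at $\tilde{\bm{x}}_\tau$, the paper must invoke $\beta$-smoothness once more to bound $\|\nabla\tilde h_\tau(\tilde{\bm{x}}_\tau)-\tilde{\bm{g}}_\tau\|\le\beta\|\tilde{\bm{x}}_\tau-\tilde{\bm{x}}_{\tau-\tilde s_\tau}\|$, producing an additional $\eta D\sqrt{K}L\beta R$ term. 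Your route sidesteps this entirely: by splitting $\nabla f_t(\bm{x}_t)=\bm{g}_t+(\nabla f_t(\bm{x}_t)-\bm{g}_t)$ immediately and carrying $\bm{g}_t$ through the virtual-slot bijection, you never need the surrogate or its smoothness, and the resulting bound is marginally tighter. Both approaches land on the same ${\cal O}\big(\sqrt{K(T+D)}\big)$ rate after choosing $\eta$.
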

\begin{proof}
	See Sec. C.3 of the supplementary document.
\end{proof}

For the slightly different regret definition in \cite{agarwal2010}, DBGD achieves the same regret bound.
\begin{corollary}\label{coro.reg}
Upon defining $\bm{x}_{t,0}:= \bm{x}_t$ and $\bm{x}_{t,k}: =\bm{x}_t+\delta \bm{e}_k$, choosing $\delta= {\cal O}\big((T+D)^{-1}\big)$, and $\eta = {\cal O}\big((T+D)^{-1/2}\big)$, the DBGD also guarantees that
	\begin{align}
		\frac{1}{K\!+\!1}\sum_{t=1}^T\sum_{k=0}^{K} \! f_t(\bm{x}_{t,k}) \!-\! \sum_{t=1}^T \! f_t(\bm{x}^*) = {\cal O}\big( \sqrt{T\!+\!D} \big).
	\end{align}
\end{corollary}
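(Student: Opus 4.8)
The plan is to reduce Corollary \ref{coro.reg} to Theorem \ref{theo.reg2} by isolating the discrepancy between averaging the loss over all $K+1$ queried points and evaluating it only at the played action $\bm{x}_t$. First I would rewrite the alternative objective, using $\bm{x}_{t,0} = \bm{x}_t$ and $\bm{x}_{t,k} = \bm{x}_t + \delta \bm{e}_k$, as
\[
\frac{1}{K+1}\sum_{t=1}^T\sum_{k=0}^K f_t(\bm{x}_{t,k}) = \frac{1}{K+1}\Big[\sum_{t=1}^T f_t(\bm{x}_t) + \sum_{k=1}^K \sum_{t=1}^T f_t(\bm{x}_t + \delta \bm{e}_k)\Big].
\]
Subtracting $\sum_{t=1}^T f_t(\bm{x}^*)$ from both sides and comparing with the BCO regret in \eqref{eq.reg2}, the left-hand side of the corollary equals $\text{Reg}_T^{\text{BCO}}$ plus the correction term $\frac{1}{K+1}\sum_{k=1}^K \sum_{t=1}^T \big(f_t(\bm{x}_t + \delta \bm{e}_k) - f_t(\bm{x}_t)\big)$, since the $\bm{x}_{t,0}$ contribution recombines with the $\frac{K}{K+1}$ weight to reproduce $\sum_t f_t(\bm{x}_t)$.

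Next I would bound the correction term by invoking the $L$-Lipschitz property from Assumption \ref{as.4}: each summand obeys $|f_t(\bm{x}_t + \delta \bm{e}_k) - f_t(\bm{x}_t)| \leq L\|\delta \bm{e}_k\| = L\delta$, so the whole term is at most $\frac{K}{K+1} L\delta T \leq L\delta T$ in magnitude. With the prescribed $\delta = {\cal O}\big((T+D)^{-1}\big)$ and since $T \leq T+D$, this gives ${\cal O}(L\delta T) = {\cal O}\big(T/(T+D)\big) = {\cal O}(1)$, which is dominated by $\sqrt{T+D}$. Finally I would apply Theorem \ref{theo.reg2}, which already yields $\text{Reg}_T^{\text{BCO}} = {\cal O}(\sqrt{T+D})$ under the very same choices of $\delta$ and $\eta$; adding the ${\cal O}(1)$ correction preserves the order and delivers the claimed ${\cal O}(\sqrt{T+D})$ bound.

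There is no substantive obstacle in this argument; the only point demanding care is confirming that the perturbation radius is small enough that the aggregate query-point discrepancy remains ${\cal O}(1)$ rather than scaling with $T$, and this is exactly what $\delta = {\cal O}\big((T+D)^{-1}\big)$ guarantees. I would also note, for completeness, that the queried points $\bm{x}_t + \delta \bm{e}_k$ stay feasible because $\bm{x}_t$ is confined to the shrunk set ${\cal X}_\delta$, so every $f_t(\bm{x}_{t,k})$ in the average is well defined.
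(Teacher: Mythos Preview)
Your proof is correct and follows the same high-level decomposition as the paper: split the alternative regret into $\mathrm{Reg}_T^{\mathrm{BCO}}$ plus the correction term $\frac{1}{K+1}\sum_{t=1}^T\sum_{k=1}^K\big(f_t(\bm{x}_t+\delta\bm{e}_k)-f_t(\bm{x}_t)\big)$, bound the correction, and invoke Theorem~\ref{theo.reg2}.

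Where you diverge from the paper is in how the correction is bounded. The paper uses $\beta$-smoothness to write $f_t(\bm{x}_{t,k})-f_t(\bm{x}_t)\le \delta\|\nabla f_t(\bm{x}_t)\|+\tfrac{\beta\delta^2}{2}$ and then appeals to Lemma~\ref{lemma.Gbound} to control $\|\nabla f_t(\bm{x}_t)\|\le \sqrt{K}L+\tfrac{\beta\delta\sqrt{K}}{2}$, arriving at a correction of order ${\cal O}(\sqrt{K})$. You instead apply the $L$-Lipschitz property from Assumption~\ref{as.4} directly to get $|f_t(\bm{x}_{t,k})-f_t(\bm{x}_t)|\le L\delta$, giving a correction of order ${\cal O}(1)$. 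Your route is more elementary and yields a slightly tighter constant; the paper's detour through smoothness and the gradient-estimator lemma is unnecessary here, since the Lipschitz assumption is already available. Both bounds are of course negligible next to $\sqrt{T+D}$, so the final conclusion is unaffected.
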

\begin{proof}
	See Sec. C.4 of the supplementary document.
\end{proof}

The ${\cal O} \big( \sqrt{T+D} \big)$ regret in Theorem \ref{theo.reg2} and Corollary \ref{coro.reg} recovers the bound of delayed online learning in the full information setup \cite{quanrud2015,langford2009,joulani2016} with only bandit feedback. 

\section{Numerical tests}


\begin{figure*}[t]
\begin{tabular}{cc}
\hspace*{-2ex}
\includegraphics[width=8.5cm]{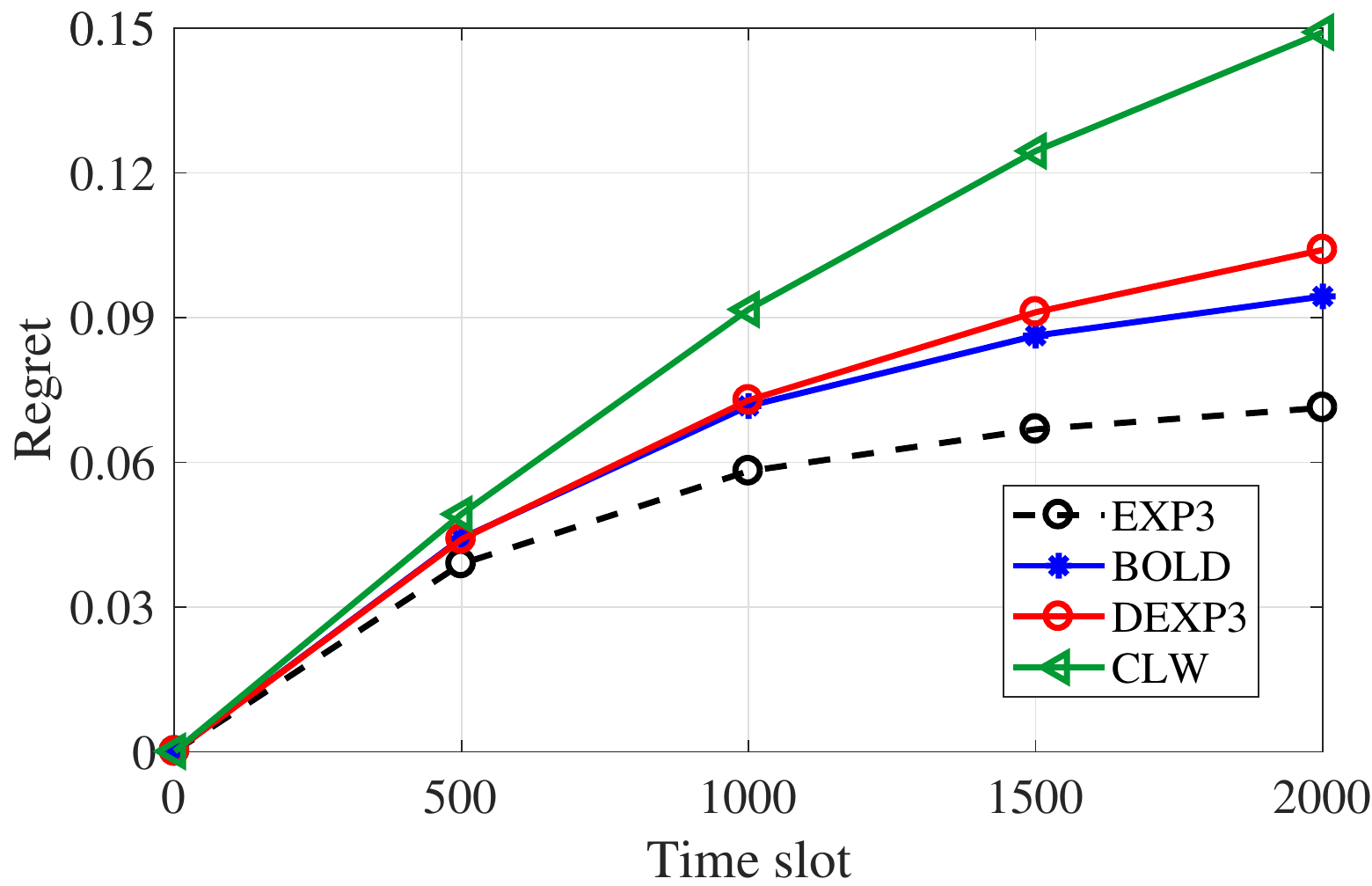}&
\hspace*{-2ex}
\includegraphics[width=8.5cm]{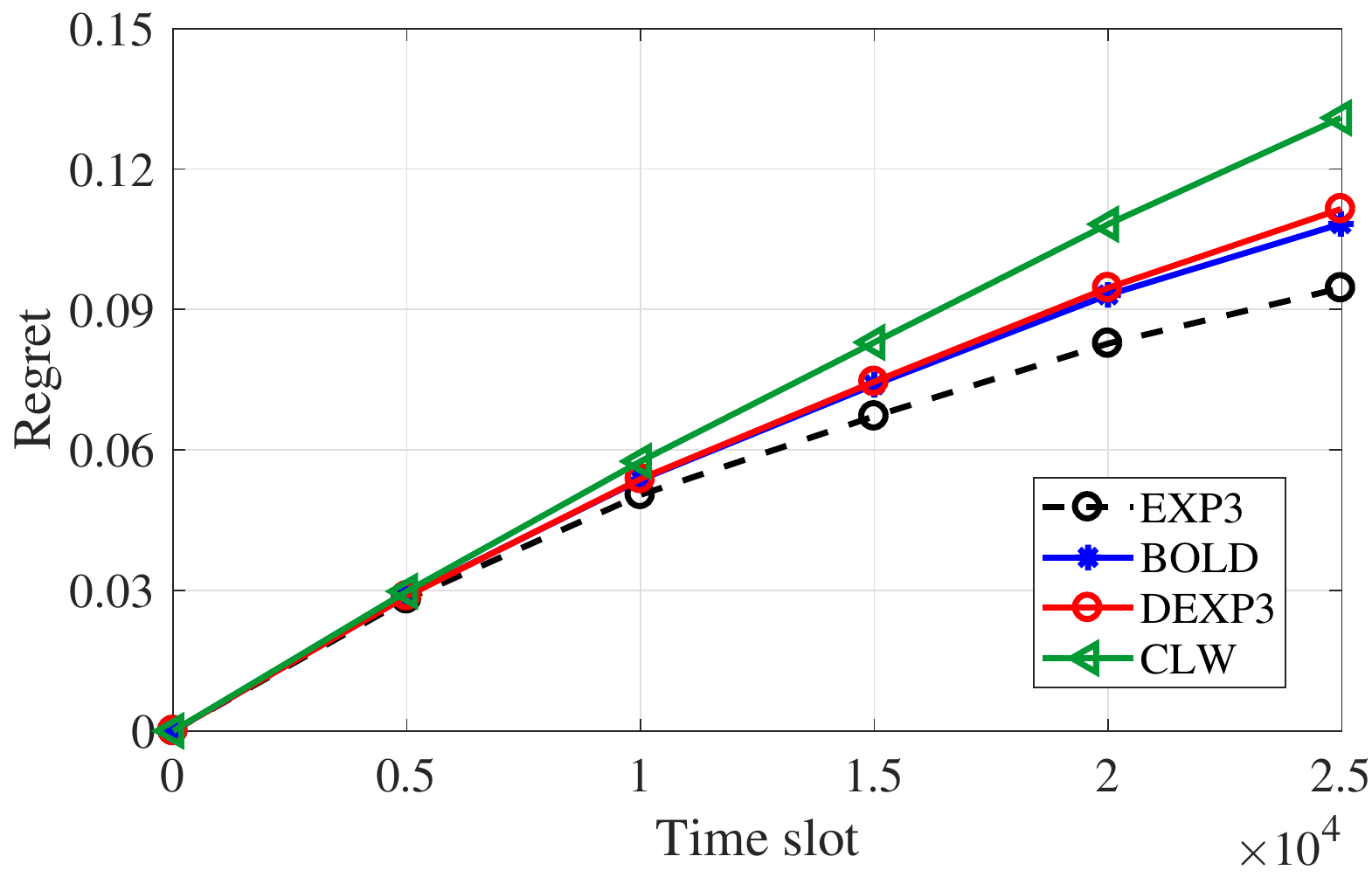}
\\
(a)& (b)
\end{tabular}
\caption{(a) Regret of DEXP3 using synthetic data; (b) Regret of DEXP3 using real data.} \label{fig.mab}
\end{figure*}

%

\begin{figure*}[t]
\begin{tabular}{cc}
\hspace*{-2ex}
\includegraphics[width=8.5cm]{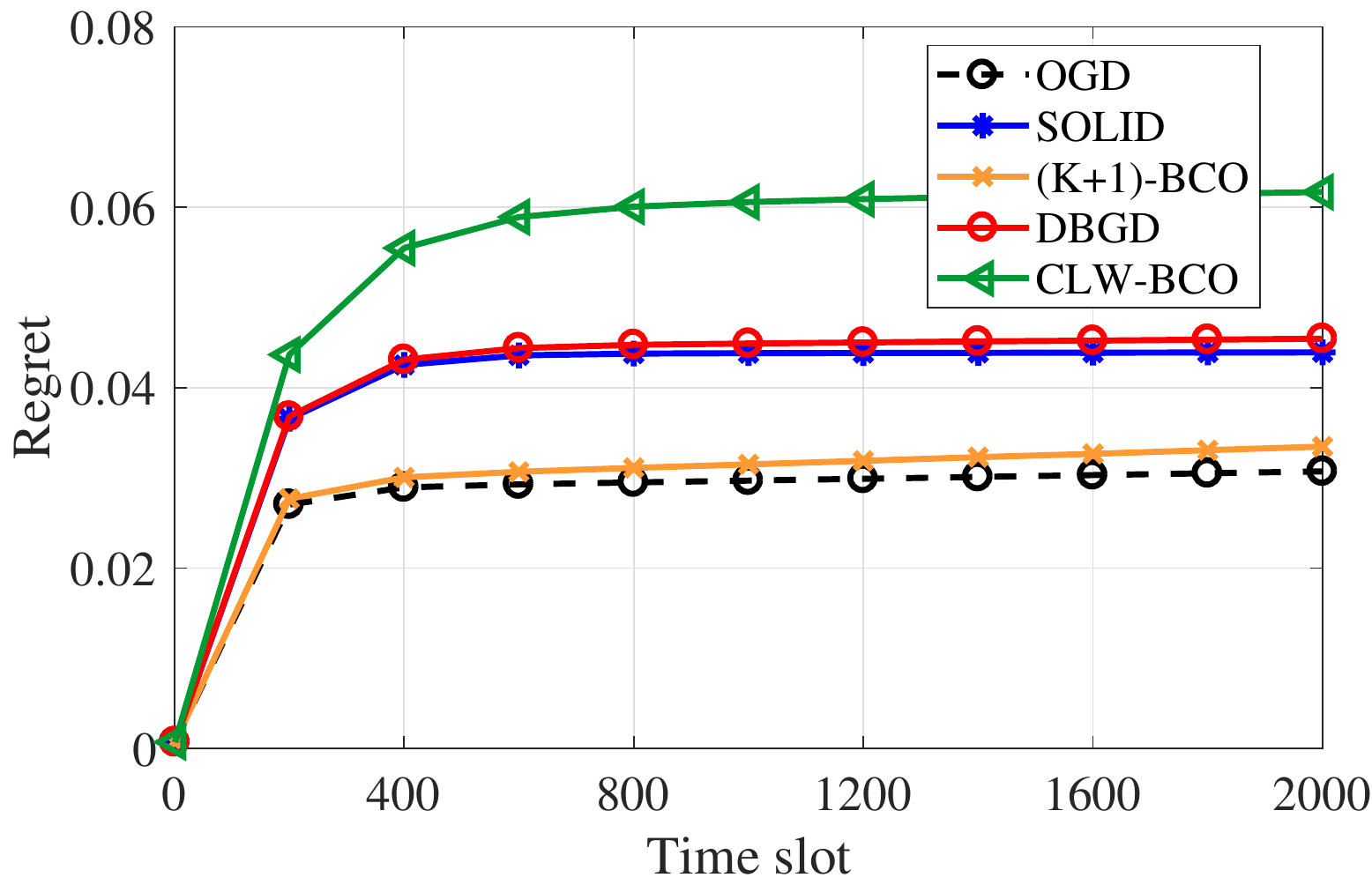}&
\hspace*{-2ex}
\includegraphics[width=8.5cm]{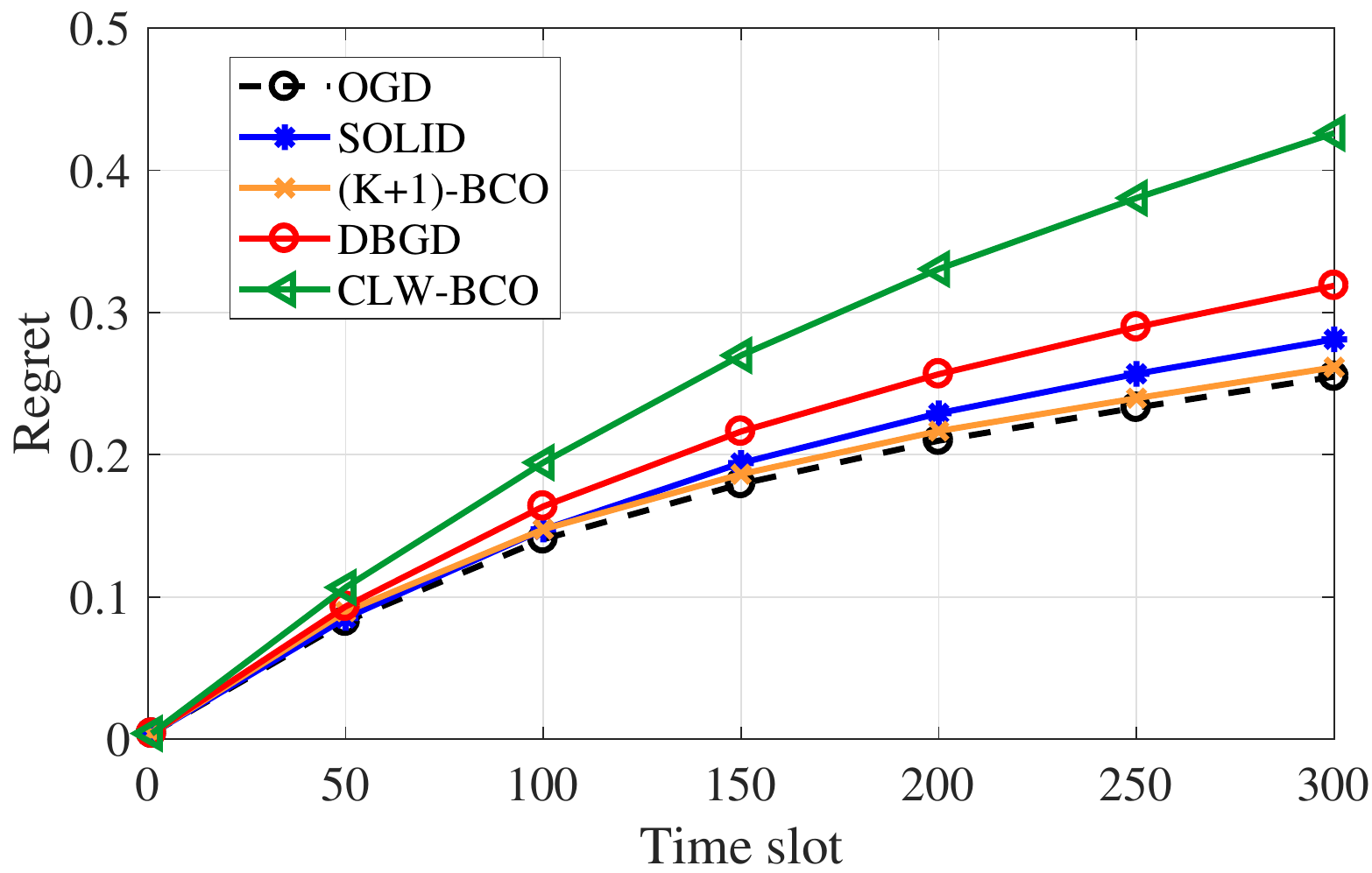}
\\
(a)& (b)
\end{tabular}
\caption{(a) Regret of DBGD using synthetic data; (b) Regret of DBGD using real data.} \label{fig.bco}
\end{figure*}

%
%

In this section, experiments are conducted to corroborate the validity of the novel DEXP3 and DBGD schemes.

In synthetic data tests, we consider $T = 2,000$ slots. Delays are periodically generated with period $1,2,1,0,3,0,2$, with the delay of the last few slots slightly modified to ensure that all feedback arrives at the end of $T = 2,000$, resulting in the overall delay $D = 2,569$.

\paragraph{DEXP3 synthetic tests.}
Consider $K = 5$ arms, and losses generated with a sudden change. Specifically, for $t\in [1,500]$, we have $l_t(k) = 0.4 k |\cos t|$ per arm $k$, while for the rest of the slots, $l_t(k) = 0.2 k |\sin (2t)|$. To benchmark the novel DEXP3, we use: i) the standard EXP3 for non-delayed MAB~\cite{auer2002a};  ii) the BOLD for delayed MAB with known delay~\cite{joulani2013}; and, iii) CLW to deal with the more difficult setting in~\cite{cesa2018}. The instantaneous accumulated regret (normalized by $T$) versus time slots is plotted in Fig. \ref{fig.mab} (a). The gap between BOLD and EXP3, illustrates that even with a known delay, the learner suffers from an extra regret. The small gap between DEXP3 and BOLD further demonstrates the estimator bias [cf. \eqref{eq.estloss}] causing a slightly larger regret, which is the price paid for the unknown delay. 
Compared with CLW, DEXP3 performs significantly better since DEXP3 can leverage more information relative to the non-anonymous feedback used by CLW.


\paragraph{DEXP3 real tests.}
We also tested DEXP3 using the \textit{Jester Online Joke Recommender System} dataset \cite{goldberg2001}, where $T = 24,983$ users rate $K = 100$ different jokes from $0$ (not funny) to $1$ (very funny). The goal is to recommend one joke per slot $t$ to amuse the users. The system performance is evaluated by $\bm{l}_t = (\bm{1} - \text{the score of this joke})$. In this test, we assign a random score within range $[0,1]$ for the missing entries of this dataset. The delay is generated periodically as in the synthetic test, resulting in $D = 32,119$. Similar to the synthetic test, it can be observed that DEXP3 incurs slightly larger regret than BOLD due to the unknown delay, but outperforms the recently developed CLW.

\paragraph{DBGD synthetic tests.}
Consider that $K = 5$, and the feasible set ${\cal X} \in \mathbb{R}^5$ is the unit ball, i.e., ${\cal X}:= \big{ \{} \|\bm{x} \| \leq 1 \big{\}} $. The loss function at slot $t$ is generated as $f_t(\bm{x}) = a_t \|\bm{x} \|^2 + \bm{b}_t^\top \bm{x}$, where $a_t = \cos(3t)+3$, while $b_t(1) = 2 \sin(2t) + 1$,$b_t(2) = \cos(2t) - 2$, $b_t(3) = \sin(2t)$, $b_t(4) = 2\sin(2t) -2$, $b_t(4) = 2\sin(2t) -2$, and $b_t(5) = 2$. To see the influence of the bandit feedback and the unknown delay, we consider the following benchmarks: i) the standard OGD \cite{zinkevich2003} in full-information and non-delayed setting; ii) the $(K+1)$-point feedback BCO \cite{agarwal2010} for non-delayed BCO; iii) the SOLID for delayed full-information OCO~\cite{joulani2016}; and iv) the CLW-BCO with the inner algorithm relying on $(K+1)$-point feedback BCO~\cite{cesa2018}. The instantaneous accumulated regret (normalized by $T$) versus time slots is plotted in Fig. \ref{fig.bco} (a). This test shows that DBGD performs almost as good as SOLID, and the gap between DBGD/SOLID and OGD/$K+1$-BCO is due to the delay. 
The regret of DBGD again significantly outperforms that of CLW-BCO, demonstrating the efficiency of DBGD. 

\paragraph{DBGD real tests.}
To further illustrate the merits of DBGD, we conduct tests dealing with online regression applied to a \textit{yacht hydrodynamics} dataset \cite{dua2017}, which contains $T = 308$ data with $K = 6$ features. Per slot $t$, the regressor $\bm{x}_t \in \mathbb{R}^6$ predicts based on the feature $\bm{w}_t$ before its label $y_t$ is revealed. The loss function for slot $t$ is $f_t(\bm{x}_t) = \frac{1}{2}(y_t - \bm{x}_t^\top \bm{w}_t )^2$. The delay is generated periodically as before, and cumulatively it is $D = 394$. The instantaneous accumulated regret (normalized by $T$) versus time slots is plotted in Fig. \ref{fig.bco} (b). Again, DBGD outperforms CLW-BCO considerably. 
Comparing with the regret performance of DBGD and SOLID, we can safely deduce the influence delay has on bandit feedback. 

\section{Conclusions}
Bandit online learning with unknown delays, including non-stochastic MAB and BCO, was studied in this paper. Different from settings where the experienced delay is known in bandit online learning, the unknown delay prevents a simple gradient estimate that is needed by the iterative algorithm. To address this issue, a biased loss estimator as well as a deterministic one were developed for non-stochastic MAB and BCO. Leveraging the proposed loss estimators, the so-termed DEXP3 and DBGD algorithms were developed. The regret of both DEXP3 and DBGD were established analytically. Numerical tests on synthetic and real datasets confirmed the performance gain of DEXP3 and DBGD relative to state-of-the-art approaches.

\newpage
\bibliographystyle{IEEEtranS}
\bibliography{myabrv,datactr}

\clearpage
\onecolumn
\appendix
\begin{center}
{\large \bf Supplementary Document for ``Bandit Online Learning with Unknown Delays''}
\end{center}

\section{Real to virtual slot mapping}\label{sec.A}

For the analysis, let $t(\tau)$ denote the real slot when the real loss $\bm{l}_{t(\tau)}$ corresponding to $\tilde{\bm{l}}_\tau$ was incurred, i.e., $\tilde{\bm{l}}_\tau = \hat{\bm{l}}_{t(\tau)|t(\tau) + d_{t(\tau)}} $. Also define an auxiliary variable $\tilde{s}_\tau = \tau - 1 - L_{t(\tau)-1}$. See an example in Fig. \ref{fig.RVlearner_appdix} and Table \ref{tab.para}.

\begin{lemma}\label{lemma.index}
The following relations hold: i) $\tilde{s}_\tau \geq 0, ~ \forall \tau$; ii) $\sum_{\tau = 1}^T \tilde{s}_\tau = \sum_{t=1}^T d_t$; and, iii) if $\max_t d_t \leq \bar{d}$, we have
$\tilde{s}_\tau \leq 2 \bar{d}, ~ \forall \tau$.
\end{lemma}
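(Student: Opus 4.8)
The plan is to handle the three claims separately, leaning on two structural facts. First, $\tau\mapsto t(\tau)$ is a \emph{bijection} from $\{1,\dots,T\}$ onto itself: each real slot $t$ incurs exactly one loss $\bm{l}_t$, and that loss is processed at exactly one virtual slot. Second, the counter $L_{t-1}=\sum_{v=1}^{t-1}|\mathcal{L}_v|$ is non-decreasing in $t$. Throughout I would write $o(\tau):=t(\tau)+d_{t(\tau)}$ for the real slot at which $\tilde{\bm{l}}_\tau$ is observed, and use that the virtual slots processed during observation slot $v$ are exactly $L_{v-1}+1,\dots,L_v$, so that $L_{o(\tau)-1}+1\le \tau\le L_{o(\tau)}$.

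For claim i), I would note that $d_{t(\tau)}\ge 0$ gives $o(\tau)\ge t(\tau)$, hence by monotonicity $L_{o(\tau)-1}\ge L_{t(\tau)-1}$. Combining with the lower bound $\tau\ge L_{o(\tau)-1}+1$ yields $\tau-1-L_{t(\tau)-1}\ge 0$, i.e.\ $\tilde{s}_\tau\ge 0$.

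For claim ii), I would sum the definition and exploit the bijection. Since $\sum_{\tau=1}^T(\tau-1)=\tfrac{T(T-1)}{2}$ and $\{t(\tau)\}_{\tau=1}^T=\{1,\dots,T\}$, we have $\sum_\tau L_{t(\tau)-1}=\sum_{t=1}^T L_{t-1}$, so $\sum_\tau \tilde{s}_\tau = \tfrac{T(T-1)}{2}-\sum_{t=1}^T L_{t-1}$. The remaining term I would rewrite by swapping the order of summation, $\sum_{t=1}^T\sum_{v=1}^{t-1}|\mathcal{L}_v|=\sum_{v=1}^T (T-v)\,|\mathcal{L}_v|$, and then re-index the observation slots by their originating slots through $v=s+d_s$ (each observed loss has a unique source $s$, and $s+d_s\le T$ by the convention that all feedback arrives by slot $T$). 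Writing $|\mathcal{L}_v|=|\{s:s+d_s=v\}|$ gives $\sum_{v=1}^T(T-v)|\mathcal{L}_v|=\sum_{s=1}^T\big(T-s-d_s\big)=\tfrac{T(T-1)}{2}-D$, and substituting back leaves $\sum_\tau \tilde{s}_\tau=D=\sum_{t=1}^T d_t$.

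For claim iii), which I expect to be the crux, I would bound $\tilde{s}_\tau=\tau-1-L_{t(\tau)-1}\le L_{o(\tau)}-1-L_{t(\tau)-1}$ using $\tau\le L_{o(\tau)}$, and interpret $L_{o(\tau)}-L_{t(\tau)-1}=\sum_{v=t(\tau)}^{o(\tau)}|\mathcal{L}_v|$ as the number of losses \emph{observed} in the window $[t(\tau),o(\tau)]$. The key step is to show this count is at most $2\bar{d}+1$: a loss incurred at slot $s$ and observed at $s+d_s\in[t(\tau),o(\tau)]$ must satisfy $s\le s+d_s\le o(\tau)\le t(\tau)+\bar{d}$ (from $d_s\ge 0$ and $d_{t(\tau)}\le\bar{d}$) and $s=(s+d_s)-d_s\ge t(\tau)-\bar{d}$ (from $d_s\le\bar{d}$). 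Thus every such originating slot lies in the integer window $[t(\tau)-\bar{d},\,t(\tau)+\bar{d}]$, which contains at most $2\bar{d}+1$ integers; since distinct losses have distinct originating slots, at most $2\bar{d}+1$ losses fall in the window, giving $\tilde{s}_\tau\le(2\bar{d}+1)-1=2\bar{d}$. The main obstacle is precisely this translation: one must invoke the delay bound $d_s\le\bar{d}$ on \emph{both} endpoints of the window, applied to the source slots $s$ rather than only to $d_{t(\tau)}$, whereas claims i) and ii) are routine once the monotonicity of $L$ and the bijection bookkeeping are in place.
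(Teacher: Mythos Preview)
Your argument is correct. Parts i) and ii) are essentially identical to the paper's proof: both use $\tau\ge L_{o(\tau)-1}+1\ge L_{t(\tau)-1}+1$ for i), and both exploit the bijection $\tau\mapsto t(\tau)$ to rewrite $\sum_\tau L_{t(\tau)-1}=\sum_t L_{t-1}$ for ii); your intermediate computation $\sum_t L_{t-1}=\tfrac{T(T-1)}{2}-D$ just makes explicit the step the paper compresses into ``by definition of $L_{t-1}$.''

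For iii) you take a genuinely different route. The paper bounds the two pieces of $\tilde{s}_\tau=\tau-1-L_{t(\tau)-1}$ separately: it uses $L_{t(\tau)-1}\ge t(\tau)-1-\bar{d}$ (all losses from slots $\le t(\tau)-1-\bar{d}$ must have arrived by slot $t(\tau)-1$) and $\tau\le L_{o(\tau)}\le o(\tau)\le t(\tau)+\bar{d}$, then subtracts. You instead bound $\tilde{s}_\tau\le L_{o(\tau)}-L_{t(\tau)-1}-1$ and count \emph{originating} slots of feedback observed in the window $[t(\tau),o(\tau)]$, showing they all lie in $[t(\tau)-\bar{d},\,t(\tau)+\bar{d}]$. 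Both arguments invoke $d\le\bar{d}$ twice and both give the same $2\bar{d}$ bound; the paper's is marginally more direct, while your counting argument makes the ``at most $2\bar{d}+1$ feedback between $t(\tau)$ and $o(\tau)$'' statement (which the paper uses elsewhere, e.g.\ around \eqref{eq.reg.A}) a corollary rather than a separate observation.
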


\begin{proof}
We first prove the property i) $\tilde{s}_\tau \geq 0, ~ \forall t$. Consider at virtual slot $\tau$, the observed loss is $l_{t(\tau)}(a_{t(\tau)})$ with the corresponding $\tilde{s}_\tau = \tau - 1 - L_{t(\tau)-1}$. Suppose that $ L_{t(\tau)-1} = m$, where $0 \leq m \leq t(\tau)-1$ (by definition of $L_{t(\tau)-1}$). The history $ L_{t(\tau)-1} = m$ suggests that at the beginning of $t_1 = t(\tau)$, there are in total $m$ received feedback. On the other hand, the loss $l_{t(\tau)}(a_{t(\tau)})$ is observed at the end of slot $t_2 = t(\tau) + d_{t(\tau)} \geq t_1$, thus at the beginning of $t_2$, there are at least $m$ observations. Hence we must have $\tau \geq m+1$. Then by the definition, $\tilde{s}_\tau \geq m+1 -1 - m= 0 $.
	
	Then for the property ii) $\sum_{\tau = 1}^T \tilde{s}_\tau = \sum_{t=1}^T d_t$, the proof follows from the definition of $\tilde{s}_\tau$, i.e.,
	\begin{align}
		\sum_{\tau = 1}^T \tilde{s}_\tau &= \sum_{\tau = 1}^T \Big( \tau -1 - L_{t(\tau)-1} \Big) = \sum_{t = 1}^T ( t -1) - \sum_{\tau = 1}^T L_{t(\tau)-1} \nonumber \\
		& \stackrel{(a)}{=} \sum_{t = 1}^T \big( t -1 - L_{t-1} \big) \stackrel{(b)}{=} \sum_{t=1}^T d_t 
	\end{align} 
	where (a) is due to the fact that $\{t(\tau)\}_{\tau = 1}^T$ is a permutation of $\{1,\cdots,T\}$; and (b) follows from the definition of $L_{t-1}$. 
	
	Finally, for the property iii), 
	notice that $L_{t(\tau)-1} \geq t(\tau) - 1 -\bar{d}$, which follows that at the beginning of $t = t(\tau) $, the losses of slots $t\leq t(\tau) - 1 -\bar{d}$ must have been received. Therefore, we have
	\begin{equation}
		\tilde{s}_\tau = \tau - 1 - L_{t(\tau)-1} \leq \tau -1 -t(\tau) +1 +\bar{d} \stackrel{(c)}{\leq} 2\bar{d}
	\end{equation}
	where (c) follows that $l_{t(\tau)}(a_{t(\tau)})$ is observed at the end of $t = t(\tau) + d_{t(\tau)}$, and $L_{ t(\tau) + d_{t(\tau)}-1}$ is at most $t(\tau) + d_{t(\tau)} - 2$ (since $l_{t(\tau)}(a_{t(\tau)})$ is not observed), leading to the fact that $\tau$ is at most $t(\tau) + d_{t(\tau)}$, leading to $\tau -t(\tau) \leq d_{t(\tau)} \leq \bar{d} $.
\end{proof}

\vspace{-0.5cm}
\makeatletter\def\@captype{figure}\makeatother
\begin{minipage}{.55\textwidth}
\centering
\includegraphics[height=0.50\textwidth]{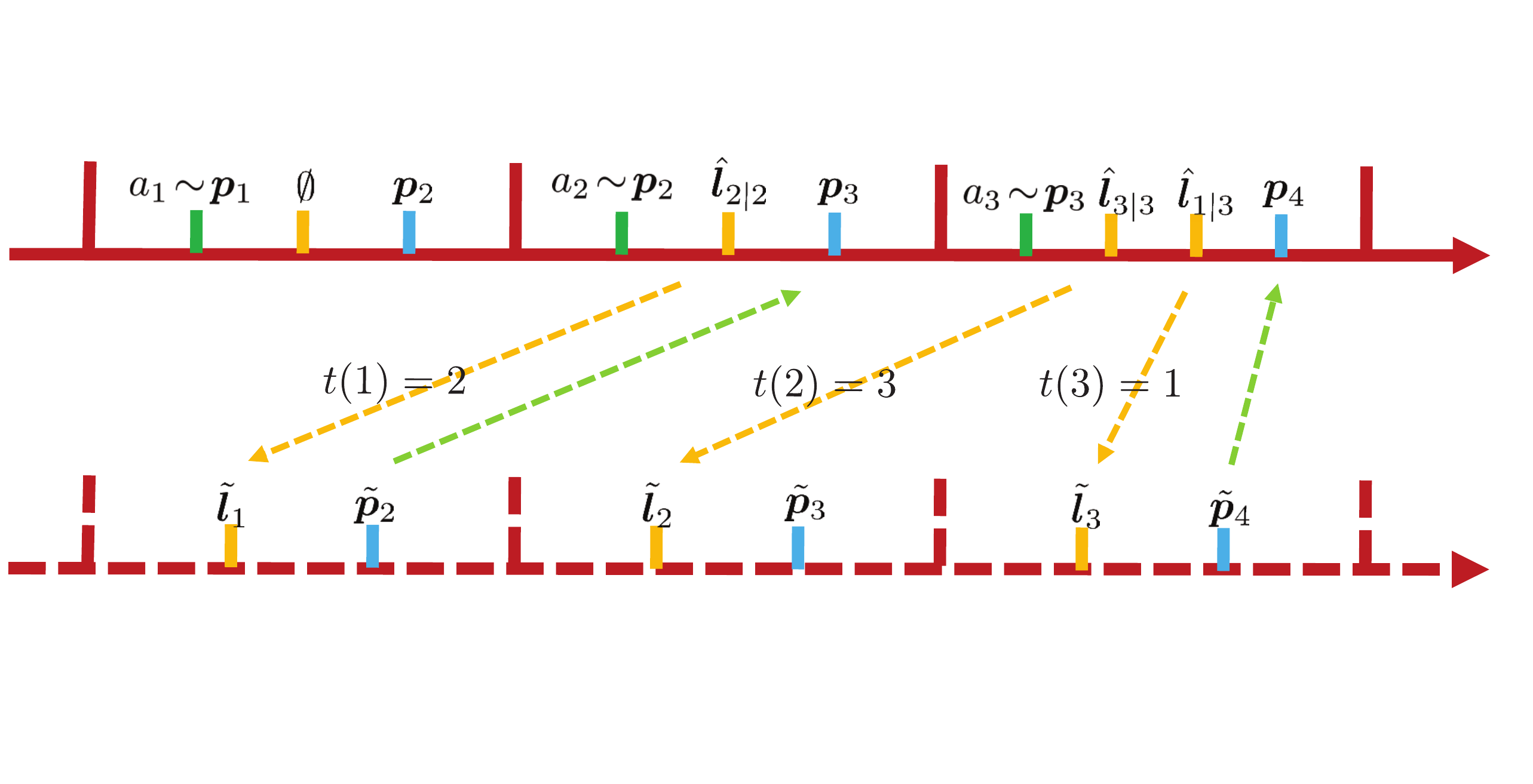}
\vspace{-0.9cm}
\caption{ An examples of mapping from real slots (solid line) and virtual slots (dotted line). The value of $t(\tau)$ is marked besides the corresponding yellow arrow; $T= 3$ with delay $d_1 = 2$, $d_2 = 0$, and $d_3 = 0$. }
\label{fig.RVlearner_appdix}
\end{minipage}
\quad
\makeatletter\def\@captype{table}\makeatother
\begin{minipage}{.4\textwidth}
\centering
\begin{tabular}{ c *{3}{|c}}
		\hline
		Virtual slot   & $\tau = 1$ & $\tau =2$  & $\tau =3$  \\ 
		\hline
		$t(\tau)$ & 2 & 	3  & 1   \\
		\hline
		$L_{t(\tau)-1}$ & 0 & 1  & 0   \\
		\hline
		$\tilde{s}_\tau$ & 0 & 0  & 2  \\
		\hline
\end{tabular} 
\caption{The value of $t(\tau)$, $L_{ t(\tau)-1}$, and $\tilde{s}_\tau$ in Fig. \ref{fig.RVlearner_appdix}. }
\label{tab.para}
\end{minipage}


\section{Proofs for DEXP3}\label{appendix.DEXP3}


Before diving into the proofs, we first show some useful yet simple bounds for different parameters of the DEXP3's (in virtual slots). In virtual slot $\tau$, the update is carried out the same as \eqref{eq.tildew1}, \eqref{eq.w1} and \eqref{eq.tildep1}, given by
\begin{equation}\label{eq.tildew}
	\tilde{w}_{\tau+1} (k) = \tilde{p}_\tau(k) \exp \Big[-\eta \min\big{\{} \delta_1,\tilde{l}_\tau (k)\big{\}}\Big],~ \forall k,
\end{equation}
\begin{equation}\label{eq.w}
	w_{\tau+1} (k) = \max \bigg{\{} \frac{\tilde{w}_{\tau+1} (k)}{\sum_{j=1}^K \tilde{w}_{\tau+1} (j)}, \frac{\delta_2}{K}\bigg{\}},~ \forall k,
\end{equation}
\begin{equation}\label{eq.tildep}
	\tilde{p}_{\tau+1} (k) = \frac{w_{\tau+1} (k)}{\sum_{j=1}^K w_{\tau+1} (j)},~\forall k.
\end{equation}

Since $\tilde{l}_\tau (k) \geq 0, \forall  k, \tau$, we have
\begin{equation}\label{eq.ineq1}
	\sum_{j=1}^K \tilde{w}_\tau (j) \leq \sum_{j=1}^K \tilde{p}_{\tau-1} (j) =1.
\end{equation}
And $\sum_{k=1}^K w_\tau(k)$ is bounded by
\begin{equation}\label{eq.ineq2}
	\sum_{k=1}^K w_\tau (k) \geq \sum_{k=1}^K \frac{\tilde{w}_\tau(k)}{\sum_{j=1}^K \tilde{w}_\tau(j)} = 1;
\end{equation}
\begin{equation}\label{eq.ineq3}
	\sum_{k=1}^K w_\tau (k) \leq \sum_{k=1}^K \frac{\tilde{w}_\tau(k)}{\sum_{j=1}^K \tilde{w}_\tau(j)} + \delta_2 = 1 + \delta_2.
\end{equation}
Finally, $\tilde{p}_\tau (k)$ is bounded by
\begin{equation}\label{eq.ineq4}
  \frac{\delta_2}{K(1+\delta_2)}	\leq \frac{w_\tau (k)}{1+\delta_2}\leq \tilde{p}_\tau(k) \leq w_\tau(k).
\end{equation}

\subsection{Proof of Lemma \ref{lemma.small/big}}

\begin{lemma}\label{lemma.small-big}
In consecutive virtual slots $\tau-1$ and $\tau$, the following inequality holds for any $k$.
	\begin{equation}
		\tilde{p}_{\tau - 1}(k) - \tilde{p}_\tau (k) \leq \tilde{p}_{\tau -1}(k) \frac{\delta_2 +\eta \min\big{\{}\delta_1,\tilde{l}_{\tau - 1}(k)\big{\}} }{1+\delta_2}.
	\end{equation}
\end{lemma}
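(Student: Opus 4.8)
The plan is to lower bound $\tilde{p}_\tau(k)$ in terms of $\tilde{p}_{\tau-1}(k)$ and then rearrange, because the claimed inequality is equivalent to
\[
\tilde{p}_\tau(k) \geq \tilde{p}_{\tau-1}(k)\,\frac{1 - \eta \min\{\delta_1, \tilde{l}_{\tau-1}(k)\}}{1+\delta_2}.
\]
So it suffices to produce this one-sided bound on $\tilde{p}_\tau(k)$ using only the per-slot relations already recorded in \eqref{eq.tildew}--\eqref{eq.tildep} and the elementary bounds \eqref{eq.ineq1} and \eqref{eq.ineq4}.

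First I would chain those bounds. Starting from the rightmost inequality in \eqref{eq.ineq4}, namely $\tilde{p}_\tau(k) \geq w_\tau(k)/(1+\delta_2)$, I would invoke the definition \eqref{eq.w}: since $w_\tau(k)$ is a maximum, it dominates $\tilde{w}_\tau(k)/\sum_{j}\tilde{w}_\tau(j)$, and because $\sum_{j}\tilde{w}_\tau(j)\leq 1$ by \eqref{eq.ineq1}, this ratio is in turn at least $\tilde{w}_\tau(k)$. Substituting the definition \eqref{eq.tildew}, i.e. $\tilde{w}_\tau(k) = \tilde{p}_{\tau-1}(k)\exp\!\big(-\eta\min\{\delta_1,\tilde{l}_{\tau-1}(k)\}\big)$, then gives
\[
\tilde{p}_\tau(k) \geq \frac{\tilde{p}_{\tau-1}(k)}{1+\delta_2}\,\exp\!\big(-\eta\min\{\delta_1,\tilde{l}_{\tau-1}(k)\}\big).
\]
Next I would apply $e^{-x}\geq 1-x$ (valid for all real $x$, hence for the nonnegative argument $x=\eta\min\{\delta_1,\tilde{l}_{\tau-1}(k)\}\geq 0$) to replace the exponential by $1-\eta\min\{\delta_1,\tilde{l}_{\tau-1}(k)\}$, which yields precisely the displayed lower bound. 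Rearranging, subtracting both sides from $\tilde{p}_{\tau-1}(k)$ and simplifying the factor $1-\frac{1-\eta\min\{\delta_1,\tilde{l}_{\tau-1}(k)\}}{1+\delta_2}=\frac{\delta_2+\eta\min\{\delta_1,\tilde{l}_{\tau-1}(k)\}}{1+\delta_2}$, closes the argument.

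I do not anticipate a genuine obstacle: the proof is a short chaining of the already-established per-slot bounds, and the only non-tautological ingredient is $e^{-x}\geq 1-x$. The two points requiring care are bookkeeping the index shift (the update producing $\tilde{p}_\tau$ is driven by $\tilde{l}_{\tau-1}$) and preserving the direction of each inequality, in particular noting that dividing by $\sum_j \tilde{w}_\tau(j)\leq 1$ can only \emph{increase} the quantity.
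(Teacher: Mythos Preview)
Your proposal is correct and follows essentially the same route as the paper's proof: chain \eqref{eq.ineq4}, the max in \eqref{eq.w}, the bound \eqref{eq.ineq1}, and the definition \eqref{eq.tildew} to get $\tilde{p}_\tau(k)\geq \tilde{p}_{\tau-1}(k)\exp(-\eta\min\{\delta_1,\tilde{l}_{\tau-1}(k)\})/(1+\delta_2)$, then apply $e^{-x}\geq 1-x$ and rearrange. One trivial wording nit: the inequality $\tilde{p}_\tau(k)\geq w_\tau(k)/(1+\delta_2)$ you cite is the middle inequality in \eqref{eq.ineq4}, not the rightmost, but you identify and use it correctly.
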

\begin{proof}
First, we have
	\begin{align}
	\tilde{p}_\tau (k) \stackrel{(a)}{\geq} \frac{w_\tau (k)}{1+\delta_2} \geq \frac{\tilde{w}_\tau(k)}{\sum_{j=1}^K \tilde{w}_\tau(j)(1+\delta_2)}  \stackrel{(b)}{\geq}	\frac{\tilde{w}_\tau(k)}{1+\delta_2} = 	\frac{\tilde{p}_{\tau-1}(k) \exp \Big[-\eta \min\big{\{}\delta_1,\tilde{l}_{\tau - 1}(k)\big{\}}\Big]}{1+\delta_2}
	\end{align}
	where (a) is the result of \eqref{eq.ineq4}; (b) is due to \eqref{eq.ineq1}. Hence, we have 
	\begin{align}\label{eq.small-big.1}
	\tilde{p}_\tau (k) - \tilde{p}_{\tau -1} (k) & \geq \frac{\tilde{p}_{\tau-1}(k) \exp \Big[-\eta \min\big{\{}\delta_1,\tilde{l}_{\tau - 1}(k)\big{\}}\Big]}{1+\delta_2} - \tilde{p}_{\tau -1} (k) \nonumber \\
	& \stackrel{(c)}{\geq} \frac{\tilde{p}_{\tau -1} (k)}{1+\delta_2} \Big[ 1-\eta \min\big{\{}\delta_1,\tilde{l}_{\tau - 1}(k)\big{\}}\Big] - \tilde{p}_{\tau -1} (k) \nonumber \\
	& = \tilde{p}_{\tau -1}(k) \frac{-\delta_2 -\eta \min\big{\{}\delta_1,\tilde{l}_{\tau - 1}(k)\big{\}} }{1+\delta_2}
	\end{align}
	where (c) follows from $e^{-x} \geq 1-x$ and the proof is completed by multiplying $-1$ on both sides of \eqref{eq.small-big.1}.
\end{proof}

%
	From Lemma \ref{lemma.small-big}, we have 
	\begin{align}
		\tilde{p}_{\tau - 1}(k) - \tilde{p}_\tau (k) \leq \tilde{p}_{\tau -1}(k) \frac{\delta_2 +\eta \min\big{\{}\delta_1,\tilde{l}_{\tau - 1}(k)\big{\}} }{1+\delta_2} \leq \tilde{p}_{\tau -1}(k)  \big(\delta_2 +\eta \delta_1 ).
	\end{align}
	Hence, as long as $1-\delta_2 - \eta \delta_1 \geq 0$, we can guarantee that \eqref{eq.small/big.1} is satisfied. 

\subsection{Proof of Lemma \ref{lemma.big/small}}
\begin{lemma}\label{lemma.big-small}
	The following inequality holds for any $\tau$ and any $k$
	\begin{equation}
		\tilde{p}_{\tau}(k) - \tilde{p}_{\tau-1}(k) \leq \tilde{p}_{\tau}(k) \bigg[ 1 - I_\tau (k) \sum_{j=1}^K \tilde{p}_{\tau-1}(j) \Big(1 - \eta \min\big{\{}\delta_1,\tilde{l}_{\tau - 1}(j)\big{\}} \Big) \bigg]
	\end{equation}
	where $I_\tau (k) := \mathds{1} \big(w_\tau(k) > \frac{\delta_2}{K} \big)$.
\end{lemma}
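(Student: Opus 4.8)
The plan is to prove the bound separately on the two events determined by the indicator $I_\tau(k)=\mathds{1}(w_\tau(k)>\delta_2/K)$, since the right-hand side degenerates completely when $I_\tau(k)=0$ and becomes substantive only when $I_\tau(k)=1$. First I would dispose of the easy case $I_\tau(k)=0$: here the bracketed factor on the right reduces to $1$, so the claimed inequality is just $\tilde{p}_\tau(k)-\tilde{p}_{\tau-1}(k)\le \tilde{p}_\tau(k)$, which holds trivially because $\tilde{p}_{\tau-1}(k)\ge 0$. This observation is what lets the indicator enter so cleanly, and it is worth stating up front so that the remaining work is confined to the nontrivial regime.

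For the main case $I_\tau(k)=1$, the definition \eqref{eq.w} forces the max to select its first argument, i.e. $w_\tau(k)=\tilde{w}_\tau(k)/\sum_{j}\tilde{w}_\tau(j)$, and the target inequality is equivalent after rearrangement to
\begin{equation}
\tilde{p}_\tau(k)\sum_{j=1}^K \tilde{p}_{\tau-1}(j)\Big(1-\eta\min\{\delta_1,\tilde{l}_{\tau-1}(j)\}\Big)\;\le\;\tilde{p}_{\tau-1}(k).
\end{equation}
I would establish this through a short chain of inequalities. Using $\tilde{p}_\tau(k)\le w_\tau(k)$ from \eqref{eq.ineq4} (equivalently $\sum_l w_\tau(l)\ge 1$ via \eqref{eq.ineq2}) I replace $\tilde{p}_\tau(k)$ by $\tilde{w}_\tau(k)/\sum_j\tilde{w}_\tau(j)$. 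Then the elementary bound $e^{-x}\ge 1-x$ applied to \eqref{eq.tildew} yields $\tilde{w}_\tau(j)\ge \tilde{p}_{\tau-1}(j)(1-\eta\min\{\delta_1,\tilde{l}_{\tau-1}(j)\})$, hence $\sum_j\tilde{w}_\tau(j)$ dominates the summation appearing on the left. Since that summation is nonnegative under the standing hypothesis $1-\eta\delta_1\ge 0$ (each factor $1-\eta\min\{\delta_1,\tilde{l}_{\tau-1}(j)\}\ge 1-\eta\delta_1\ge 0$), cancelling it against the denominator leaves $\tilde{w}_\tau(k)$, and finally $\tilde{w}_\tau(k)=\tilde{p}_{\tau-1}(k)\exp[-\eta\min\{\delta_1,\tilde{l}_{\tau-1}(k)\}]\le \tilde{p}_{\tau-1}(k)$ because the exponent is nonpositive. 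Chaining these gives exactly the rearranged inequality.

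The main obstacle I anticipate is bookkeeping around the trimmed normalization \eqref{eq.w}: the identity $w_\tau(k)=\tilde{w}_\tau(k)/\sum_j\tilde{w}_\tau(j)$ is valid only on $\{I_\tau(k)=1\}$, so the case split must be kept rigorous, and the cancellation step requires verifying that the replaced summation is strictly positive (the degenerate value $0$ must be handled, but there the left side vanishes and the bound is immediate). The sign condition $1-\eta\delta_1\ge 0$ is precisely what guarantees the direction of the $e^{-x}\ge 1-x$ estimate propagates correctly through the normalization, so I would flag its use explicitly rather than treating it as routine.
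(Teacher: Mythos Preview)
Your proposal is correct and follows essentially the same route as the paper: a case split on $I_\tau(k)$, the identification $w_\tau(k)=\tilde{w}_\tau(k)/\sum_j\tilde{w}_\tau(j)$ on $\{I_\tau(k)=1\}$, the bound $\tilde{p}_\tau(k)\le w_\tau(k)$ from \eqref{eq.ineq4}, the inequality $e^{-x}\ge 1-x$, and $\tilde{w}_\tau(k)\le\tilde{p}_{\tau-1}(k)$. The only cosmetic difference is that the paper packages both cases into the single preliminary inequality $\tilde{w}_\tau(k)\ge\tilde{p}_\tau(k)\,I_\tau(k)\sum_j\tilde{w}_\tau(j)$ and then runs one uninterrupted chain, which incidentally avoids the need to invoke the sign condition $1-\eta\delta_1\ge 0$ that you use to justify the cancellation step.
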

\begin{proof}
	We first show that 
	\begin{equation}\label{eq.big-small.1}
		\tilde{w}_\tau (k) \geq \tilde{p}_\tau(k) I_\tau(k) \sum_{j=1}^{K} \tilde{w}_\tau (j).
	\end{equation}
	It is easy to see that inequality \eqref{eq.big-small.1} holds when $I_\tau(k) = 0$. When $I_\tau(k) = 1$, we have $w_\tau(k) = \tilde{w}_\tau (k) / \big( \sum_{j=1}^{K} \tilde{w}_\tau (j)\big)$. By \eqref{eq.ineq4}, we have $\tilde{p}_\tau (k) \leq w_\tau(k) = \tilde{w}_\tau (k) / \big( \sum_{j=1}^{K} \tilde{w}_\tau (j)\big)$, from which \eqref{eq.big-small.1} holds. Then we have 
	\begin{align}
		\tilde{p}_{\tau}(k) &- \tilde{p}_{\tau -1} (k)  \leq \tilde{p}_{\tau}(k) - \tilde{w}_\tau (k) \leq \tilde{p}_{\tau}(k) - \tilde{p}_\tau(k) I_\tau(k) \sum_{j=1}^{K} \tilde{w}_\tau (j) \nonumber \\
		& = \tilde{p}_{\tau}(k) \bigg[ 1- I_\tau(k) \sum_{j=1}^{K} \tilde{w}_\tau (j)  \bigg] =  \tilde{p}_{\tau}(k) \bigg{\{} 1- I_\tau(k) \sum_{j=1}^{K} \tilde{p}_{\tau-1} (j) \exp \Big[-\eta \min\big{\{}\delta_1,\tilde{l}_{\tau - 1}(j)\big{\}}\Big]  \bigg{\}} \nonumber \\
		& \stackrel{(a)}{\leq} \tilde{p}_{\tau}(k) \bigg[ 1 - I_\tau (k) \sum_{j=1}^K \tilde{p}_{\tau-1}(j) \Big(1 - \eta \min\big{\{}\delta_1,\tilde{l}_{\tau - 1}(j)\big{\}} \Big) \bigg]
	\end{align}
	where in (a) we used $e^{-x} \geq 1-x$.
\end{proof}

	The proof of Lemma \ref{lemma.big/small} builds on Lemma \ref{lemma.big-small}. First consider the case of $I_\tau(k)=0$. In this case Lemma \ref{lemma.big-small} becomes $\tilde{p}_{\tau}(k) - \tilde{p}_{\tau-1}(k) \leq \tilde{p}_{\tau}(k)$, which is trivial. On the other hand, since $I_\tau(k)=0$, we have $w_\tau (k) = \frac{\delta_2}{K}$. Then leveraging \eqref{eq.ineq4}, we have $\tilde{p}_\tau (k) \leq w_\tau (k) = \frac{\delta_2}{K}$.
Plugging the lower bound of $\tilde{p}_{\tau -1}(k)$ into \eqref{eq.ineq4}, we have 
	\begin{equation}\label{eq.coro.32}
		\frac{\tilde{p}_\tau (k)}{\tilde{p}_{\tau-1} (k)} \leq \frac{\delta_2}{K} \frac{1}{\tilde{p}_{\tau-1} (k)} \leq \frac{\delta_2}{K}  \frac{K(1+\delta_2)}{\delta_2} = 1+ \delta_2.
	\end{equation}
	
Considering the case of $I_\tau(k)=1$, Lemma \ref{lemma.big-small} becomes
	\begin{align}\label{eq.coro.33}
		\tilde{p}_{\tau}(k) - \tilde{p}_{\tau-1}(k)& \leq \tilde{p}_{\tau}(k) \bigg[ 1 -  \sum_{j=1}^K \tilde{p}_{\tau-1}(j) \Big(1 - \eta \min\big{\{}\delta_1,\tilde{l}_{\tau - 1}(j)\big{\}} \Big) \bigg] \nonumber \\
		& = \eta \tilde{p}_{\tau}(k)\sum_{j=1}^K \tilde{p}_{\tau-1}(j) \min\big{\{}\delta_1,\tilde{l}_{\tau - 1}(k)\big{\}} \leq \eta \tilde{p}_{\tau}(k) \delta_1.
	\end{align}
	Rearranging \eqref{eq.coro.33} and combining it with \eqref{eq.coro.32}, we complete the proof.


\subsection{Proof of Lemma \ref{lemma.innerreg}}
For conciseness, define $\tilde{\bm{c}}_\tau:= \min \big{\{} \tilde{\bm{l}}_\tau, \delta_1 \cdot \bm{1} \}$	, and correspondingly $\tilde{c}_\tau (k):= \min \{ \tilde{l}_\tau(k), \delta_1 \big{\}}$. We further define $\tilde{W}_\tau := \sum_{k=1}^K \tilde{w}_\tau (k)$, and $W_\tau := \sum_{k=1}^K w_\tau (k)$. Leveraging these auxiliary variables, we have 
\begin{align}\label{eq.innerreg.1}
	\tilde{W}_{T+1}& = \sum_{k=1}^K \tilde{w}_{T+1}(k) = \sum_{k=1}^K \tilde{p}_{T}(k) \exp \big[-\eta \tilde{c}_T (k)\big] = \sum_{k=1}^K \frac{w_T(k)}{W_T} \exp \big[-\eta \tilde{c}_T (k)\big] \nonumber \\
	& \geq \sum_{k=1}^K \frac{\tilde{w}_T(k)}{\tilde{W}_T}  \frac{\exp \big[-\eta \tilde{c}_T (k)\big]}{W_T} = \sum_{k=1}^K \tilde{p}_{T-1}(k) \frac{\exp \big[-\eta \tilde{c}_T (k) -\eta \tilde{c}_{T-1} (k) \big]}{\tilde{W}_T W_T} \nonumber \\
	& = \sum_{k=1}^K \frac{w_{T-1}(k)}{W_{T-1}} \frac{\exp \big[-\eta \tilde{c}_T (k) -\eta \tilde{c}_{T-1} (k) \big]}{\tilde{W}_T W_T} \geq \cdots  \geq \sum_{k=1}^K \frac{\tilde{w}_1 (k) \exp \Big[ -\eta \sum_{\tau =1}^{T} \tilde{c}_\tau (k) \Big]}{\prod_{\tau =1}^{T} \big(W_\tau \tilde{W}_\tau \big)}.
\end{align}

Then, for any probability distribution $\bm{p} \in \Delta_K$
noticing that the initialization of $\tilde{w}_1(k) = 1, \forall k$ and hence $\tilde{W}_1 = K$, inequality \eqref{eq.innerreg.1} implies that
\begin{align}\label{eq.innerreg.2}
	&\sum_{k=1}^K p(k) \exp \Big[ -\eta \sum_{\tau =1}^{T} \tilde{c}_\tau (k) \Big] \leq \sum_{k=1}^K  \exp \Big[ -\eta \sum_{\tau =1}^{T} \tilde{c}_\tau (k) \Big]  \leq  \tilde{W}_1 \prod_{\tau =1}^{T} \big(W_\tau \tilde{W}_{\tau+1}\big) \stackrel{(a)}{\leq} K (1+\delta_2)^T \prod_{\tau =2}^{T+1} \tilde{W}_\tau,
\end{align}
where in (a) we used the fact that $W_\tau \leq 1+ \delta_2 $. Then, using the the Jensen's inequality on $e^{-x}$, we have
\begin{equation}\label{eq.innerreg.3}
	\sum_{k=1}^K p(k) \exp \Big[ -\eta \sum_{\tau =1}^{T} \tilde{c}_\tau (k) \Big] \geq \exp \bigg[ -\eta \sum_{k=1}^K \sum_{\tau =1}^{T} p(k)\tilde{c}_\tau (k) \bigg].
\end{equation}
Plugging \eqref{eq.innerreg.3} into \eqref{eq.innerreg.2}, we arrive at
\begin{equation}\label{eq.innerreg.4}
	\exp \bigg[ -\eta \sum_{k=1}^K \sum_{\tau =1}^{T} p(k)\tilde{c}_\tau (k) \bigg] \leq  K (1+\delta_2)^T \prod_{\tau =2}^{T+1} \tilde{W}_\tau.
\end{equation}

On the other hand, $\tilde{W}_\tau$ can be upper bounded by
\begin{align}\label{eq.innerreg.5}
	\tilde{W}_\tau & = \sum_{k=1}^K \tilde{w}_\tau = \sum_{k=1}^K \tilde{p}_{\tau-1} (k) \exp \big[-\eta \tilde{c}_{\tau -1}(k)  \big] \nonumber \\
	& \stackrel{(b)}{\leq} \sum_{k=1}^K \tilde{p}_{\tau-1}(k) \bigg(1 - \eta  \tilde{c}_{\tau -1}(k) + \frac{\eta^2}{2} \big[ \tilde{c}_{\tau -1}(k)\big]^2 \bigg) \nonumber \\
	& = 1 - \eta \sum_{k=1}^K \tilde{p}_{\tau-1}(k) \tilde{c}_{\tau -1}(k) + \frac{\eta^2}{2} \sum_{k=1}^K \tilde{p}_{\tau-1}(k) \big[ \tilde{c}_{\tau -1}(k)\big]^2
\end{align}
where (b) follows from $e^{-x} \leq 1-x+x^2/2, ~\forall x \geq 0 $. Taking logarithm on both sides of \eqref{eq.innerreg.5}, we arrive at
\begin{align}\label{eq.innerreg.6}
	\ln \tilde{W}_\tau & \leq \ln \bigg( 1 - \eta \sum_{k=1}^K \tilde{p}_{\tau-1}(k) \tilde{c}_{\tau -1}(k) + \frac{\eta^2}{2} \sum_{k=1}^K \tilde{p}_{\tau-1}(k) \big[ \tilde{c}_{\tau -1}(k)\big]^2 \bigg) \nonumber \\
	& \stackrel{(c)}{\leq} - \eta \sum_{k=1}^K \tilde{p}_{\tau-1}(k) \tilde{c}_{\tau -1}(k) + \frac{\eta^2}{2} \sum_{k=1}^K \tilde{p}_{\tau-1}(k) \big[ \tilde{c}_{\tau -1}(k)\big]^2
\end{align}
where (c) follows from $\ln(1+x) \leq x$. Then taking logarithm on both sides of \eqref{eq.innerreg.4} and plugging \eqref{eq.innerreg.6} in, we arrive at
\begin{align}\label{eq.innerreg.7}
	-\eta \sum_{k=1}^K \sum_{\tau =1}^{T} p(k)\tilde{c}_\tau (k) \leq T \ln (1+\delta_2) + \ln K  - \eta \sum_{\tau =1 }^T\sum_{k=1}^K \tilde{p}_{\tau}(k) \tilde{c}_{\tau }(k) + \frac{\eta^2}{2} \sum_{\tau =1 }^T\sum_{k=1}^K \tilde{p}_{\tau}(k) \big[ \tilde{c}_{\tau }(k)\big]^2.
\end{align}
Rearranging the terms of \eqref{eq.innerreg.7} and writing it compactly, we obtain
\begin{align}
	\sum_{\tau =1 }^T  \big(\tilde{\bm{p}}_\tau - \bm{p} \big)^\top \tilde{\bm{c}}_\tau  \leq & \frac{T \ln (1\!+\!\delta_2)\!+\! \ln K}{\eta} + \frac{\eta}{2} \sum_{\tau=1}^T \sum_{k=1}^K \tilde{p}_\tau (k) \big[ \tilde{c}_\tau (k) \big]^2\nonumber\\
	 \leq &\frac{T \ln (1\!+\!\delta_2)\!+ \!\ln K}{\eta} + \frac{\eta}{2} \sum_{\tau=1}^T \sum_{k=1}^K \tilde{p}_\tau (k) \big[ \tilde{l}_\tau (k) \big]^2.
\end{align}


\subsection{Proof of Theorem \ref{theo.reg}}


 	To begin with, the instantaneous regret can be written as
	\begin{align}\label{eq.perslotreg}
	 	\bm{p}_t^\top \bm{l}_t - \bm{p}^\top \bm{l}_t &= \sum_{k=1}^K p_t(k) l_t(k) - \sum_{k=1}^K p(k)l_t(k) \nonumber \\
	 	& \stackrel{(a)}{=} \sum_{k=1}^K p_t(k) \mathbb{E}_{a_t}\bigg[\frac{l_t(k) \mathds{1} (a_t = k)}{p_t(k)} \bigg] - \sum_{k=1}^K p(k) \mathbb{E}_{a_t}\bigg[\frac{l_t(k) \mathds{1} (a_t = k)}{p_t(k)} \bigg] \nonumber \\
		& = \sum_{k=1}^K \big( p_t(k) - p(k)\big) \mathbb{E}_{a_t} \bigg[ \frac{l_t(k) \mathds{1} (a_t = k)}{p_{t+d_t}(k)} \frac{p_{t+d_t} (k)}{p_t(k)} \bigg] \nonumber \\
		& \leq \max_{k} \frac{p_{t+d_t} (k)}{p_t(k)} \sum_{k=1}^K \big( p_t(k) - p(k)\big) \mathbb{E}_{a_t} \bigg[ \frac{l_t(k) \mathds{1} (a_t = k)}{p_{t+d_t}(k)} \bigg] \nonumber \\
		& \stackrel{(b)}{=} \bigg( \max_{k} \frac{p_{t+d_t} (k)}{p_t(k)} \bigg) \mathbb{E}_{a_t} \Big[ \bm{p}_t^\top \hat{\bm{l}}_{t|t+{d_t}} - \bm{p}^\top \hat{\bm{l}}_{t|t+{d_t}} \Big]
	\end{align}
where (a) is due to $\mathbb{E}_{a_t}\bigg[\frac{l_t(k) \mathds{1} (a_t = k)}{p_t(k)} \bigg] = l_t(k)$, 
and (b) follows from $\hat{l}_{t|t+{d_t}}(k) = \frac{l_t(k) \mathds{1} (a_t = k)}{p_{t+d_t}(k)}$.

Then the overall regret of $T$ slots is given by
\begin{align}\label{eq.reg.1}
	\text{Reg}_T &= \mathbb{E} \bigg[ \sum_{t=1}^T \bm{p}_t^\top \bm{l}_t \bigg] - \bm{p}^\top \bm{l}_t \leq\mathbb{E} \bigg[ \sum_{t=1}^T \bigg( \max_{k} \frac{p_{t+d_t} (k)}{p_t(k)} \bigg) \mathbb{E}_{a_t} \big[ \bm{p}_t^\top \hat{\bm{l}}_{t|t+{d_t}} - \bm{p}^\top \hat{\bm{l}}_{t|t+{d_t}} \big] \bigg] \nonumber \\
	& \stackrel{(c)}{=} \mathbb{E} \bigg[ \sum_{\tau=1}^T \bigg( \max_{k} \frac{p_{ t(\tau)+d_{t(\tau)}} (k)}{p_{t(\tau)}(k)} \bigg) \mathbb{E}_{a_{t(\tau)}} \Big[ \bm{p}_{t(\tau)}^\top \hat{\bm{l}}_{t(\tau)|t(\tau) + d_t(\tau) } - \bm{p}^\top \hat{\bm{l}}_{t(\tau)|t(\tau) + d_t(\tau) } \Big] \bigg] \nonumber \\
	& \stackrel{(d)}{=} \mathbb{E} \bigg[\sum_{\tau=1}^T \bigg( \max_{k} \frac{p_{ t(\tau)+d_{t(\tau)} }(k)}{p_{t(\tau)}(k)} \bigg) \mathbb{E}_{a_{t(\tau)}} \Big[ \bm{p}_{t(\tau)}^\top \tilde{\bm{l}}_\tau - \bm{p}^\top \tilde{\bm{l}}_\tau \Big] \bigg] \nonumber \\
	& \stackrel{(e)}{=} \mathbb{E} \bigg[ \sum_{\tau=1}^T \bigg( \max_{k} \frac{p_{ t(\tau)+d_{t(\tau)} }(k)}{p_{t(\tau)}(k)} \bigg) \mathbb{E}_{a_{t(\tau)}} \Big[ \tilde{\bm{p}}_{\tau - \tilde{s}_\tau}^\top \tilde{\bm{l}}_\tau - \bm{p}^\top \tilde{\bm{l}}_\tau \Big] \bigg] \nonumber \\
	& = \mathbb{E} \bigg[ \sum_{\tau=1}^T \bigg( \max_{k} \frac{p_{ t(\tau)+d_{t(\tau)} }(k)}{p_{t(\tau)}(k)} \bigg) \bigg( \mathbb{E}_{a_{t(\tau)}} \Big[ \tilde{\bm{p}}_{\tau - \tilde{s}_\tau}^\top \tilde{\bm{l}}_\tau - \tilde{\bm{p}}_{\tau}^\top \tilde{\bm{l}}_\tau \Big]  + \mathbb{E}_{a_{t(\tau)}} \Big[ \tilde{\bm{p}}_{\tau}^\top \tilde{\bm{l}}_\tau - \bm{p}^\top \tilde{\bm{l}}_\tau \Big]  \bigg) \bigg]
\end{align}
where (c) is due to the fact that $\{ t(1), t(2), \ldots, t(T)\}$ is a permutation of $\{1,2,\ldots,T\}$; (d) follows from $\tilde{\bm{l}}_\tau = \hat{\bm{l}}_{t(\tau)|t(\tau) + d_t(\tau) } $; (e) uses the fact $\bm{p}_t = \tilde{\bm{p}}_{L_{t-1}+1}$ and $\bm{p}_{t(\tau)} = \tilde{\bm{p}}_{L_{t(\tau)-1}+1} = \tilde{\bm{p}}_{\tau - \tilde{s}_\tau}$. 

First note that between real time slot $t(\tau)$ and $t(\tau)+d_{t(\tau)}$, there is at most $\bar{d}+d_{t(\tau)}\leq 2\bar{d}$ feedback received. Hence the corresponding virtual slots will not differ larger than $2\bar{d}$. Note also that the index of virtual slot corresponding to $t(\tau)$ must be no larger than that of $t(\tau)+d_{t(\tau)}$. Hence we have for all $\tau \in [1,T]$,
\begin{align}\label{eq.reg.A}
	\max_{k} \frac{p_{ t(\tau)+d_{t(\tau)} }(k)}{p_{t(\tau)}(k)} \leq \bigg( \max_k \frac{ \tilde{p}_{\tau+1}(k) }{\tilde{p}_\tau (k)} \bigg)^{2\bar{d}} \stackrel{(f)}{\leq} \max \bigg{\{}(1+\delta_2)^{2\bar{d}} , \frac{1}{(1-\eta \delta_1)^{2\bar{d}} } \bigg{\}}
\end{align}
where (f) is the result of Lemma \ref{lemma.big/small}.

Then, to bound the terms in the second brackets of \eqref{eq.reg.1}, again we denote $\tilde{\bm{c}}_\tau:= \min \big{\{} \tilde{\bm{l}}_\tau, \delta_1\cdot \bm{1} \}$, and correspondingly $\tilde{c}_\tau (k):= \min \{ \tilde{l}_\tau(k), \delta_1 \big{\}}$ for conciseness. Then we have
\begin{align}\label{eq.reg.2}
	\tilde{\bm{p}}_{\tau - \tilde{s}_\tau}^\top \tilde{\bm{c}}_\tau - &\tilde{\bm{p}}_{\tau}^\top \tilde{\bm{c}}_\tau = \tilde{\bm{c}}_\tau^\top (\tilde{\bm{p}}_{\tau - \tilde{s}_\tau} - \tilde{\bm{p}}_{\tau})  \stackrel{(g)}{=} \tilde{c}_\tau (m) \sum_{j=0}^{\tilde{s}_\tau -1} \big( \tilde{p}_{\tau - \tilde{s}_\tau + j}(m) - \tilde{p}_{\tau - \tilde{s}_\tau + j +1} (m)\big) \nonumber \\
	& \stackrel{(h)}{\leq} \tilde{c}_\tau (m) \sum_{j=0}^{\tilde{s}_\tau -1} \tilde{p}_{\tau - \tilde{s}_\tau + j}(m) \frac{ \delta_2 +\eta  \tilde{c}_{\tau - \tilde{s}_\tau + j}(m)}{1+\delta_2} \leq \tilde{c}_\tau (m)  \sum_{j=0}^{\tilde{s}_\tau -1} \big( \eta \tilde{p}_{\tau - \tilde{s}_\tau + j}(m) \tilde{c}_{\tau - \tilde{s}_\tau + j}(m) + \delta_2  \big) \nonumber \\
	& \leq \tilde{l}_\tau (m)  \sum_{j=0}^{\tilde{s}_\tau -1} \big( \eta \tilde{p}_{\tau - \tilde{s}_\tau + j}(m) \tilde{l}_{\tau - \tilde{s}_\tau + j}(m) + \delta_2  \big)
	\end{align}
where (g) follows from the facts that $\tilde{\bm{l}}_\tau$ has at most one entry (with index $m$) being non-zero [cf. \eqref{eq.add1}] and $\tilde{s}_\tau \geq 0$ [cf. Lemma \ref{lemma.index}]; and (h) is the result of Lemma \ref{lemma.small-big}. Then notice that 
\begin{align}\label{eq.reg.3}
	\tilde{l}_\tau (k) \tilde{p}_\tau (k) = \frac{ l_{t(\tau)}(k) }{ p_{t(\tau) + d_t(\tau)} (k)}\tilde{p}_\tau (k) \stackrel{(i)}{\leq} \bigg( \max_k \frac{\tilde{p}_{\tau}(k)}{\tilde{p}_{\tau+1}(k)} \bigg)^{2\bar{d}} \leq \frac{1}{(1-\delta_2 - \eta \delta_1)^{2\bar{d}}}
\end{align}
where (i) uses the fact that between $t(\tau)$ and $t(\tau) + d_{t(\tau)}$ there is at most $2\bar{d}$ feedback; then further applying the result of Lemma \ref{lemma.small/big}, inequality \eqref{eq.reg.3} can be obtained. Plugging \eqref{eq.reg.3} back in to \eqref{eq.reg.2} and taking expectation w.r.t. $a_{t(\tau)}$, we arrive at 
\begin{align}
	\mathbb{E}_{a_{t(\tau)}} \big[ \tilde{\bm{p}}_{\tau - \tilde{s}_\tau}^\top \tilde{\bm{c}}_\tau - \tilde{\bm{p}}_{\tau}^\top \tilde{\bm{c}}_\tau \big]& \leq \bigg( \frac{\eta\tilde{s}_\tau}{(1-\delta_2 - \eta \delta_1)^{2\bar{d}}} +\delta_2\tilde{s}_\tau \bigg) \sum_{k=1}^K p_{t(\tau)}(k) \tilde{l}_\tau (k) \nonumber \\
	&\stackrel{(j)}{\leq} K \frac{1}{(1-\delta_2 - \eta \delta_1)^{2\bar{d}}}\bigg( \frac{\eta\tilde{s}_\tau}{(1-\delta_2 - \eta \delta_1)^{2\bar{d}}} +\delta_2 \tilde{s}_\tau \bigg)
\end{align}
where (j) follows a similar reason of \eqref{eq.reg.3}. Then, noticing $\sum_{\tau = 1}^T \tilde{s}_\tau = \sum_{t=1}^T d_t = D$, we have 
\begin{equation}\label{eq.reg.B}
	\sum_{\tau = 1}^T \mathbb{E}_{a_{t(\tau)}} \big[ \tilde{\bm{p}}_{\tau - \tilde{s}_\tau}^\top \tilde{\bm{c}}_\tau - \tilde{\bm{p}}_{\tau}^\top \tilde{\bm{c}}_\tau \big] \leq  \frac{KD}{(1-\delta_2 - \eta \delta_1)^{2\bar{d}}}\bigg( \frac{\eta}{(1-\delta_2 - \eta \delta_1)^{2\bar{d}}} +\delta_2 \bigg).
\end{equation}

Using a similar argument of \eqref{eq.reg.3}, we can obtain
\begin{align}
	\mathbb{E}_{a_{t(\tau)}} \Big[ \tilde{p}_\tau (k) \big[ \tilde{l}_\tau (k) \big]^2 \Big] = \tilde{p}_\tau (k) \frac{l_{t(\tau)}^2 (k)}{p^2_{t(\tau) + d_{t(\tau)}}(k)} p_{t(\tau)}(k) \leq \frac{1}{(1-\delta_2 - \eta \delta_1)^{4\bar{d}}}
\end{align}

Then leveraging Lemma \ref{lemma.innerreg}, we arrive at
\begin{align}\label{eq.reg.4}
	\sum_{\tau=1}^T \mathbb{E}_{a_{t(\tau)}} \big[ (\tilde{\bm{p}}_\tau - \tilde{\bm{p}} )^\top \tilde{\bm{c}}_\tau \big] &\leq \frac{T \ln (1+\delta_2)+ \ln K}{\eta} + \frac{\eta}{2} \sum_{\tau=1}^T \sum_{k=1}^K \mathbb{E}_{a_{t(\tau)}} \Big[ \tilde{p}_\tau (k) \big[ \tilde{l}_\tau (k) \big]^2 \Big] \nonumber \\ 
	&\leq \frac{T \ln (1+\delta_2)+ \ln K}{\eta} + \frac{\eta KT}{2(1-\delta_2 - \eta \delta_1)^{4\bar{d}}}.
\end{align}
	
The last step is to show that introducing $\delta_1$ will not incur too much extra regret.
Note that both $\tilde{\bm{c}}_\tau$ and $\tilde{\bm{l}}_\tau$ have only one entry being non-zero, whose index is denoted by $m_\tau$. Notice that $\tilde{l}_\tau(m_\tau) > \tilde{c}_\tau(m_\tau)$ only when $\tilde{l}_\tau(m_\tau) = \frac{l_{t(\tau)}(m_\tau)}{p_{t(\tau)+d_{t(\tau)} }(m_\tau)}> \delta_1$, which is equivalent to $p_{ t(\tau) + d_{t(\tau)} } (m_\tau) < l_{t(\tau)}(m_\tau)/\delta_1 \leq 1/\delta_1$. Hence, we have
\begin{align}\label{eq.reg.5}
	 \sum_{\tau=1}^T\mathbb{E}_{a_{t(\tau)}} \big[ (\tilde{\bm{p}}_\tau - \tilde{\bm{p}} )^\top \tilde{\bm{l}}_\tau \big] &= \sum_{\tau=1}^T \mathbb{E}_{a_{t(\tau)}} \big[ (\tilde{\bm{p}}_\tau - \tilde{\bm{p}} )^\top \tilde{\bm{c}}_\tau \big] + \sum_{\tau=1}^T \mathbb{E}_{a_{t(\tau)}} \big[ (\tilde{\bm{p}}_\tau - \tilde{\bm{p}} )^\top \big( \tilde{\bm{l}}_\tau - \tilde{\bm{c}}_\tau \big) \big] \nonumber \\
	& \stackrel{(h)}{\leq} \sum_{\tau=1}^T \mathbb{E}_{a_{t(\tau)}} \big[ (\tilde{\bm{p}}_\tau - \tilde{\bm{p}} )^\top \tilde{\bm{c}}_\tau \big]+ \sum_{\tau=1}^T \mathbb{E}_{a_{t(\tau)}} \Big[ \tilde{p}_\tau(m_\tau) \big( \tilde{l}_\tau(m_\tau) - \tilde{c}_\tau(m_\tau) \big) \mathds{1}\big(p_{t(\tau)+d_t(\tau)}(m_\tau)  < 1/\delta_1 \big) \Big] \nonumber \\
	& \leq \sum_{\tau=1}^T \mathbb{E}_{a_{t(\tau)}} \big[ (\tilde{\bm{p}}_\tau - \tilde{\bm{p}} )^\top \tilde{\bm{c}}_\tau \big]+ \sum_{\tau=1}^T \mathbb{E}_{a_{t(\tau)}} \Big[ \tilde{p}_\tau(m_\tau) \tilde{l}_\tau(m_\tau) \mathds{1}\big(p_{t(\tau)+d_t(\tau)}(m_\tau)  < 1/\delta_1 \big) \Big] 
\end{align}
where in (h), $m_\tau$ denotes the index of the only one none-zero entry of $\tilde{\bm{l}}_\tau$, and $\tilde{\bm{p}}$ is dropped due to the appearance of the indicator function. To proceed, notice that
\begin{align}\label{eq.reg.6}
	 &\mathbb{E}_{a_{t(\tau)}} \Big[ \tilde{l}_\tau(m_\tau) \tilde{p}_\tau(m_\tau) \mathds{1}\big(p_{t(\tau)+d_t(\tau)}(m_\tau )  < 1/\delta_1 \big) \Big] = \sum_{k=1}^K \frac{p_{t(\tau)} (k) l_{t(\tau)} (k)}{ p_{t(\tau) +d_t(\tau)} (k)}  \tilde{p}_\tau(k)\mathds{1}\big(p_{t(\tau)+d_t(\tau)}(k)  < 1/\delta_1 \big) \nonumber \\
	& \stackrel{(i)}{\leq}  \frac{\sum_{k=1}^K  \tilde{p}_\tau(k) \mathds{1}\big(p_{t(\tau)+d_t(\tau)}(k)  < 1/\delta_1 \big)}{(1-\delta_2 - \eta \delta_1)^{2\bar{d}}} = \sum_{k=1}^K \frac{\tilde{p}_\tau(k) }{p_{t(\tau)+d_t(\tau)}(k)} \frac{p_{t(\tau)+d_t(\tau)}(k)\mathds{1}\big(p_{t(\tau)+d_t(\tau)}(k)  < 1/\delta_1 \big)}{(1-\delta_2 - \eta \delta_1)^{2\bar{d}}} \nonumber \\
	& \stackrel{(j)}{\leq} \frac{K}{\delta_1 (1-\delta_2 - \eta \delta_1)^{4\bar{d}}}
\end{align}
where in (i) we used the a similar argument of \eqref{eq.reg.3}; and in (j) we used the fact $x \mathds{1}(x<a) \leq a$. 

Plugging \eqref{eq.reg.6} back into \eqref{eq.reg.5}, we arrive at
\begin{equation} \label{eq.reg.7}
	\sum_{\tau=1}^T \mathbb{E}_{a_{t(\tau)}} \Big[ (\tilde{\bm{p}}_\tau - \tilde{\bm{p}} )^\top \tilde{\bm{l}}_\tau \Big]  \leq \sum_{\tau=1}^T \mathbb{E}_{a_{t(\tau)}} \Big[ (\tilde{\bm{p}}_\tau - \tilde{\bm{p}} )^\top \tilde{\bm{c}}_\tau \Big]+ \frac{KT}{\delta_1 (1-\delta_2 - \eta \delta_1)^{4\bar{d}}}
\end{equation}

Applying similar arguments as \eqref{eq.reg.5} and \eqref{eq.reg.6}, we can also show that 
\begin{align} \label{eq.reg.8}
	\sum_{\tau = 1}^T \mathbb{E}_{a_{t(\tau)}} \Big[ \big( \tilde{\bm{p}}_{\tau - \tilde{s}_\tau}  - \tilde{\bm{p}}_{\tau} \big)^\top \tilde{\bm{l}}_\tau \Big]  \leq \sum_{\tau = 1}^T \mathbb{E}_{a_{t(\tau)}} \Big[ \big( \tilde{\bm{p}}_{\tau - \tilde{s}_\tau}  - \tilde{\bm{p}}_{\tau} \big)^\top \tilde{\bm{c}}_\tau \Big] + \frac{KD}{\delta_1 (1-\delta_2 - \eta \delta_1)^{6\bar{d}}}.
\end{align}

For the parameter selection, we have $T \ln (1+\delta_2) = T \ln (1+\frac{1}{T+D}) \leq \ln e = 1$.
Leveraging the inequality that $e \leq (1-2x)^{-2x}\leq 4, \forall x \in \mathds{N}^+$, we have that 
\begin{equation}\label{eq.add1}
	\frac{1}{(1 - \eta \delta_1)^{2\bar{d}}} \leq \frac{1}{(1-\delta_2 - \eta \delta_1)^{2\bar{d}}} = {\cal O} (1).
\end{equation}

From \eqref{eq.add1} it is not hard to see the bound on \eqref{eq.reg.A}, which is
\begin{align}\label{eq.reg.A1}
	\max_{k} \frac{p_{ t(\tau)+d_{t(\tau)} }(k)}{p_{t(\tau)}(k)}  \leq \max \bigg{\{}(1+\delta_2)^{2\bar{d}} , \frac{1}{(1-\eta \delta_1)^{2\bar{d}} } \bigg{\}} = {\cal O} (1).
\end{align}
Then for \eqref{eq.reg.B}, we have
\begin{equation}\label{eq.reg.B1}
	\sum_{\tau = 1}^T \mathbb{E}_{a_{t(\tau)}} \big[ \tilde{\bm{p}}_{\tau - \tilde{s}_\tau}^\top \tilde{\bm{c}}_\tau - \tilde{\bm{p}}_{\tau}^\top \tilde{\bm{c}}_\tau \big] \leq  \frac{KD}{(1-\delta_2 - \eta \delta_1)^{2\bar{d}}}\bigg( \frac{\eta}{(1-\delta_2 - \eta \delta_1)^{2\bar{d}}} +\delta_2 \bigg) = {\cal O}(\eta KD+ \delta_2 KD).
\end{equation}
For \eqref{eq.reg.4}, we have
\begin{align}\label{eq.reg.41}
	\sum_{\tau=1}^T \mathbb{E}_{a_{t(\tau)}} \big[ (\tilde{\bm{p}}_\tau - \tilde{\bm{p}} )^\top \tilde{\bm{c}}_\tau \big] \leq \frac{T \ln (1+\delta_2)+ \ln K}{\eta} + \frac{\eta KT}{2(1-\delta_2 - \eta \delta_1)^{4\bar{d}}} = {\cal O} \bigg(\eta KT  + \frac{1+ \ln K}{\eta} \bigg).
\end{align}
Using \eqref{eq.reg.41} and the selection of $\delta_1$, we can bound \eqref{eq.reg.7} by
\begin{equation} \label{eq.reg.71}
	\sum_{\tau=1}^T \mathbb{E}_{a_{t(\tau)}} \Big[ (\tilde{\bm{p}}_\tau - \tilde{\bm{p}} )^\top \tilde{\bm{l}}_\tau \Big]  \leq \sum_{\tau=1}^T \mathbb{E}_{a_{t(\tau)}} \Big[ (\tilde{\bm{p}}_\tau - \tilde{\bm{p}} )^\top \tilde{\bm{c}}_\tau \Big]+ \frac{KT}{\delta_1 (1-\delta_2 - \eta \delta_1)^{4\bar{d}}} = {\cal O}\bigg(\eta \bar{d} KT  + \frac{1+ \ln K}{\eta}  \bigg).
\end{equation}
Using \eqref{eq.reg.B1} and the selection of $\delta_1$, we have
\begin{align} \label{eq.reg.81}
	\sum_{\tau = 1}^T \mathbb{E}_{a_{t(\tau)}} \Big[ \big( \tilde{\bm{p}}_{\tau - \tilde{s}_\tau}  - \tilde{\bm{p}}_{\tau} \big)^\top \tilde{\bm{l}}_\tau \Big]  \leq \sum_{\tau = 1}^T \mathbb{E}_{a_{t(\tau)}} \Big[ \big( \tilde{\bm{p}}_{\tau - \tilde{s}_\tau}  - \tilde{\bm{p}}_{\tau} \big)^\top \tilde{\bm{c}}_\tau \Big] + \frac{KD}{\delta_1 (1-\delta_2 - \eta \delta_1)^{6\bar{d}}} = {\cal O} \big( \eta \bar{d} KD + \delta_2 KD\big).
\end{align}

Plugging \eqref{eq.reg.A1}, \eqref{eq.reg.71} , and \eqref{eq.reg.81} into \eqref{eq.reg.1}, the regret is bounded by 
\begin{equation}
		\text{Reg}_T=\sum_{t=1}^T\mathbb{E} \big[ \bm{p}_t^\top \bm{l}_t \big]- \sum_{t=1}^T\bm{p}^{*\top} \bm{l}_t = {\cal O} \big( \sqrt{ \bar{d}(T+D) K (1+\ln K)} \big).
\end{equation}

\section{Proofs for DBGD}
\subsection{Proof of Lemma \ref{lemma.Gbound} }

	Since $f_{s|t}(\cdot)$ is $L$-Lipschitz, we have $g_{s|t}(k) \leq \frac{1}{\delta}  L\| \delta \bm{e}_k \| =L$, and thus $\| \bm{g}_{s|t}\| \leq \sqrt{K}L$. On the other hand, let $\bm{\nabla}_{s|t}: = \nabla f_{s|t}(\bm{x}_{s|t})$, and $\nabla_{s|t}(k)$ being the $k$-th entry of $\bm{\nabla}_{s|t}$. Due to the $\beta$-smoothness of $f_{s|t} (\cdot)$, we have
	\begin{align}
		g_{s|t}(k) - \nabla_{s|t} (k) \leq \frac{1}{\delta} \big( \delta \bm{\nabla}_{s|t}^\top \bm{e}_k + \frac{\beta}{2} \delta^2 \big) - \nabla_{s|t} (k) = \frac{\beta \delta}{2}
	\end{align}
	suggesting that $\| \bm{g}_{s|t} - \nabla f_{s|t}(\bm{x}_{s|t}) \| \leq \frac{\beta \delta}{2}\sqrt{K}$.

\subsection{ Proof of Lemma \ref{lemma.descent} }
\textbf{Lemma \ref{lemma.descent}}  (Restate).
\textit{In virtual slots, it is guaranteed to have
	\begin{equation}
		\| \tilde{\bm{x}}_\tau - \tilde{\bm{x}}_{\tau - \tilde{s}_\tau} \| \leq \eta \tilde{s}_\tau \sqrt{K}L
	\end{equation}
	and for any $\bm{x} \in {\cal X}_\delta$, we have
	\begin{equation}
		\eta \tilde{\bm{g}}_\tau^\top \big( \tilde{\bm{x}}_\tau - \bm{x} \big) \leq \frac{\eta^2}{2} KL^2 +\frac{ \big{\|} \tilde{\bm{x}}_{\tau} - \bm{x} \big{\|}^2 - \big{\|} \tilde{\bm{x}}_{\tau+1} - \bm{x} \big{\|}^2}{2}. 
	\end{equation}}
\begin{proof}

	The proof begins with
	\begin{align}
		\| \tilde{\bm{x}}_{\tau - \tilde{s}_\tau} - \tilde{\bm{x}}_\tau \| \leq \sum_{j=0}^{\tilde{s}_\tau -1}\|\tilde{\bm{x}}_{\tau - \tilde{s}_\tau + j} - \tilde{\bm{x}}_{\tau - \tilde{s}_\tau + j +1} \| \stackrel{(a)}{\leq} \eta \tilde{s}_\tau \sqrt{K}L
	\end{align}
	where (a) uses the fact that $\|\tilde{\bm{x}}_\tau - \tilde{\bm{x}}_{\tau+1}\| = \big{\|} \tilde{\bm{x}}_\tau - \Pi_{{\cal X}_\delta} [\tilde{\bm{x}}_\tau - \eta \tilde{\bm{g}}_\tau] \big{\|} \leq \eta \|\tilde{\bm{g}}_\tau \|  $. The first inequality is thus proved
	
	Then, notice that 
	\begin{align}\label{eq.descent.1}
		\big{\|} \tilde{\bm{x}}_{\tau+1} - \bm{x} \big{\|}^2 - \big{\|} \tilde{\bm{x}}_{\tau} - \bm{x} \big{\|}^2 &= \big{\|} \Pi_{{\cal X}_\delta} [\tilde{\bm{x}}_\tau - \eta \tilde{\bm{g}}_\tau] - \bm{x} \big{\|}^2 - \big{\|} \tilde{\bm{x}}_{\tau} - \bm{x} \big{\|}^2 \nonumber \\
		& \stackrel{(b)}{\leq} \big{\|} \tilde{\bm{x}}_\tau- \bm{x} - \eta \tilde{\bm{g}}_\tau  \big{\|}^2 - \big{\|} \tilde{\bm{x}}_{\tau} - \bm{x} \big{\|}^2  = - 2  \eta \tilde{\bm{g}}_\tau^\top \big( \tilde{\bm{x}}_\tau - \bm{x} \big) + \eta^2 \big{\|} \tilde{\bm{g}}_\tau \big{\|}^2
	\end{align}
	where inequality (b) uses the non-expansion property of projection. Rearranging the terms of \eqref{eq.descent.1} completes the proof.
\end{proof}


\subsection{ Proof of Theorem \ref{theo.reg2} }

\begin{lemma}\label{lemma.hproper}
	 Let $h_t (\bm{x}):= f_t(\bm{x}) + \big( \bm{g}_t - \nabla f_t (\bm{x}_t) \big)^\top \bm{x}$, where $\bm{g}_t: = \bm{g}_{t|t+d_t}$. Then
	  $h_t (\bm{x})$ has the following properties: i) $h_t(\bm{x})$ is $\big(L+\frac{\beta\delta \sqrt{K}}{2}\big)$-Lipschitz; and ii) $h_t(\bm{x})$ is $\beta$ smooth and convex.
\end{lemma}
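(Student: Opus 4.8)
The plan is to observe that, viewed as a function of the variable $\bm{x}$, the correction term $\big(\bm{g}_t-\nabla f_t(\bm{x}_t)\big)^\top\bm{x}$ is \emph{linear}, because $\bm{c}:=\bm{g}_t-\nabla f_t(\bm{x}_t)$ is a fixed vector: both $\bm{g}_t=\bm{g}_{t|t+d_t}$ and $\nabla f_t(\bm{x}_t)$ are evaluated at the played iterate $\bm{x}_t$ and are independent of the argument $\bm{x}$ of $h_t$. Thus $h_t(\bm{x})=f_t(\bm{x})+\bm{c}^\top\bm{x}$ is simply $f_t$ perturbed by an affine map, and all three claimed properties will follow from the corresponding properties of $f_t$ (Assumptions \ref{as.3} and \ref{as.4}) combined with the magnitude bound $\|\bm{c}\|\le\frac{\beta\delta\sqrt{K}}{2}$ furnished by Lemma \ref{lemma.Gbound}.

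For the Lipschitz claim i), I would bound $|h_t(\bm{x})-h_t(\bm{y})|$ using the triangle inequality by $|f_t(\bm{x})-f_t(\bm{y})|+|\bm{c}^\top(\bm{x}-\bm{y})|$, then apply the $L$-Lipschitzness of $f_t$ to the first term and Cauchy--Schwarz together with $\|\bm{c}\|\le\frac{\beta\delta\sqrt{K}}{2}$ to the second. This yields $|h_t(\bm{x})-h_t(\bm{y})|\le\big(L+\frac{\beta\delta\sqrt{K}}{2}\big)\|\bm{x}-\bm{y}\|$, i.e. the stated modulus.

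For claim ii), since $\nabla h_t(\bm{x})=\nabla f_t(\bm{x})+\bm{c}$, the constant $\bm{c}$ cancels in any difference of gradients, so $\|\nabla h_t(\bm{x})-\nabla h_t(\bm{y})\|=\|\nabla f_t(\bm{x})-\nabla f_t(\bm{y})\|\le\beta\|\bm{x}-\bm{y}\|$, inheriting $\beta$-smoothness directly from $f_t$. Convexity is immediate because $h_t$ is the sum of the convex $f_t$ (Assumption \ref{as.3}) and the affine function $\bm{c}^\top\bm{x}$, and adding an affine map preserves convexity.

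The argument has essentially no technical obstacle; the single point that must be stated carefully is that $\bm{c}$ does not depend on $\bm{x}$, which is exactly what turns the bias correction into a linear term and lets each property be read off in one line. The reason this lemma is worth isolating, to be exploited downstream, is that $\nabla h_t(\bm{x}_t)=\nabla f_t(\bm{x}_t)+\bm{c}=\bm{g}_t$, so that running (projected) descent with the estimated gradient $\bm{g}_t$ on $f_t$ is identical to running descent with the \emph{true} gradient on the convex surrogate $h_t$, which is precisely what the virtual-slot analysis of Lemma \ref{lemma.descent} and Theorem \ref{theo.reg2} will require.
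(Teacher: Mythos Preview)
Your proposal is correct and follows essentially the same route as the paper: set $\bm{c}=\bm{g}_t-\nabla f_t(\bm{x}_t)$, use the triangle inequality plus Lemma~\ref{lemma.Gbound} for the Lipschitz constant, and observe that adding the affine term $\bm{c}^\top\bm{x}$ preserves convexity and $\beta$-smoothness. The only cosmetic difference is that the paper verifies smoothness via the quadratic upper bound $h_t(\bm{y})\le h_t(\bm{x})+\nabla h_t(\bm{x})^\top(\bm{y}-\bm{x})+\tfrac{\beta}{2}\|\bm{y}-\bm{x}\|^2$ rather than your gradient-Lipschitz formulation, but the two are equivalent here.
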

\begin{proof}
	Starting with the first property, consider that 
	\begin{align}
		\| h_t(\bm{x}) - h_t(\bm{y}) \| &= \big{\|} f_t(\bm{x}) + \big( \bm{g}_t - \nabla f_t (\bm{x}_t) \big)^\top \bm{x} - f_t(\bm{y}) - \big( \bm{g}_t - \nabla f_t (\bm{x}_t) \big)^\top \bm{y} \big{\|} \nonumber \\
		& \leq \|f_t(\bm{x}) - f_t(\bm{y}) \| + \big{\|} \bm{g}_t - \nabla f_t (\bm{x}_t)  \big{\|} \| \bm{x}-\bm{y} \| \stackrel{(a)}{\leq} \bigg(L+ \frac{\beta \delta \sqrt{K}}{2} \bigg) \|\bm{x}-\bm{y} \|
	\end{align}
	where in (a) we used the results in Lemma \ref{lemma.Gbound}.
For the second property, the convexity of $h_t(\bm{x})$ is obvious. Then noticing that $\nabla h_t(\bm{x}) = \nabla f_t(\bm{x})+ \bm{g}_t - \nabla f_t(\bm{x}_t)$, we have 
	\begin{align}
		h_t (\bm{y}) - h_t (\bm{x}) &= f_t(\bm{y}) - f_t(\bm{x}) + \big( \bm{g}_t - \nabla f_t (\bm{x}_t) \big)^\top (\bm{y} - \bm{x}) \nonumber \\
		& \leq \big(\nabla f_t(\bm{x})\big)^\top (\bm{y} - \bm{x}) + \frac{\beta}{2} \|\bm{y} - \bm{x}\|^2 + \big( \bm{g}_t - \nabla f_t (\bm{x}_t) \big)^\top (\bm{y} - \bm{x}) \nonumber \\
		& = \big( \nabla h_t(\bm{x}) \big)^\top (\bm{y} - \bm{x}) + \frac{\beta}{2} \|\bm{y} - \bm{x}\|^2
	\end{align}
which implies that $h_t(\bm{x})$ is $\beta$ smooth.
\end{proof}

Then we are ready to prove Theorem \ref{theo.reg2}. Let $h_t (\bm{x}):= f_t(\bm{x}) + \big( \bm{g}_t - \nabla f_t (\bm{x}_t) \big)^\top \bm{x}$, where $\bm{g}_t: = \bm{g}_{t|t+d_t}$. Using the property of $h_t (\bm{x})$ in Lemma \ref{lemma.hproper} as well as the fact $\nabla h_t (\bm{x}_t) = \bm{g}_t$, we have
	\begin{align}\label{eq.reg2.1}
		\text{Reg}_T &= \sum_{t=1}^T f_t(\bm{x}_t) - \sum_{t=1}^T f_t(\bm{x}^*)\nonumber\\
		& = \sum_{t=1}^T  \bigg( h_t (\bm{x}_t) - \big( \bm{g}_t - \nabla f_t (\bm{x}_t) \big)^\top \bm{x}_t \bigg) - \sum_{t=1}^T \bigg( h_t (\bm{x}^*) - \big( \bm{g}_t - \nabla f_t (\bm{x}_t) \big)^\top \bm{x}^* \bigg) \nonumber \\
		& = \sum_{t=1}^T  \bigg( h_t (\bm{x}_t)- h_t (\bm{x}^*) \bigg) + \sum_{t=1}^T \big( \bm{g}_t - \nabla f_t (\bm{x}_t) \big)^\top \big( \bm{x}^* - \bm{x}_t \big) \nonumber \\
		& \stackrel{(a)}{\leq} \sum_{t=1}^T \bigg( h_t (\bm{x}_t)- h_t (\bm{x}_\delta) \bigg) + \sum_{t=1}^T \bigg( h_t (\bm{x}_\delta)- h_t (\bm{x}) \bigg) + \frac{RT\beta \delta \sqrt{K}}{2} \nonumber \\
		& \stackrel{(b)}{\leq} \sum_{t=1}^T \bigg( h_t (\bm{x}_t)- h_t (\bm{x}_\delta) \bigg) + \delta R T \Big(L+\frac{\beta \delta\sqrt{K}}{2}\Big) + \frac{RT\beta \delta \sqrt{K}}{2} 
	\end{align}
	where in (a) $\bm{x}_\delta:= \Pi_{{\cal X}_\delta} (\bm{x}^*)$, and the inequality follows from the results in Lemma \ref{lemma.Gbound}; (b) follows from the fact that $h_t (\cdot)$ is $(L+ \frac{\beta\delta\sqrt{K}}{2})$-Lipschitz, as well as $\|\bm{x}_\delta - \bm{x} \| \leq \delta R$.

	Hence, at virtual slots, it is like learning according to $h_t(\bm{x}_t)$, with $\nabla h_t(\bm{x}_t)$ being revealed. 
	With the short-hand notation $\tilde{h}_\tau(\cdot) := h_{t(\tau)}(\cdot) $, we have (using similar arguments like the proof of Theorem \ref{theo.reg})
	\begin{align}\label{eq.reg2.4}
		\sum_{t=1}^T  h_t (\bm{x}_t)-  \sum_{t=1}^T h_t (\bm{x}_\delta) & = \sum_{\tau=1}^T h_{t(\tau)} (\bm{x}_{t(\tau)}) -\sum_{\tau=1}^T h_{t(\tau)} (\bm{x}_\delta) = \sum_{\tau=1}^T \tilde{h}_\tau (\tilde{\bm{x}}_{\tau - \tilde{s}_\tau}) -\sum_{\tau=1}^T \tilde{h}_\tau (\bm{x}_\delta) \nonumber \\
		&= \sum_{\tau=1}^T \tilde{h}_\tau (\tilde{\bm{x}}_{\tau - \tilde{s}_\tau}) -  \sum_{\tau=1}^T \tilde{h}_\tau (\tilde{\bm{x}}_\tau) + \sum_{\tau=1}^T \tilde{h}_\tau (\tilde{\bm{x}}_\tau) - \sum_{\tau=1}^T \tilde{h}_\tau (\bm{x}_\delta). 
	\end{align}	
	
The first term in the RHS of \eqref{eq.reg2.4}	can be bounded as
	\begin{align}
		\tilde{h}_\tau (\tilde{\bm{x}}_{\tau - \tilde{s}_\tau}) - \tilde{h}_\tau (\tilde{\bm{x}}_\tau) \leq  \big{\|} \tilde{h}_\tau (\tilde{\bm{x}}_{\tau - \tilde{s}_\tau}) - \tilde{h}_\tau (\tilde{\bm{x}}_\tau)\big{\|} \stackrel{(c)}{\leq} \bigg( L + \frac{\beta \delta \sqrt{K}}{2} \bigg) \big{\|} \tilde{\bm{x}}_{\tau - \tilde{s}_\tau} - \tilde{\bm{x}}_\tau \big{\|} \stackrel{(d)}{\leq} \eta \tilde{s}_\tau \sqrt{K}L\bigg( L + \frac{\beta \delta \sqrt{K}}{2} \bigg)
	\end{align}
	where (c) follows from Lemma \ref{lemma.hproper}; and (d) is the result of Lemma  \ref{lemma.descent}. Hence, using $\sum_{\tau=1}^T \tilde{s}_\tau = D$ in Lemma \ref{lemma.index}, we obtain
	\begin{equation}\label{eq.reg2.2}
		\sum_{\tau=1}^T \tilde{h}_\tau (\tilde{\bm{x}}_{\tau - \tilde{s}_\tau}) -  \sum_{\tau=1}^T \tilde{h}_\tau (\tilde{\bm{x}}_\tau) \leq \eta D \sqrt{K}L\bigg( L + \frac{\beta \delta \sqrt{K}}{2} \bigg).
	\end{equation}
	
	On the other hand, by the convexity of $\tilde{h}_\tau(\cdot)$, we have
	\begin{align}
		\tilde{h}_\tau(\tilde{\bm{x}}_\tau) - \tilde{h}_\tau(\bm{x}_\delta) &\leq \big( \nabla \tilde{h}_\tau(\tilde{\bm{x}}_\tau) \big)^\top \big( \tilde{\bm{x}}_\tau - \bm{x}_\delta \big)  = \big[\nabla \tilde{h}_\tau(\tilde{\bm{x}}_\tau) - \tilde{\bm{g}}_\tau \big]^\top \big( \tilde{\bm{x}}_\tau - \bm{x}_\delta \big) + \tilde{\bm{g}}_\tau^\top \big( \tilde{\bm{x}}_\tau - \bm{x}_\delta \big) \nonumber \\
		& \stackrel{(e)}{\leq} \beta\big{\|} \tilde{\bm{x}}_\tau - \tilde{\bm{x}}_{\tau - \tilde{s}_\tau} \big{\|} \big{\|} \tilde{\bm{x}}_\tau - \bm{x}_\delta \big{\|} + \tilde{\bm{g}}_\tau^\top \big( \tilde{\bm{x}}_\tau - \bm{x}_\delta \big) \leq \beta R \big{\|} \tilde{\bm{x}}_\tau - \tilde{\bm{x}}_{\tau - \tilde{s}_\tau} \big{\|}  + \tilde{\bm{g}}_\tau^\top \big( \tilde{\bm{x}}_\tau - \bm{x}_\delta \big)
	\end{align}
	where (e) is because $\tilde{h}_\tau (\cdot)$ is $\beta$-smoothness [cf. \cite[Thm 2.1.5]{nesterov1998}]. Taking summation over $\tau$ and leveraging the results in Lemma \ref{lemma.descent}, we have 
	\begin{align}\label{eq.reg2.3}
		\sum_{\tau=1}^T \tilde{h}_\tau(\tilde{\bm{x}}_\tau) - \tilde{h}_\tau(\bm{x}_\delta) & \leq \sum_{\tau=1}^T \eta \tilde{s}_\tau \sqrt{K}L \beta R + \sum_{\tau=1}^T \frac{\eta }{2}  \big{\|} \tilde{\bm{g}}_\tau \big{\|}^2 + \frac{R^2}{\eta} \leq  \eta D \sqrt{K}L \beta R + \frac{\eta T}{2} KL^2 +\frac{R^2}{\eta}.
	\end{align}

Selecting $\delta = {\cal O} \big(  1/(T+D) \big)$, \eqref{eq.reg2.2} implies
\begin{equation}\label{eq.reg2.21}
	\sum_{\tau=1}^T \tilde{h}_\tau (\tilde{\bm{x}}_{\tau - \tilde{s}_\tau}) -  \sum_{\tau=1}^T \tilde{h}_\tau (\tilde{\bm{x}}_\tau) \leq \eta D \sqrt{K}L\bigg( L + \frac{\beta \delta \sqrt{K}}{2} \bigg) = {\cal O} \big( \eta \sqrt{K} D \big).
\end{equation}

Inequality \eqref{eq.reg2.3} then becomes 
\begin{align}\label{eq.reg2.31}
	\sum_{\tau=1}^T \tilde{h}_\tau(\tilde{\bm{x}}_\tau) - \tilde{h}_\tau(\bm{x}_\delta)  \leq  \eta D \sqrt{K}L \beta R + \frac{\eta T}{2} KL^2 +\frac{R^2}{\eta} = {\cal O} \bigg( \eta K T+\eta \sqrt{K} D + \frac{1}{\eta} \bigg).
\end{align}
Plugging \eqref{eq.reg2.4}, \eqref{eq.reg2.2}, and \eqref{eq.reg2.3} into \eqref{eq.reg2.1}, and choosing $\eta = {\cal O}(1/\sqrt{K(T+D)})$, the proof is complete. 



\subsection{Proof of Corollary \ref{coro.reg}}

To prove Corollary \ref{coro.reg}, we will show that
	\begin{equation}\label{eq.coro.1}
	   \frac{1}{K+1}\sum_{t=1}^T\sum_{k=0}^{K} f_t(\bm{x}_{t,k}) - \sum_{t=1}^T f_t(\bm{x}_t) = {\cal O}(\sqrt{K}).
	\end{equation}

Using the $\beta$-smoothness in Assumption 4, we have for any $k \neq 0$
	\begin{align}\label{eq.coro.3}
		f_t(\bm{x}_{t,k}) - f_t(\bm{x}_t) \leq \big( \nabla f_t(\bm{x}_t) \big)^\top (\bm{x}_{t,k} - \bm{x}_t) + \frac{\beta\delta^2}{2} \leq \delta \| \nabla f_t(\bm{x}_t) \| + \frac{\beta\delta^2}{2}.
	\end{align}
	Then leveraging the result of Lemma \ref{lemma.Gbound}, we have
	\begin{align}\label{eq.coro.2}
		\| \nabla f_t(\bm{x}_t) \| &= \| \nabla f_{t|t+d_t}(\bm{x}_{t|t+d_t}) \| = \| \nabla f_{t|t+d_t}(\bm{x}_{t|t+d_t}) + \bm{g}_{t|t+d_t} - \bm{g}_{t|t+d_t} \|  \nonumber \\
		& \leq \|\bm{g}_{t|t+d_t} \| +  \| \nabla f_{t|t+d_t}(\bm{x}_{t|t+d_t}) - \bm{g}_{t|t+d_t}\| \leq \sqrt{K}L + \frac{\beta\delta\sqrt{K}}{2}.
	\end{align}
	Plugging \eqref{eq.coro.2} back to \eqref{eq.coro.3}, we have
	\begin{align}
		f_t(\bm{x}_{t,k}) - f_t(\bm{x}_t) \leq \delta \sqrt{K}L + \frac{\beta\delta^2\sqrt{K}}{2} + \frac{\beta\delta^2}{2} \stackrel{(a)}{=} {\cal O} \Big(\frac{\sqrt{K}}{T+D} \Big)
	\end{align}
	where (a) follows from $\delta= {\cal O}\big((T+D)^{-1}\big)$. Summing over $k$ and $t$  readily implies \eqref{eq.coro.1}.

\end{document}